\documentclass[runningheads]{llncs}
\usepackage[T1]{fontenc}
\usepackage{tikz}
\usepackage{eso-pic}

\usepackage[utf8]{inputenc}

\usepackage[sc]{mathpazo}

\usepackage{tikz}
\usetikzlibrary{positioning, arrows.meta}

\usepackage{amsmath,amssymb,amsfonts,mathrsfs,bm}

\usepackage{graphicx}

\usepackage{xcolor}

\usepackage{mathtools}

\usepackage{microtype}

\usepackage{mleftright}

\usepackage{xspace}

\usepackage[ruled,vlined,linesnumbered]{algorithm2e}

\makeatletter
\AddToHook{cmd/appendix/before}{\def\cref@section@alias{appendix}}
\makeatother

\usepackage{enumitem}

\newcommand{\N}{\ensuremath{\mathbb{N}}}

\newcommand{\R}{\ensuremath{\mathbb{R}}}
\newcommand{\Z}{\ensuremath{\mathbb{Z}}}

\DeclarePairedDelimiter\abs{\lvert}{\rvert}
\DeclarePairedDelimiter\norm{\lVert}{\rVert}

\DeclarePairedDelimiter{\Paren}{(}{)}
\DeclarePairedDelimiter{\Bracket}{[}{]}
\DeclarePairedDelimiter{\Brace}{\{}{\}}

\NewDocumentCommand{\Oh}{som}{%
  \ensuremath{\mathcal{O}\IfBooleanTF{#1}{%
    \Paren*{#3}
  }{%
    \IfNoValueTF{#2}{%
      \Paren{#3}
    }{%
      \Paren[#2]{#3}
    }%
  }%
}}

\NewDocumentCommand{\Om}{som}{%
\ensuremath{\Omega\IfBooleanTF{#1}{%
    \Paren*{#3}%
  }{%
    \IfNoValueTF{#2}{%
      \Paren{#3}%
    }{%
      \Paren[#2]{#3}%
    }%
  }%
}}

\NewDocumentCommand{\Th}{som}{%
\ensuremath{\Theta\IfBooleanTF{#1}{%
    \Paren*{#3}%
  }{%
    \IfNoValueTF{#2}{%
      \Paren{#3}%
    }{%
      \Paren[#2]{#3}%
    }%
  }%
}}

\NewDocumentCommand{\oh}{som}{%
\ensuremath{o\IfBooleanTF{#1}{%
    \Paren*{#3}%
  }{%
    \IfNoValueTF{#2}{%
      \Paren{#3}%
    }{%
      \Paren[#2]{#3}%
    }%
  }%
}}

\NewDocumentCommand{\om}{som}{%
\ensuremath{\omega\IfBooleanTF{#1}{%
    \Paren*{#3}%
  }{%
    \IfNoValueTF{#2}{%
      \Paren{#3}%
    }{%
      \Paren[#2]{#3}%
    }%
  }%
}}

\NewDocumentCommand{\Prob}{som}{%
\ensuremath{\mathbb{P}
  \IfBooleanTF{#1}{%
    \Bracket*{#3}%
  }{%
    \IfNoValueTF{#2}{%
      \Bracket{#3}%
    }{%
      \Bracket[#2]{#3}%
    }%
  }%
}}

\NewDocumentCommand{\E}{som}{%
\ensuremath{\mathbb{E}
  \IfBooleanTF{#1}{%
    \Bracket*{#3}%
  }{%
    \IfNoValueTF{#2}{%
      \Bracket{#3}%
    }{%
      \Bracket[#2]{#3}%
    }%
  }%
}}

\NewDocumentCommand{\Max}{s o m m}{%
\ensuremath{\max
  \IfBooleanTF{#1}{%
    \Brace*{#3,\, #4}%
  }{%
    \IfNoValueTF{#2}{%
      \Brace{#3,\, #4}%
    }{%
      \Brace[#2]{#3,\, #4}
    }%
  }%
}}

\NewDocumentCommand{\Min}{s o m m}{%
\ensuremath{\min
  \IfBooleanTF{#1}{%
    \Brace*{#3,\, #4}%
  }{%
    \IfNoValueTF{#2}{%
      \Brace{#3,\, #4}%
    }{%
      \Brace[#2]{#3,\, #4}
    }%
  }%
}}

\NewDocumentCommand{\Exp}{som}{%
\ensuremath{\exp
  \IfBooleanTF{#1}{%
    \Paren*{#3}%
  }{%
    \IfNoValueTF{#2}{%
      \Paren{#3}%
    }{%
      \Paren[#2]{#3}%
    }%
  }%
}}

\NewDocumentCommand{\Log}{som}{%
\ensuremath{\log
  \IfBooleanTF{#1}{%
    \Paren*{#3}%
  }{%
    \IfNoValueTF{#2}{%
      \Paren{#3}%
    }{%
      \Paren[#2]{#3}%
    }%
  }%
}}

\NewDocumentCommand{\Var}{som}{%
\ensuremath{\mathrm{Var}
  \IfBooleanTF{#1}{%
    \Bracket*{#3}%
  }{%
    \IfNoValueTF{#2}{%
      \Bracket{#3}%
    }{%
      \Bracket[#2]{#3}%
    }%
  }%
}}

\NewDocumentCommand{\SetBuilder}{s o m m}{%
\ensuremath{\IfBooleanTF{#1}{%
    \Brace*{\,#3 : #4\,}%
  }{%
    \IfNoValueTF{#2}{%
      \Brace{\,#3 : #4\,}%
    }{%
      \Brace[#2]{\,#3 : #4\,}%
    }%
  }%
}}

\newcommand{\onemax}{\textsc{OneMax}\xspace}

\newcommand{\eps}{\ensuremath\varepsilon}

\renewcommand{\phi}{\ensuremath{\varphi}}

\newcommand{\opoEA}{\ensuremath{(1+1)\text{-EA}}\xspace}

\newcommand{\Bin}{\ensuremath{\mathrm{Bin}}}
\newcommand{\Hyp}{\ensuremath{\mathrm{Hyp}}}
\newcommand{\Ber}{\ensuremath{\mathrm{Bernoulli}}}
\newcommand{\Poi}{\ensuremath{\mathrm{Poi}}}
\newcommand{\Unif}{\ensuremath{\mathrm{Unif}}}

\newcommand{\ind}{\ensuremath{\mathbf{1}}}

\newcommand{\totVar}[1]{\ensuremath{\norm{#1}_{\mathrm{TV}}}}

\usepackage{hyperref}
\usepackage{color}

\usepackage{cleveref}
\begin{document}
\title{The $\opoEA$ in Dynamic Environments}
%
%

\author{Georg Hasebe\orcidID{0009-0002-9643-1712} \and\\ Johannes Lengler\orcidID{0000-0003-0004-7629} \and\\
Raghu Raman Ravi\orcidID{0000-0003-3641-1824}}
\authorrunning{G. Hasebe, J. Lengler, R. Ravi}
%
\institute{Eidgen\"{o}ssische Technische Hochschule Z\"{u}rich, Switzerland \email{\{georg.hasebe,johannes.lengler,raghu.ravi\}@inf.ethz.ch}.}
\maketitle              
\begin{abstract}
We study the $\opoEA$ in dynamic linear environments, where in every generation selection is performed with respect to a freshly sampled linear function with positive weights.
We consider the Dynamic Binary Value problem, where each generation uses a uniformly random permutation of $1,2,4,\dots,2^{n-1}$, and a Uniform weight variant, where the weights are drawn independently from $\Unif(0,1)$. Both of them have recently been integrated into the IOHprofiler platform and empirically studied. 

For both models we prove a sharp threshold in the mutation parameter $\chi$ for mutation rate $\chi/n$.
Below the threshold, the expected optimisation time is $\Oh{n\log n}$, whereas above it the runtime becomes $2^{\Om{n}}$.

For the Dynamic Binary Value problem in the exponential regime, we also quantify at what distance from the optimum the optimisation process stagnates. We show that there is a second threshold: a distance that is efficiently reached, but reaching any smaller distance takes exponential time. This quantifies and proves previous empirical findings.
\keywords{Evolutionary Algorithms \and Mutation Rate \and Drift Analysis \and Dynamic Linear Functions }
\end{abstract}

\section{Introduction}
\label{sec:introduction}

Evolutionary algorithms (EAs) are widely used as general-purpose heuristics for black-box optimisation, where the objective function is accessible only through function evaluations.
A central question in both theory and practice is how algorithmic design choices like the mutation rate translate into optimisation performance.
While mutation rate effects on static benchmark functions such as \onemax or linear functions are well understood~\cite{witt2013tight}, the picture becomes more subtle in dynamic environments, i.e., settings in which the fitness function changes over time.

In order to study this question, 
Vermetten et al.~\cite{vermettenEmpiricalAnalysisDynamic2024} have recently integrated several dynamic benchmarks into the IOHprofiler framework~\cite{doerrBenchmarkingDiscreteOptimization2020} and performed large-scale benchmarking experiments on those environments. The benchmarks are a generalized version of dynamic linear functions as introduced in~\cite{lengler1+1EANoisyLinear2018}.\footnote{They were called ``noisy linear functions'' in~\cite{lengler1+1EANoisyLinear2018}.} 
At any fixed point in time, selection is based on a pseudo-Boolean linear function with positive weights. Thus in particular, the global optimum of the search space $\{0,1\}^n$ remains stable at the all-one string, there are no other local optima, and at each point in time the fitness landscape is monotone, meaning that flipping a zero-bit into a one-bit improves fitness. Selection is always comparison-based with respect to the current fitness function. Two central instances studied in~\cite{vermettenEmpiricalAnalysisDynamic2024} are the Dynamic Binary Value (DBV) benchmark introduced in~\cite{lenglerMeier2024} where the weights $2^0,2^1,\ldots,2^{n-1}$ are redistributed among the $n$ bits in each round; and the dynamic linear function where weights are redrawn randomly in each generation from the uniform distribution $\Unif(0,1)$.

In parts, the simulation results in~\cite{vermettenEmpiricalAnalysisDynamic2024} confirmed previous theoretical predictions and experiments~\cite{lenglerMeier2024,lengler1+1EANoisyLinear2018,lenglerRiedi2022}. In particular,~\cite{lengler1+1EANoisyLinear2018} had predicted for some dynamic linear functions that the performance of the $\opoEA$ should drop dramatically with the mutation rate for many dynamic linear functions, and this had been theoretically extended to DBV and experimentally confirmed in~\cite{lenglerMeier2024}. 
However, \cite{vermettenEmpiricalAnalysisDynamic2024} also raised new questions. For mutation rates above the critical rate, it was observed that the fraction of correct bits quickly reaches a plateau at which progress comes to a complete halt. 
For DBV with mutation rate $3/n$, the $\opoEA$ remains below $80\%$ correct bits even after the full budget of $1000n$ generations had been exhausted. It was noted in~\cite{vermettenEmpiricalAnalysisDynamic2024} that the distance from the optimum is surprisingly large, and the location of the plateau had not been quantified by theory before. The first novel contribution of this paper is an explicit formula for the location of this plateau. Moreover, we prove that indeed, while this plateau is reached in time $\Oh{n\log n}$ by the $\opoEA$, any further progress towards the global optimum takes exponential time.

For the other benchmark, where weights are randomly redrawn from $\Unif(0,1)$,~\cite{vermettenEmpiricalAnalysisDynamic2024} was the first paper to empirically study this distribution. As for DBV, they observed a critical mutation rate above which the optimisation time increases dramatically. Our second contribution is to pinpoint the exact critical rate and prove rigorously that the optimisation time switches from time $\Oh{n \log n}$ to exponential at this rate. 

\subsection*{Our Contribution in Detail}
We will now discuss our results in more detail. The following discussion is for the $\opoEA$ with standard bit mutation and mutation rate $\chi/n$. 

\paragraph{A mistake in previous proofs.} Lengler and Schaller~\cite{lengler1+1EANoisyLinear2018} have identified a sharp threshold $\chi_0\approx 1.59362$ for the $\opoEA$ on the set of dynamic linear functions. More precisely, for every fixed $0<\chi<\chi_0$ the $\opoEA$ optimises any dynamic linear function in $\Oh{n\log n}$ steps whereas for every fixed $\chi>\chi_0$ there are dynamic linear functions for which the optimisation time is superpolynomial. It was observed in~\cite{lenglerMeier2024} that the proof implies the same runtime results for DBV, which is why we will call the constant $\chi_0 = \chi_\mathrm{dbv}$ henceforth. However, as we will argue in Section~\ref{sec:erratum}, the proof in~\cite{lengler1+1EANoisyLinear2018} contained a mistake, and thus the results in~\cite{lengler1+1EANoisyLinear2018} and~\cite{lenglerMeier2024} only hold under the additional assumption that the optimisation is started sufficiently close to the optimum. Our proof fixes this issue and recovers the results from ~\cite{lengler1+1EANoisyLinear2018} and~\cite{lenglerMeier2024} as they were originally claimed. As we will see, the gap in the proof was substantial, and fixing it requires a non-trivial argument about convexity of the drift function. We also strengthen the original statements in several aspects: we show that the runtime in the negative cases is $2^{\Omega(n)}$ instead of just superpolynomial. Moreover, we give a unified proof for the efficient regime for a larger class of permutation-invariant dynamic environments that contains both dynamic linear functions and DBV. The main result is \Cref{thm:add:main} in~\Cref{sec:addendum}.

\paragraph{DBV: Locating the Plateau.} 
For DBV, other than previous work we can also pinpoint \emph{where} the algorithm gets stuck for large mutation rates. Once $\chi>\chi_\mathrm{dbv}$, 
we identify a constant $\alpha^*(\chi)\in(0,1/2)$ such that the $\opoEA$ reaches distance $\alpha n$ from the optimum for any $\alpha > \alpha^*(\chi)$ in time $\Oh{n\log n}$, but reaching distance $\alpha n$ for any $\alpha<\alpha^*(\chi)$ takes exponential time. Thus we derive the exact location of the plateau at which the algorithm gets stuck. Our formula for $\alpha^*(\chi)$ is explicit, Equation~\eqref{eq:dbv:alpha_star_cont} in \Cref{sec:dbv}. For instance, for $\chi=3$ the resulting bound corresponds to a stable plateau around a ratio of $1-\alpha^*(3)\approx 73\%$ correct bits. This confirms the empirical reports in \cite[Figure~3]{vermettenEmpiricalAnalysisDynamic2024} that progress of the $\opoEA$ stops at a surprisingly large distance from the optimum, even for rather moderate values of $\chi$. Those results are contained in ~\Cref{thm:dbv:main} in ~\Cref{sec:dbv}. Finally, in \Cref{sec:dbv:finite_n} we also quantify the finite-size effects by quantifying how much the location of the plateau shifts for finite $n$.

\paragraph{Uniform Weights.}
For the Uniform weight model with weights in $\Unif(0,1)$ we obtain a substantially larger critical value $\chi_\mathrm{unif}\approx 2.76531$ as the unique positive root arising in our drift analysis in Theorem~\ref{thm:unif:main}.
For every fixed $0<\chi<\chi_\mathrm{unif}$ the runtime is $\Oh{n\log n}$, whereas for every fixed $\chi>\chi_\mathrm{unif}$ it is $2^{\Om{n}}$. 
In particular, this identifies an intermediate regime $\chi_\mathrm{dbv}<\chi<\chi_\mathrm{unif}$ in which DBV already exhibits the pronounced slowdown, while the Uniform weight model is still efficiently optimisable.

\subsection*{Related Work}
\paragraph{Mutation rate on linear and monotone functions.}
In the static setting, the runtime of the $\opoEA$ on linear pseudo-Boolean
functions is well understood.
Droste, Jansen and Wegener~\cite{drosteJansenWegener2002linear} proved that
every linear function is optimised in expected $\Oh{n\log n}$ time with
mutation rate~$1/n$.
Using multiplicative drift
analysis~\cite{doerrMultiplicativeDriftAnalysis2012},
Witt~\cite{witt2013tight} sharpened this to the tight bound
$(1+\oh{1})\,e\,n\ln n$ and extended the analysis to arbitrary mutation rates
$\chi/n$, showing that every constant~$\chi$ still yields polynomial runtime.
Thus on static linear functions the mutation parameter has no qualitative
impact on the runtime order.

In other settings, the mutation rate is critical. This showed in particular in a line of work studying monotone functions, where
flipping a zero-bit into a one-bit always increases fitness.
Doerr, Jansen, Sudholt, Winzen and Zarges~\cite{doerrJSSWZ2013} proved that
for~$\chi<1$ the $\opoEA$ optimises every monotone function in $\Oh{n\log n}$
time, while for sufficiently large constant~$\chi$ some monotone functions require
exponential time.
Lengler~\cite{lengler2020dichotomy} introduced the \textsc{HotTopic}
construction as a specifically hard monotone function. He sharpened the upper threshold to $\chi > 2.13$ and extended
the dichotomy to several algorithm variants including the $(\mu+1)$-EA and
$(\mu+1)$-GA. For the latter, he found that a larger population size allows for arbitrarily large values of $\chi$ if the search is started sufficiently close to the optimum. Conversely, Lengler and Zou~\cite{lenglerZou2021} showed that without crossover, sufficiently large populations cause exponential slowdown for the $(\mu+1)$-EA 
even for arbitrarily small~$\chi$, so the effect of populations is two-sided, and our picture is still very incomplete. 
In the other direction for general monotone functions, Lengler, Martinsson and
Steger~\cite{lenglerMartinssonSteger2019} showed via an entropy compression
argument that the efficient regime extends slightly beyond~$\chi=1$. The critical value for the phase transition on monotone functions remains open.

\paragraph{Dynamic environments.}
There is a long tradition of studying evolutionary algorithms in dynamic
environments; see~\cite{branke2002evolutionary,nguyenYangBranke2012} for
surveys.
The first rigorous runtime analysis is due to
Droste~\cite{droste2002analysis,droste2003analysis}, who analysed the
$\opoEA$ on dynamic \onemax, where the target string moves over time.
The resulting runtime was polynomial if the target moved at most
$\Oh{\log n/n}$ bits per generation and superpolynomial for
$\om{\log n/n}$ bits.

This started a systematic investigation of which evolutionary algorithms can
track a moving optimum.
K\"otzing, Lissovoi and Witt~\cite{kotzing2015one} showed in an anytime
analysis that even for larger mutation rates the $\opoEA$ stays close to the
optimum of dynamic \onemax and
generalisations.
Various mechanisms have been shown to aid tracking: larger population sizes~\cite{dang2017populations} and
offspring population sizes~\cite{jansen2005theoretical}; diversity
mechanisms~\cite{oliveto2013analysis}; and island
models~\cite{LassigSudholt2010}.
Alternative algorithmic paradigms such as Ant Colony
Optimisation~\cite{kotzing2012aco,lissovoi2015runtime}, Artificial Immune
Systems~\cite{jansen2014evolutionary}, and Evolutionary
Strategies~\cite{arnold2006optimum} have also been studied in dynamic
settings.
For other base functions, tracking the optimum can be considerably more
difficult~\cite{rohlfshagen2009dynamic,jansen2005theoretical}.

\paragraph{Dynamic linear functions and Dynamic BinVal.}

The model of dynamic linear functions was introduced by Lengler and Schaller~\cite{lengler1+1EANoisyLinear2018} and extended to DBV as a ``limiting model'' in \cite{lenglerMeier2024}. 
In this model, in each generation $t$ a set of weights $W_1^{(t)},\ldots, W_n^{(t)}$ is drawn {independently and identically distributed (i.i.d.) from some distribution~$\mathcal D$ with positive support, and the fitness function used for selection in generation $t$ is then the linear function
\begin{equation*}
    f^{(t)}(x)=\sum_{i=1}^n W_i^{(t)} x_i.
\end{equation*}

As mentioned before,~\cite{lengler1+1EANoisyLinear2018} proved a sharp performance 
threshold in the mutation rate $\chi/n$ at~$\chi_{\textrm{DBV}} \approx 1.59$, modulo the gap in the proof detailed in Section~\ref{sec:erratum}. 
Subsequent work has explored extensions beyond the $\opoEA$ on specific weight
distributions.
Lengler and Riedi~\cite{lenglerRiedi2022} analysed the $(\mu+1)$-EA on
DBV, showing that moderate population sizes increase the effective
threshold near the optimum and, surprisingly, that the hardest region of
optimisation lies far from the optimum rather than close to it.
Lengler and Meier~\cite{lenglerMeier2024} complemented this with experiments
indicating that crossover substantially extends the range of efficient
mutation parameters.
Kaufmann et al.~\cite{kaufmann2025hardest}
introduced Switching Dynamic BinVal (SDBV), proved it is drift-minimising
among dynamic monotone functions for any mutation rate, and showed that the
$\opoEA$ optimises it in $\Theta(n^{3/2})$ generations.
Janett and Lengler~\cite{janett2023twodimensional} applied a
two-dimensional drift framework to \textsc{TwoLin}, a minimal dynamic
environment where only two linear functions alternate rather than a new
permutation being drawn each generation, establishing that the threshold
phenomenon persists beyond the DBV model.
On the empirical side, Vermetten~et~al.\
\cite{vermettenEmpiricalAnalysisDynamic2024} integrated dynamic linear functions and several DBV variants into the IOHprofiler framework and performed large-scale
benchmarking, observing the mutation-rate sensitivity and benchmark
divergence that motivate the present work.

\section{Setup and Terminology}
\label{sec:setup}

\paragraph{Dynamic Linear Functions.}
We consider a dynamic optimisation setting in which, in every generation $t\in\N_0$, the environment provides a random linear fitness function
\begin{equation*}
    f^{(t)}(x)\coloneqq \sum_{i=1}^n W_i^{(t)} x_i,
    \quad x=(x_1,\dots,x_n)\in\{0,1\}^n,
\end{equation*}
where $W^{(t)}=(W_1^{(t)},\dots,W_n^{(t)})\in\R_{> 0}^n$ is a random weight vector. Since the weights are positive, the bit string $(1,\dots,1)$ maximises every $f^{(t)}$.

In the model of \emph{dynamic linear functions}~\cite{lenglerMeier2024}, $(W^{(t)})_{t\ge 0}$ is an i.i.d.\ sequence of weight vectors with a fixed distribution $\mathcal{D}$ on $\R_{> 0}^n$.
The optimisation algorithm compares fitness values under the current function $f^{(t)}$. The sampling of $W^{(t)}$ (and thus of $f^{(t)}$) is part of the dynamic environment, not the algorithm.

\paragraph{Weight Models.}
In this paper we restrict attention to two concrete instances of the above model:
\begin{itemize}[leftmargin=*]
    \item \textbf{DBV:} $W^{(t)}$ is a uniformly random permutation of $(2^0,2^1,\dots,2^{n-1})$. Note that this does not fall in the category of dynamic linear functions, as the weights are not independent.
    \item \textbf{Uniform:} $W_1^{(t)},\dots,W_n^{(t)}$ are i.i.d.\ $\Unif(0,1)$.
\end{itemize}

\paragraph{The $\opoEA$.}
We study the $\opoEA$ with standard bit mutation and mutation rate $\chi/n$ where selection is based on the dynamic fitness environment $(f^{(t)})_{t\ge 0}$ as shown in \Cref{alg:pre:opoea}.

\begin{algorithm}[h!]
choose $x^{(0)}\in\{0,1\}^n$ u.a.r.\;
\For{$t=0,1,\dots$}
{
create $y^{(t)}\in\{0,1\}^n$ by flipping each bit of $x^{(t)}$ independently with probability $\chi/n$\;
\uIf{$f^{(t)}(y^{(t)})\ge f^{(t)}(x^{(t)})$} {
    $x^{(t + 1)}\leftarrow y^{(t)}$\;
} \Else {
    $x^{(t + 1)}\leftarrow x^{(t)}$\;
}
}
\caption{The $\opoEA$ with mutation rate $\chi/n$ maximising the dynamic environment $(f^{(t)})_{t\ge 0}$.}
\label{alg:pre:opoea}
\end{algorithm}

\paragraph{Runtime and Drift Analysis.}
Since all weights are non-negative, the global optimum is $(1,\dots,1)$.
We define the runtime as
\begin{equation*}
    T \coloneqq \min\SetBuilder{t\ge 0}{x^{(t)}=(1,\dots,1)}.
\end{equation*}
Let $Y_t$ denote the number of zero-bits in $x^{(t)}$.
Then $T=\min\SetBuilder{t\ge 0}{Y_t=0}$.
Moreover, by symmetry of standard bit mutation and the i.i.d.\ resampling of $(W^{(t)})_{t\ge 0}$, the conditional distribution of $Y_{t+1}\mid Y_t=y$ depends only on $y$, hence $(Y_t)_{t\ge 0}$ is a time-homogeneous Markov chain.

Our main tool is drift analysis for the Markov chain $(Y_t)_{t\ge 0}$.
For $y\in\{0,1,\dots,n\}$ we define the drift at state $y$ as
\begin{equation}
\label{eq:pre:standard_drift}
    \Delta(y)\coloneqq \E{Y_t-Y_{t+1}\mid Y_t=y}.
\end{equation}

\paragraph{Notation.}
We write $\N_0\coloneqq\{0,1,2,\dots\}$ and $[n]\coloneqq\{1,\dots,n\}$.
Throughout the paper, we consider the $\opoEA$ with mutation parameter $\chi>0$ and mutation rate $\chi/n$, and the search point in generation $t$ is denoted by $x^{(t)}\in\{0,1\}^n$. We denote by $Y_t$ its number of zero-bits.
For $Y_t =y\in\{0,1,\dots,n\}$ we set $\alpha\coloneqq y/n\in[0,1]$ as the fraction of zero-bits. We write $X\overset{d}{=}Y$ to denote equality in distribution.

We say that a sequence of events $(A_n)_{n\ge 1}$ holds asymptotically almost surely (a.a.s.) if $\Prob{A_n}=1-\oh{1}$ as $n\to\infty$.
Unless stated otherwise, all asymptotic notation (e.g.\ $\Oh{\cdot}$, $\Om{\cdot}$, $\oh{\cdot}$) refers to the limit $n\to\infty$.
Hidden constants in $\Oh{\cdot}$ etc.\ may depend on $\chi$ and on the weight model, but never on $n$.
When we use asymptotic notation in $\alpha$ (e.g.\ $\Oh{\alpha^2}$), it refers to the limit $\alpha\to 0^+$ for fixed~$\chi$.

\section{Preliminaries}
\label{sec:preliminaries}

In this section we will collect some preparatory observations about the $\opoEA$ that hold in any dynamic environment where the fitness function in each generation is a linear function with positive weights.

\subsection{Conditioning on the Mutation Outcome}

Let $K$ denote the number of flipped bits in one mutation step.
Since each bit flips independently with probability $\chi/n$, we have
\begin{equation*}
    K\sim \Bin(n,\chi/n).
\end{equation*}
Conditional on $Y_t=y$ and $K=k$, the $k$ flipped positions form a uniformly random $k$-subset of $[n]$.
Let $I$ denote the number of flipped zero-bits (i.e.\ $0\to 1$ flips).
Then
\begin{equation*}
    I \mid (Y_t=y,\,K=k)\sim \Hyp(n,y,k),
\end{equation*}
the hypergeometric distribution counting the number of successes when drawing $k$ elements without replacement from a population of size $n$ containing $y$ successes.

Finally, let $A$ be the event that the offspring is accepted in the selection step, and write $\ind_A$ for its indicator.

\begin{proposition}
\label{prop:pre:explicit_drift}
For fixed $y\in\{0,1,\dots,n\}$ and random variables $K,I,\ind_A$ as above we have
\begin{equation*}
    \Delta(y)=\E{(2I-K)\ind_A\mid Y_t=y}.
\end{equation*}
\end{proposition}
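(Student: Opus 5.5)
The plan is to express the one-step change $Y_t - Y_{t+1}$ pathwise in terms of $K$, $I$ and the acceptance indicator $\ind_A$, and then take conditional expectations given $Y_t = y$.

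Condition on $Y_t = y$ and on one full realisation of the mutation and of the weight vector $W^{(t)}$. The offspring $y^{(t)}$ arises from $x^{(t)}$ by flipping $K$ positions. Among these, $I$ are zero-bits, which become one-bits, and $K-I$ are one-bits, which become zero-bits. So the number of zero-bits in the offspring is
\begin{equation*}
    y - I + (K - I) = y - (2I - K).
\end{equation*}
If the offspring is accepted (event $A$), then $Y_{t+1} = y - (2I-K)$. Otherwise $x^{(t+1)} = x^{(t)}$ and $Y_{t+1} = y$. Both cases are captured by the pointwise identity
\begin{equation*}
    Y_t - Y_{t+1} = (2I - K)\,\ind_A .
\end{equation*}
It holds on every sample point, including $K = 0$, where both sides vanish regardless of whether $A$ occurs.

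Next I take conditional expectations given $Y_t = y$ and use the definition \eqref{eq:pre:standard_drift} of $\Delta(y)$, which gives the claim directly. The expectation is well defined because $\abs{2I-K} \le K \le n$ is bounded. The identity above involves no distributional assumption on the weights, so it holds in any dynamic linear environment. The Markov property and time-homogeneity of $(Y_t)$ noted in \Cref{sec:setup} only guarantee that the left-hand side depends on $y$ alone and not on $t$ or on the earlier history.

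The argument is essentially bookkeeping and has no real obstacle. The one point needing care is the case of no improvement. In the $\opoEA$ ties are accepted, so $A$ may occur even when the offspring equals the parent (for instance when $K=0$), but then $2I-K=0$ and the identity is unaffected. For $K>0$ with positive weights, $f^{(t)}(y^{(t)}) = f^{(t)}(x^{(t)})$ forces the flipped weights to balance, and this too is covered by the pathwise identity.
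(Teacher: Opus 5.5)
Your proposal is correct and follows the paper's proof: you derive the pathwise identity $Y_t - Y_{t+1} = (2I-K)\ind_A$ from the bit-flip bookkeeping and then take conditional expectations given $Y_t=y$. Your remarks on the $K=0$ case, on ties, and on boundedness are harmless extra care, consistent with the paper.
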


\begin{proof}
If the mutation flips $k$ bits of which $i$ are zero-bits and the offspring is accepted, then the number of zero-bits decreases by $i$ and increases by $(k-i)$, hence
\begin{equation*}
    Y_{t+1}=y-i+(k-i)=y+k-2i,
\end{equation*}
so $Y_t-Y_{t+1}=2i-k$ on $A$ and $0$ otherwise.
Thus $Y_t-Y_{t+1}=(2I-K)\ind_A$.
Taking the conditional expectation given $Y_t=y$ yields the claim.
\end{proof}

\paragraph{Acceptance Probability.}
In each generation, parent and offspring are compared with respect to a linear fitness function with positive weights.
Consequently, the weights of all unchanged bits cancel in the comparison.
In particular, conditional on $(K=k,\,I=i)$, the acceptance probability depends only on $(k,i)$ and on the weight model, but not on~$y$.
We therefore define the acceptance probability
\begin{equation}
\label{eq:pre:acceptance_probability}
    p_A(k,i)\coloneqq \Prob{A\mid (K=k,\,I=i)}
\end{equation}
for $k>0$ and $0\le i\le k$. For convenience, we deviate from this definition for $k=i=0$ and set $p_A(0,0)\coloneqq 0$. This corresponds to the case the parent and offspring are identical, in which case the drift is trivially zero.

\paragraph{Drift at the Optimum.}
Since weights are positive, if $i=0$ then only $1\to 0$ flips occur and the fitness cannot increase, so $p_A(k,0)=0$ for all $k\in\N_0$ (including, by our convention, $k=0$).
In particular, if $Y_t=0$ then $I=0$, hence $\Delta(0)=0$.

Conditioning \Cref{prop:pre:explicit_drift} on $(K,I)$ yields the explicit representation
\begin{equation}
\label{eq:pre:conditioned_explicit_drift}
    \Delta(y)
    = \sum_{k=0}^{n}\Prob{K=k}\sum_{i=0}^{\Min{k}{y}}(2i-k)\,\Prob{I=i\mid (Y_t=y,\,K=k)}\,p_A(k,i).
\end{equation}

\subsection{Approximating the Drift}

For $y\in\{0,1,\dots,n\}$ we write
\begin{equation*}
    \alpha\coloneqq \frac{y}{n}\in[0,1]
\end{equation*}
for the fraction of zero-bits.
In many arguments we work with a function approximating the drift that is continuous in $\alpha$, which allows us to use standard calculus tools. For all $\chi>0$ and $\alpha\in[0,1]$ we define
\begin{equation}
\label{eq:pre:approximate_drift}
    D(\chi,\alpha)
    \coloneqq \sum_{k=0}^{\infty} e^{-\chi}\frac{\chi^k}{k!}\sum_{i=0}^{k} (2i-k)\binom{k}{i}\alpha^i(1-\alpha)^{k-i}\,p_A(k,i).
\end{equation}
For fixed $\chi>0$ we write $D_\chi(\alpha)\coloneqq D(\chi,\alpha)$.
Equivalently, if $K\sim\Poi(\chi)$ and $I\mid (K=k)\sim\Bin(k,\alpha)$, then
\begin{equation*}
    D(\chi,\alpha)=\E{(2I-K)\ind_A}.
\end{equation*}
Note that $D_\chi(0)=0$ as well, since $\alpha=0$ implies $I=0$ almost surely and $p_A(k,0)=0$ for all $k\in\N_0$.

\paragraph{Interpretation.}
The exact mutation size satisfies $K\sim\Bin(n,\chi/n)$, i.e.\ it is a sum of $n$ independent bit-flips.
For fixed $\chi$ and large $n$, $\Bin(n,\chi/n)$ is well-approximated by $\Poi(\chi)$ (e.g. \cite[Theorem~4.6]{rossFundamentalsSteinsMethod2011}).
Moreover, conditional on $K=k$, the exact number of flipped zero-bits is hypergeometric (sampling without replacement), which can be approximated by the binomial law $\Bin(k,\alpha)$ with $\alpha = y/n$ (sampling with replacement) when $k$ is of constant order and $n$ is large.

The next proposition makes the approximation error explicit.
It will be used to transfer results from the continuous approximation $D_\chi(\alpha)$ to the drift $\Delta(y)$.

\begin{proposition}
\label{prop:pre:drift_convergence}
Fix $\chi\in(0,\infty)$ and let $y(\alpha)\coloneqq\lfloor \alpha n\rfloor$.
There exists a constant $c(\chi)>0$, depending only on $\chi$, such that uniformly for all $\alpha\in[0,1]$,
\begin{equation*}
    \abs{\Delta(y(\alpha)) - D_\chi(\alpha)} \le \frac{c(\chi)}{n}.
\end{equation*}
In particular, for every fixed $\alpha\in[0,1]$ we have $\Delta(\lfloor \alpha n\rfloor) \to D_\chi(\alpha)$ as $n\to\infty$.
\end{proposition}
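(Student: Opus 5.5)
We write both quantities as the expectation of the same bounded test function of $(K,I)$ under two different joint laws, and then bound the distance between the laws in total variation. Set $g(k,i)\coloneqq (2i-k)\,p_A(k,i)$, so that $\abs{g(k,i)}\le k$ and $g$ depends only on $(k,i)$ (the acceptance probability does not depend on $y$). By \eqref{eq:pre:conditioned_explicit_drift}, $\Delta(y)=\E{g(K,I)}$ with $K\sim\Bin(n,\chi/n)$ and $I\mid K=k\sim\Hyp(n,y,k)$. By definition, $D_\chi(\alpha)=\E{g(K',I')}$ with $K'\sim\Poi(\chi)$ and $I'\mid K'=k\sim\Bin(k,\alpha)$.

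Because $g$ is unbounded, we first truncate at a level $k_0$. The key step is to use the bound $\abs{g}\le k$ on both tails. Both $\Bin(n,\chi/n)$ and $\Poi(\chi)$ have factorial-type tails: $\Prob{K=k}\le\binom{n}{k}(\chi/n)^k\le \chi^k/k!$. Hence $\sum_{k>k_0}k\,\Prob{K=k}\le\sum_{k>k_0}k\chi^k/k!$, and the same bound holds for $K'$. With $k_0=\lceil\log n\rceil$ this tail is $\Oh{1/n}$, since $\chi^k/k!$ eventually decays faster than $e^{-2k}$. Alternatively, we simply keep all $k$ and weight the per-$k$ errors by $k$, which is what we do below.

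The comparison then splits into two terms.
\begin{enumerate}
\item \emph{Poisson versus binomial.} By \cite[Theorem~4.6]{rossFundamentalsSteinsMethod2011}, or by a direct Le Cam coupling, $\totVar{\Bin(n,\chi/n)-\Poi(\chi)}\le \chi^2/n$. We need this weighted by $k$. We obtain it by coupling $K=\sum_j B_j$ and $K'=\sum_j P_j$ with each pair $(B_j,P_j)$ maximally coupled. Then $\E{\abs{K-K'}}\le\sum_j\E{\abs{B_j-P_j}}$. Each summand is $\Oh{(\chi/n)^2}$, because $B_j\ne P_j$ essentially only when $P_j\ge 2$ or on an event of probability $\Oh{(\chi/n)^2}$, and the conditional size of $P_j$ there is $\Oh{1}$. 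This gives $\Oh{\chi^2/n}$ in total.

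We use this to control $\abs{\E{h(K)}-\E{h(K')}}$, where $h(k)\coloneqq\E{g(k,I')\mid K'=k}$ uses the binomial inner law. Since $h$ is not Lipschitz, we instead bound the difference by $\sum_k k\,\abs{\Prob{K=k}-\Prob{K'=k}}$. Computing the pointwise ratio directly,
\[
\Prob{K=k}/\Prob{K'=k}=e^{\chi}(1-\chi/n)^{n-k}\prod_{j<k}(1-j/n)=1+\Oh{(k^2+1)/n}
\]
for $k\le\sqrt n$. Summing $k(k^2+1)\chi^k/k!$ gives $\Oh{1/n}$, and the tail $k>\sqrt n$ is negligible by the factorial bound.

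\item \emph{Hypergeometric versus binomial.} For fixed $k$ and $y=\lfloor\alpha n\rfloor$, $\totVar{\Hyp(n,y,k)-\Bin(k,y/n)}\le (k-1)k/(2n)$, by the standard birthday bound obtained from coupling sampling without and with replacement. In addition, $\totVar{\Bin(k,y/n)-\Bin(k,\alpha)}\le k\abs{\alpha-y/n}\le k/n$. Since $\abs{g}\le k$, the conditional expectations differ by at most $k\cdot\Oh{k^2/n}$. Weighting by $\Prob{K=k}\le\chi^k/k!$ and summing gives $\Oh{1/n}$, uniformly in $\alpha$ because none of these bounds depend on $y$.
\end{enumerate}
Adding the two errors yields $\abs{\Delta(y(\alpha))-D_\chi(\alpha)}\le c(\chi)/n$ with $c(\chi)$ a convergent series in $\chi$. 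The pointwise convergence statement follows immediately.

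The main obstacle is that $g$ grows linearly in $k$, so plain total-variation bounds are not sufficient. Every error term must be tracked with polynomial weights in $k$ and summed against the factorial tail $\chi^k/k!$. The pointwise ratio computation in the Poisson–binomial step is where this needs care, and we handle it by splitting at $k\le\sqrt n$ versus $k>\sqrt n$.
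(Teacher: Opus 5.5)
Your proof is correct, and its overall decomposition matches the paper's. Both arguments pass through the intermediate drift with $K\sim\Bin(n,\chi/n)$ but $I\mid K=k\sim\Bin(k,\alpha)$. Your hypergeometric-versus-binomial step is essentially the paper's: the birthday coupling plus the $k\abs{y/n-\alpha}$ shift, multiplied by $\abs{g}\le k$ and averaged over $K$. The only difference there is that the paper averages $(k^3+k^2)/n$ via factorial moments of $K$, whereas you use $\Prob{K=k}\le\chi^k/k!$.

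The genuine difference is in the Poisson-versus-binomial step.

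\emph{The paper's route.} It handles the linear growth of $g$ by size-biasing. It writes $\E{K g_\alpha(K)}=\chi\,\E{g_\alpha(K^*)}$, where $g_\alpha=h/k$ is bounded by $1$. This reduces everything to the total variation distance between $1+\Bin(n-1,\chi/n)$ and $1+\Poi(\chi)$. That distance is at most $(\chi+\chi^2)/n$, by a one-coin coupling and the Stein-method Poisson bound.

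\emph{Your route.} You bound $\sum_k k\abs{\Prob{K=k}-\Prob{K'=k}}$ directly. You use the likelihood ratio $e^{\chi}(1-\chi/n)^{n-k}\prod_{j<k}(1-j/n)=1+\Oh{(k^2+1)/n}$ for $k\le\sqrt n$, and a factorial tail beyond that.

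\emph{What each buys.} The paper gets the explicit constant $c(\chi)=3\chi^3+6\chi^2+2\chi$, valid for every $n\ge\chi$, with no case split. Yours is fully elementary, needing neither the size-bias identity nor the Stein bound, but it only gives an implicit constant. Also, your ratio expansion needs $\chi/n$ bounded away from $1$ (say $n\ge 2\chi$). You should say explicitly that the finitely many smaller $n$ are absorbed into $c(\chi)$ via the trivial bounds $\abs{\Delta(y)}\le\chi$ and $\abs{D_\chi(\alpha)}\le\chi$.

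Finally, two pieces of your write-up are never used in the argument you actually carry out and can be dropped: the maximal-coupling estimate $\E{\abs{K-K'}}=\Oh{\chi^2/n}$, and the truncation at $\lceil\log n\rceil$.
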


\begin{proof}
A detailed proof is given in \Cref{app:prop:pre:drift_convergence}.
\end{proof}

\subsection{Local Behaviour at the Optimum}

Close to the optimum we have $\alpha\approx 0$, i.e.\ $Y_t=\lfloor \alpha n\rfloor$ is small compared to~$n$.
In this regime, the sign of the drift is governed by the first-order term of $\alpha\mapsto D_\chi(\alpha)$ at $\alpha=0$.
Since $\alpha=0$ is a boundary point, we work with the right derivative $\partial_+D_\chi(0)$.

\begin{proposition}
\label{prop:pre:right_derivative}
Fix $\chi\in(0,\infty)$.
Independently of the weight model, the right-derivative of $D_\chi(\alpha)$ at $\alpha=0$ exists and satisfies
\begin{equation*}
    \partial_+ D_\chi(0)
    = \sum_{k = 1}^\infty e^{-\chi}\frac{\chi^k}{k!}\,k(2-k)\,p_A(k,1).
\end{equation*}
\end{proposition}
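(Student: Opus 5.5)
Since $D_\chi(0)=0$, I will show that $D_\chi(\alpha)/\alpha$ converges to the stated series as $\alpha\to 0^+$. Write $D_\chi(\alpha)=\sum_{k\ge 0} e^{-\chi}\frac{\chi^k}{k!}g_k(\alpha)$ with
\begin{equation*}
g_k(\alpha)\coloneqq\sum_{i=0}^{k}(2i-k)\binom{k}{i}\alpha^i(1-\alpha)^{k-i}p_A(k,i).
\end{equation*}
Each $g_k$ is a polynomial in $\alpha$. Because $p_A(k,0)=0$ for all $k$, including $k=0$ by convention, the $i=0$ term vanishes, and $g_0\equiv 0$. So $g_k(0)=0$. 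Differentiating at $0$, only the $i=1$ term contributes to first order, since all terms with $i\ge 2$ carry a factor $\alpha^2$. This gives
\begin{equation*}
g_k'(0)=(2-k)\,k\,p_A(k,1).
\end{equation*}

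The only real work is exchanging the limit $\alpha\to 0^+$ with the infinite sum over $k$, which I will justify by dominated convergence applied to the difference quotients $g_k(\alpha)/\alpha$. Using $p_A\in[0,1]$, $|2i-k|\le k$, and again that the $i=0$ term vanishes,
\begin{equation*}
\abs{g_k(\alpha)}\le k\sum_{i=1}^{k}\binom{k}{i}\alpha^i(1-\alpha)^{k-i}=k\bigl(1-(1-\alpha)^k\bigr)\le k^2\alpha ,
\end{equation*}
where the last step is Bernoulli's inequality. Hence $\abs{g_k(\alpha)/\alpha}\le k^2$ uniformly for $\alpha\in(0,1]$. Moreover $\sum_k e^{-\chi}\frac{\chi^k}{k!}k^2=\chi+\chi^2<\infty$, the second moment of $\Poi(\chi)$.

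By dominated convergence (for series),
\begin{equation*}
\lim_{\alpha\to 0^+}\frac{D_\chi(\alpha)-D_\chi(0)}{\alpha}=\sum_k e^{-\chi}\frac{\chi^k}{k!}\,g_k'(0).
\end{equation*}
This is exactly the claimed formula; the $k=0$ term vanishes, so the sum starts at $k=1$.

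No property of the weight model is used beyond $p_A(k,0)=0$ and $0\le p_A\le 1$, which is what makes the statement independent of the weight model. The main obstacle is the interchange of limit and sum, and it is handled by the uniform bound $k^2$ above. The per-$k$ derivative computation is routine.
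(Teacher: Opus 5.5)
Your proof is correct and follows essentially the same route as the paper. Both arguments decompose by $k$, observe that the $i=0$ term vanishes and that only the $i=1$ term contributes at first order, and control the $k$-dependence uniformly in order to exchange the limit $\alpha\to 0^+$ with the Poisson sum. The difference lies only in how that exchange is justified. You apply dominated convergence with the bound $\abs{g_k(\alpha)/\alpha}\le k^2$, which needs only the second Poisson moment. The paper instead bounds the remainder of the first-order expansion explicitly by $\frac32 k^3\alpha^2$. That uses the third moment, but it yields the slightly stronger quantitative expansion $D_\chi(\alpha)=\alpha\,\partial_+D_\chi(0)+\Oh{\alpha^2}$.
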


\begin{proof}
See \Cref{app:prop:pre:right_derivative}.
\end{proof}

\paragraph{Irwin--Hall distribution.}
For the Uniform weight model, acceptance probabilities can be expressed in terms of sums of i.i.d.\ uniform random variables.
Let $U_1,\dots,U_k$ be i.i.d.\ $\Unif(0,1)$ and set $S_k\coloneqq \sum_{j=1}^k U_j$.
The distribution of $S_k$ is called the Irwin--Hall distribution.
Its cumulative distribution function (cdf) on $[0,k]$ is given by
\begin{equation*}
    F_k(x)\coloneqq \Prob{S_k\le x}
    = \frac{1}{k!}\sum_{j=0}^{\lfloor x\rfloor}(-1)^j\binom{k}{j}(x-j)^k,
\end{equation*}
Moreover, $S_k$ is symmetric around $k/2$, i.e.\ $S_k\overset{d}{=}k-S_k$, since $1-U_j\sim\Unif(0,1)$ and
\begin{equation*}
    k-S_k=\sum_{j=1}^k (1-U_j)\ \overset{d}{=}\ \sum_{j=1}^k U_j = S_k.
\end{equation*}
Consequently, $F_k(x)=1-F_k(k-x)$ for all $x\in[0,k]$.

\subsection{Drift Theorems}

We use the following drift theorems (stated in our notation and in the direction matching our drift convention).
The multiplicative drift theorem \cite[Theorem~2.4.5]{lenglerDriftAnalysis2020} is used for the $\Oh{n\log n}$ regime, whereas the Simplified Drift Theorem \cite[Theorem~2]{oliveto2012erratumsimplifieddriftanalysis} is used to derive exponential lower bounds once we establish uniform negative drift on a constant interval.

\begin{theorem}[Multiplicative Drift, Upper Tail Bound \cite{lenglerDriftAnalysis2020}]
\label{thm:pre:mult_drift_upper_tail}
Let $(Y_t)_{t\ge 0}$ be a sequence of non-negative random variables with finite state space $S\subseteq \R_{\ge 0}$ such that $0\in S$.
Let $s_{\min}\coloneqq \min(S\setminus\{0\})$, and let $T \coloneqq \min\SetBuilder{t\ge 0}{Y_t = 0}$.
Suppose that $Y_0=s_0$ and that there exists $\delta>0$ such that, for all $s\in S\setminus\{0\}$ and all $t\ge 0$,
\begin{equation*}
    \E{Y_t - Y_{t+1}\mid Y_t=s}\ge \delta s.
\end{equation*}
Then for all $r\ge 0$,
\begin{equation*}
    \Prob*{T>\left\lceil \frac{r+\ln(s_0/s_{\min})}{\delta}\right\rceil}\le e^{-r}.
\end{equation*}
\end{theorem}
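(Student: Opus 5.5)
The plan is the standard potential-function argument for multiplicative drift. Put $g(s)\coloneqq \ln(s/s_{\min})+1$ for $s\in S\setminus\{0\}$ and $g(0)\coloneqq 0$, or alternatively work with $Y_t$ directly and iterate the expectation. The most direct route is the second one. From the hypothesis, $\E{Y_{t+1}\mid Y_t=s}\le (1-\delta)s$ for every $s\neq 0$. The same inequality holds trivially for $s=0$, because once $Y_t=0$ we have $Y_{t+1}\ge 0$ and the event $T>t$ has already failed. To handle this cleanly, I would work with the stopped process $\tilde Y_t\coloneqq Y_{\min(t,T)}$. It satisfies $\E{\tilde Y_{t+1}\mid \tilde Y_0,\dots,\tilde Y_t}\le (1-\delta)\tilde Y_t$ on $\{T>t\}$ and equals $0$ on $\{T\le t\}$. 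So in all cases $\E{\tilde Y_{t+1}}\le (1-\delta)\E{\tilde Y_t}$. Since the hypothesis is stated for all $t$ conditional only on $Y_t=s$, I would either read it as holding conditional on the full history, which is the intended reading in the drift literature, or note that it suffices in the Markov setting where the paper applies it.

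Iterating gives $\E{\tilde Y_t}\le (1-\delta)^t s_0\le e^{-\delta t}s_0$. Next I use that the state space is finite and $s_{\min}$ is the smallest positive value. On the event $\{T>t\}$ we have $\tilde Y_t=Y_t\ge s_{\min}$, so Markov's inequality gives
\begin{equation*}
\Prob{T>t}\le \Prob{\tilde Y_t\ge s_{\min}}\le \frac{\E{\tilde Y_t}}{s_{\min}}\le \frac{s_0}{s_{\min}}\,e^{-\delta t}.
\end{equation*}
Finally I plug in $t=\lceil (r+\ln(s_0/s_{\min}))/\delta\rceil$. Then $\delta t\ge r+\ln(s_0/s_{\min})$, so the right-hand side is at most $e^{-r}$, which is the claim.

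No step is a real obstacle; the only point needing care is the conditioning. Iterating expectations requires the drift bound conditional on the past, not merely on the current value. That is why I pass to the stopped process and apply the tower property. There is also a degenerate case: if $\delta\ge 1$, then the drift condition forces $\E{Y_{t+1}\mid Y_t=s}\le 0$, hence $Y_{t+1}=0$. The bound $(1-\delta)^t\le e^{-\delta t}$ is still fine when $\delta\le 1$, and for $\delta>1$ the hypothesis is impossible unless $S=\{0\}$, in which case the statement is trivial.
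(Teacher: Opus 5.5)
The paper does not prove this theorem; it imports it from Lengler's survey. Your argument is essentially the standard proof of that result: iterate $\mathbb{E}[Y_t]\le(1-\delta)^t s_0$, then apply Markov's inequality to the event $\{Y_t\ge s_{\min}\}$. Your proposal is therefore correct.

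Passing to the stopped process $Y_{\min(t,T)}$, with drift conditioned on the full history, is the right way to make it rigorous. It also makes your earlier remark that the bound holds ``trivially'' at $s=0$ unnecessary, which is fortunate, because that remark is false when $0$ is not absorbing. This care matters for the paper's own use of the theorem on $Z_t=\max\{Y_t-y_0,0\}$, which can leave $0$ and is not Markov in its own right.
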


\smallskip

\begin{theorem}[Simplified Drift Theorem \cite{oliveto2012erratumsimplifieddriftanalysis}]
\label{thm:pre:simplified_drift}
Let $(Y_t)_{t\ge 0}$ be a Markov process over a finite state space $S\subseteq \R_{\ge 0}$.
Suppose there exist an interval $[a,b]$ in the state space, two constants $\delta,\eps>0$ and, possibly depending on $\ell\coloneqq b-a$, a function $r(\ell)$ satisfying $1\le r(\ell)=\oh{\ell/\log(\ell)}$ such that for all $t\ge 0$ the following two conditions hold:
\begin{enumerate}[leftmargin=*,label=(\arabic*)]
    \item $\E{Y_{t+1}-Y_t \mid Y_t=y}\ge \eps$ for all $a<y<b$,
    \item $\Prob{\abs{Y_{t+1}-Y_t} \ge j \mid Y_t=y}\le \frac{r(\ell)}{(1+\delta)^j}$ for $y > a$ and $j\in\N_0$.
\end{enumerate}
Then there is a constant $c^*>0$ such that for $T^*\coloneqq \min\SetBuilder{t\ge 0}{Y_t\le a}$ it holds $\Prob{T^* \le 2^{c^*\ell/r(\ell)}\mid Y_0\ge b} = 2^{-\Om{\ell/r(\ell)}}$. 
\end{theorem}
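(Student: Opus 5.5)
The statement just above is the Simplified Drift Theorem (\Cref{thm:pre:simplified_drift}). It is quoted from~\cite{oliveto2012erratumsimplifieddriftanalysis}, so the plan is to reduce it to that reference rather than to rederive it. In our setting the direction is reversed: the process drifts away from $a$ (upwards, i.e.\ away from the optimum). We therefore apply the theorem of Oliveto and Witt to the mirrored process $Z_t \coloneqq b - Y_t$, or equivalently read their statement with the roles of the endpoints exchanged. Under this map the interval $[a,b]$ becomes $[0,\ell]$ and condition~(1) becomes $\E{Z_t - Z_{t+1}\mid Z_t=z}\ge\eps$, which is their drift condition towards the starting side. Condition~(2) is invariant because $\abs{Z_{t+1}-Z_t}=\abs{Y_{t+1}-Y_t}$. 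The hitting time $T^*$ of $\{Y\le a\}$ becomes the hitting time of $\{Z\ge \ell\}$, and the start $Y_0\ge b$ becomes $Z_0\le 0$. Finiteness of the state space and the Markov property carry over trivially.

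If a self-contained argument is wanted instead, I would follow the standard Hajek-type route. First, use condition~(2) to bound the moment generating function of the one-step decrement, $\E{e^{\lambda (Y_t-Y_{t+1})}\mid Y_t=y}$. Expand $e^{x}\le 1+x+x^2e^{\abs{x}}/2$, so the second-order term is at most $\lambda^2\sum_j j^2 e^{\lambda j} r(\ell)(1+\delta)^{-j}$. This is $\Oh{\lambda^2 r(\ell)}$ once $\lambda<\ln(1+\delta)/2$. Choosing $\lambda=c\,\eps/r(\ell)$ with a small constant $c$ then makes the linear term $-\lambda\eps$ dominate, so $\E{e^{-\lambda(Y_{t+1}-Y_t)}\mid Y_t=y}\le 1-\lambda\eps/2$ for $a<y<b$. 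In other words, $e^{-\lambda Y_t}$ is a supermartingale with a strict contraction factor inside the interval.

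Second, apply Hajek's theorem, or directly a union bound over times $t\le T$, to the potential $e^{-\lambda(Y_t-a)}$. Each step started above $b$, or within the interval, has probability at most $e^{-\lambda\ell}\cdot\Oh{1}$ of ending at or below $a$ before the process returns above $b$. Jumps from above $b$ straight past $a$ are controlled by condition~(2), which gives probability at most $r(\ell)(1+\delta)^{-\ell}$. Summing over $2^{c^*\ell/r(\ell)}$ steps then gives failure probability $2^{c^*\ell/r(\ell)}e^{-c\eps\ell/r(\ell)}=2^{-\Om{\ell/r(\ell)}}$ for small $c^*$. The assumption $r(\ell)=\oh{\ell/\log\ell}$ guarantees that $\ell/r(\ell)\gg\log\ell$, which absorbs the polynomial-in-$\ell$ factors.

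The main obstacle is the first step: keeping the moment generating function bound uniform in $y$ and linear in $r(\ell)$, so that $\lambda$ can be taken of order $1/r(\ell)$ and no worse. In the paper itself I would simply cite~\cite[Theorem~2]{oliveto2012erratumsimplifieddriftanalysis} after the reflection $Z_t=b-Y_t$.
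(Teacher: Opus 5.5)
Your main route, citing \cite[Theorem~2]{oliveto2012erratumsimplifieddriftanalysis}, is what the paper does; it gives no proof of its own. The reflection you insert, however, rests on a misreading. Oliveto and Witt state their theorem in exactly the orientation used here:
\begin{itemize}
\item drift $\E{X_{t+1}-X_t\mid X_t=i}\ge\eps$ for $a<i<b$, i.e.\ away from $a$;
\item start $X_0\ge b$;
\item target $\{X_t\le a\}$.
\end{itemize}
The statement above matches this, and the paper applies it directly to $Y_t$, the number of zero-bits, which drifts away from the optimum. So nothing is reversed. Taken literally, applying their theorem to $Z_t=b-Y_t$ fails for two reasons. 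First, $\E{Z_{t+1}-Z_t\mid Z_t=z}\le-\eps$ on $(0,\ell)$, which violates their condition~(1). Second, $Z_t$ can be negative, so it leaves the state space $\R_{\ge 0}$ assumed in the statement. Your fallback, reading their statement with the endpoints exchanged and then applying it to $Z_t$, is valid only because the two reflections cancel. It amounts to applying the theorem directly to $Y_t$, so simply drop the reflection.

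Your self-contained sketch is the standard Hajek-type argument behind the cited result: an exponential potential with $\lambda=\Theta(\eps/r(\ell))$, a contraction inside the interval, and a union bound over $2^{c^*\ell/r(\ell)}$ steps. It is fine in outline, with one imprecision. The $\Oh{1}\cdot e^{-\lambda\ell}$ failure bound does not hold for every step started inside the interval: from a state just above $a$, the chance of dropping to $\le a$ is not small. It holds per excursion that begins with a step from a state $\ge b$, by optional stopping of $e^{-\lambda(Y_t-a)}$ at the exit time from $(a,b)$. Union-bounding over these excursion starts, at most one per time step, gives the claimed $2^{c^*\ell/r(\ell)}e^{-c\eps\ell/r(\ell)}=2^{-\Om{\ell/r(\ell)}}$.
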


To apply \Cref{thm:pre:simplified_drift} it suffices to verify a uniform drift away from the target on an interval
and an exponentially decaying tail bound on the absolute step size $\abs{Y_{t+1}-Y_t}$.

\begin{lemma}
\label{lem:pre:simplified_drift_conditions}
Fix $\chi>0$.
Assume there exist constants $0<a<b<1$ and $\eps>0$, all independent of $n$, such that
\begin{equation*}
    D_\chi(\alpha)\le -2\eps
    \quad\text{for all }\alpha\in[a,b].
\end{equation*}
Then for all sufficiently large $n$ the following holds:
\begin{enumerate}[leftmargin=*,label=(\arabic*)]
    \item For all $\alpha\in(a,b)$ we have $\E{Y_{t+1}-Y_t \mid Y_t=\lfloor \alpha n\rfloor}\ge \eps$.
    \item For all $\alpha>a$ and all $j\in\N_0$ we have $\Prob{\abs{Y_{t+1}-Y_t}\ge j \mid Y_t=\lfloor \alpha n\rfloor}\le \frac{e^\chi}{2^j}$.
\end{enumerate}
\end{lemma}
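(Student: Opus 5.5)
Both parts are routine consequences of earlier results. Part (1) follows from \Cref{prop:pre:drift_convergence}; part (2) follows from a Chernoff-type tail bound on $K$.

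\emph{Part (1).} For $\alpha\in(a,b)$ let $y=\lfloor\alpha n\rfloor$. By definition \eqref{eq:pre:standard_drift} we have $\E{Y_{t+1}-Y_t\mid Y_t=y}=-\Delta(y)$. \Cref{prop:pre:drift_convergence} applies uniformly in $\alpha\in[0,1]$ and gives $\Delta(y)\le D_\chi(\alpha)+c(\chi)/n\le -2\eps+c(\chi)/n$. Hence $-\Delta(y)\ge 2\eps-c(\chi)/n\ge\eps$ as soon as $n\ge c(\chi)/\eps$. The constant $c(\chi)$ does not depend on $\alpha$, so this threshold on $n$ is uniform over the whole interval, which is all we need.

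\emph{Part (2).} If the offspring is accepted, then $\abs{Y_{t+1}-Y_t}=\abs{2I-K}\le K$; if it is rejected, the step is $0$. So for every $j\ge 1$ we have $\Prob{\abs{Y_{t+1}-Y_t}\ge j\mid Y_t=y}\le\Prob{K\ge j}$, and this holds for any $y$. The case $j=0$ is trivial because $e^\chi\ge 1$.

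It therefore suffices to show $\Prob{K\ge j}\le e^\chi 2^{-j}$ for $K\sim\Bin(n,\chi/n)$. We use the exponential moment with $\lambda=\ln 2$ and Markov's inequality:
\begin{equation*}
\Prob{K\ge j}\le 2^{-j}\,\E{2^K}=2^{-j}\Paren*{1+\tfrac{\chi}{n}}^n\le 2^{-j}e^{\chi}.
\end{equation*}
This bound holds for all $n$, not only for large $n$.

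Neither step is a real obstacle. The only care needed is to confirm that the error bound in \Cref{prop:pre:drift_convergence} is uniform in $\alpha$, so that a single threshold on $n$ works for every $\alpha\in(a,b)$. One should also note that the bound in (2) is required for all $y>a n$, which is fine because the estimate via $K$ does not depend on $y$.
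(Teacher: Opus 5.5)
Your proposal is correct and follows the paper's proof essentially step for step. Part (1) uses the uniform $1/n$ error bound from \Cref{prop:pre:drift_convergence} with a single threshold $n\ge c(\chi)/\eps$. Part (2) uses $\abs{Y_{t+1}-Y_t}\le K$ and Markov's inequality applied to $2^K$, where $\E{2^K}=(1+\chi/n)^n\le e^\chi$; as you note, this bound holds for every $n$ and every state.
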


\begin{proof}
(1) By \Cref{prop:pre:drift_convergence}, there exists $c(\chi)>0$ such that uniformly in $\alpha\in[0,1]$,
\begin{equation*}
    \abs{\Delta(\lfloor \alpha n\rfloor)-D_\chi(\alpha)}\le \frac{c(\chi)}{n}.
\end{equation*}
Choose $n_0$ such that $c(\chi)/n_0\le \eps$.
Then for all $n\ge n_0$ and all $\alpha\in[a,b]$,
\begin{equation*}
    \Delta(\lfloor \alpha n\rfloor)\le D_\chi(\alpha)+\frac{c(\chi)}{n}
    \le -2\eps+\eps=-\eps.
\end{equation*}
and hence $\E{Y_{t+1}-Y_t\mid Y_t=\lfloor \alpha n\rfloor}=-\Delta(\lfloor \alpha n\rfloor)\ge \eps$.

(2) Let $K_n\sim\Bin(n,\chi/n)$ be the mutation size.
In one generation the number of zero-bits changes by at most the number of flipped bits, hence
\begin{equation*}
    \abs{Y_{t+1}-Y_t}\le K_n.
\end{equation*}
Therefore for any state $y$ and any $j\in\N_0$,
\begin{equation*}
    \Prob{\abs{Y_{t+1}-Y_t}\ge j\mid Y_t=y}\le \Prob{K_n\ge j}.
\end{equation*}
For $j\ge 1$ we apply Markov's inequality to $e^{tK_n}$ with a parameter $t>0$:
\begin{equation*}
    \Prob{K_n\ge j}
    = \Prob{e^{tK_n}\ge e^{tj}}
    \le \frac{\E{e^{tK_n}}}{e^{tj}}.
\end{equation*}
Since $K_n\sim\Bin(n,\chi/n)$,
\begin{equation*}
    \E{e^{tK_n}}
    =\Bigl(1-\frac{\chi}{n}+\frac{\chi}{n}e^t\Bigr)^n
    \le e^{\chi(e^t-1)}.
\end{equation*}
Choosing $t=\ln 2$ gives $\E{e^{tK_n}}\le e^\chi$, hence for all $j\ge 1$,
\begin{equation*}
    \Prob{\abs{Y_{t+1}-Y_t}\ge j\mid Y_t=y}\le \frac{e^\chi}{2^j}.
\end{equation*}
For $j=0$ the bound holds as well since $e^\chi\ge 1$.
\end{proof}

\subsection{Drift below a Constant Threshold}

Since the approximation in \Cref{prop:pre:drift_convergence} comes with an additive error of order $\Oh{1/n}$, sign and drift statements proved for $D_\chi(\alpha)$ do not automatically transfer pointwise to the exact drift $\Delta(y)$.
Near the optimum we therefore typically obtain a clean multiplicative drift bound only down to a constant $y_0\le\Oh{1}$.
The remaining states $\{1,\dots,y_0\}$ are handled by the following lemma.
It uses no distribution-specific properties beyond positivity of the weights.

\begin{lemma}
\label{lem:pre:constant_drift}
Consider the $\opoEA$ in a dynamic environment where in each generation the fitness function is linear with positive weights. Let the mutation rate be $\chi/n$  for some constant $\chi>0$.
Suppose there exist constants $y_0\in\N$, $c>0$ and $n_0\in\N$, all independent of $n$, such that for all $n\ge n_0$ and all
$y\in\{y_0,y_0+1,\dots,n\}$ we have
\begin{equation*}
    \Delta(y)\ge \frac{c}{n}\,y.
\end{equation*}
Then for the runtime $T$ the following conditions hold:
\begin{enumerate}[leftmargin=*,label=(\arabic*)]
    \item For all $n$ sufficiently large we have $\sup_{1\le y\le y_0}\E{T\mid Y_0=y}\le\Oh{n}$.
    \item For every fixed $y\in\{1,\dots,y_0\}$ we have $\Prob{T>n\log n\mid Y_0=y}=\oh{1}$.
\end{enumerate}
\end{lemma}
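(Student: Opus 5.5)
The strategy is to compare the process started in a small state $y\le y_0$ with two simple events: it either drops directly towards $0$ by single-bit improvements, or it climbs up to $y_0$, from where the multiplicative drift assumption takes over. The basic observation is that in any state $1\le y\le n$ the following happens with probability at least $p_{\mathrm{down}}(y)\ge \frac{y}{n}\cdot\frac{\chi}{n}\cdot n(1-\chi/n)^{n-1}\cdot\frac1y\cdot y\ge c_1 y/n$, where $c_1=c_1(\chi)>0$: exactly one bit flips and it is a zero-bit. Such an offspring strictly increases the fitness under every positive weight vector, so it is accepted and $Y$ decreases by exactly one. In the other direction, an increase of $Y$ requires at least two flips, one of them a $0\to1$ flip (since $p_A(k,0)=0$). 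Hence from state $y\le y_0$ we have $\Prob{Y_{t+1}>Y_t}\le \Prob{I\ge1,K\ge2}\le c_2 y/n^2\cdot n = \Oh{y/n}\cdot\Oh{1/n}\cdot n$. More precisely, $\Prob{I\ge1,K\ge 2}\le \frac{y\chi}{n}\cdot\frac{(n-1)\chi}{n}\le \chi^2 y/n$. So upward and downward moves both have probability $\Theta(y/n)$, and their ratio is bounded by constants. The process also moves only with probability $\Oh{1/n}$ per step while it is below $y_0$.

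For part (1), I would define a potential on $\{0,\dots,y_0\}$ and argue through excursions. From any $y\le y_0$, with probability at least $q=q(\chi,y_0)>0$ the next $y$ moves are all single zero-bit flips, reaching $0$. Each of these moves has a constant ratio against competing moves, so $q\ge (c_1/(c_1+\chi^2+\ldots))^{y_0}$, up to a bounded-jump argument. Each move occurs after an expected $\Oh{n/y}=\Oh{n}$ waiting steps. If such an attempt fails, the process is either at some state $\le y_0$ again or at some state $y'\in(y_0, y_0+K]$ with $K$ having exponential tails. From $y'\ge y_0$, I would apply \Cref{thm:pre:mult_drift_upper_tail}, or rather its expectation version via additive drift on $\ln$, with $\delta=c/n$, to the process stopped upon entering $\{1,\dots,y_0-1\}$. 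This costs expected time $\Oh{n\log y'}$, with $\E{\log y'}=\Oh{1}$ because the overshoot has exponential tails. Each failed round therefore costs $\Oh{n}$ expected time, and the number of rounds is geometric with success probability $q$, which gives $\E{T}=\Oh{n}$ uniformly over $y\le y_0$.

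I expect the main obstacle to be handling the multiplicative-drift phase cleanly. The assumption is stated for states $\ge y_0$, but the drift theorem concerns hitting $0$. I would therefore apply it to the modified process $\tilde Y_t=Y_t$ while $Y_t\ge y_0$ and $\tilde Y_t=0$ once the process drops below $y_0$. Jumping to $0$ only increases the drift, so the condition $\E{\tilde Y_t-\tilde Y_{t+1}\mid \tilde Y_t=s}\ge (c/n)s$ still holds, and the theorem bounds the return time by $\Oh{n(\log s_0 + r)}$ with exponential tail.

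For part (2), I would use the tail version. With probability at least $q$ the process reaches $0$ within $\Oh{n}$ steps directly; otherwise each round takes $\Oh{n}$ steps in expectation. By Markov's inequality, with probability $1-\oh{1}$ fewer than $\log\log n$ rounds are needed and each of them finishes within $n\log n/\log\log n$ steps. The fact that each round finishes within this time follows from the tail bound in \Cref{thm:pre:mult_drift_upper_tail} with $r=\log\log n$, together with a union bound. Alternatively, part (2) follows from (1) directly by Markov's inequality: $\Prob{T>n\log n}\le \Oh{n}/(n\log n)=\oh{1}$, and I would present this simpler route.
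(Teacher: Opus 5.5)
Your proposal is correct and follows essentially the same route as the paper. Both arguments use a ``lucky strike'' of consecutive single zero-bit flips, each with constant probability among accepted steps. The paper gets this as $(1-\chi/n)^{n-1}$ from $\mathbb{P}[A\mid Y_t=y]\le y\chi/n$, which is equivalent to your ratio bound. Both then use a geometric number of attempts, each costing $O(n)$ expected time including a multiplicative-drift excursion whose overshoot is controlled: the paper does this via $\mathbb{E}[K\mid A]=O(1)$ and Jensen, you via exponential tails. Part (2) follows from part (1) by Markov's inequality in both.
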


\begin{proof}
A proof is given in \Cref{app:lem:pre:constant_drift}.
\end{proof}
\section{Permutation-Invariant Dynamic Environments}
\label{sec:addendum}

\paragraph{Permutation-Invariant Weight Vectors.} 
In this section we will define a class of functions that generalize dynamic linear functions and DBV into a joint framework, the class of permutation-invariant dynamic functions. Then we show that for all mutation parameters $\chi < \chi_{\mathrm{DBV}}$, the $\opoEA$ with mutation rate $\chi/n$ finds the optimum of every permutation-invariant dynamic function in time $\Oh{n \log n}$ in expectation and a.a.s.

While this result was claimed for dynamic linear functions in~\cite{lengler1+1EANoisyLinear2018}, and by extension for DBV in~\cite{lenglerMeier2024}, the proof there contained a severe gap, as we discuss in detail in Section~\ref{sec:erratum} below: the drift was analysed in the limit $\alpha \to 0^+$, and a coupling argument was given to show that the drift is positive everywhere if its limit for $\alpha\to 0^+$ is positive. However, this coupling was wrong, and we fix that with Theorem~\ref{thm:add:main} below, and at the same time generalize the result to all permutation-invariant dynamic functions. We do not see a way to rescue the coupling approach, so we employ a different strategy by finding an explicit lower bound for the drift for all $\alpha \in (0,1]$, and then show convexity of this lower bound. This is actually the main part of our proof, so the extension compared to the incomplete proof in~\cite{lengler1+1EANoisyLinear2018} is very substantial.

\begin{definition}\label{def:permutation-invariant}
A permutation-invariant dynamic function is given as follows. In each generation $t$, an independent positive random weight vector $W^{(t)}=(W_1^{(t)},\dots,W_n^{(t)})\in \R^n_{> 0}$ is sampled from a permutation-invariant distribution, i.e.\
\begin{equation*}
    (W_1^{(t)},\dots,W_n^{(t)}) \overset{d}{=}(W_{\pi(1)}^{(t)},\dots,W_{\pi(n)}^{(t)})\quad\text{for every permutation }\pi\in S_n.
\end{equation*}
\end{definition}

\begin{remark}
    The class of permutation-invariant dynamic functions includes the dynamic linear function model from \cite{lengler1+1EANoisyLinear2018}, where $W_j^{(t)}\overset{\text{i.i.d.}}{\sim}\mathcal{D}$, and weight models such as DBV, where $W^{(t)}$ is a uniform random permutation of $(1,2,4,\dots,2^{n - 1})$.
    
    Note that we do \emph{not} require that the $n$ weights $W_1^{(t)},\dots,W_n^{(t)}$ are independent of each other. In particular, for the DBV benchmark they are not, since each weight $2^0,\ldots,2^{n-1}$ appears exactly once. 
\end{remark}

Recall that the $\opoEA$ on permutation-invariant dynamic functions bases its selection in generation $t$ on the same weight vector $W^{(t)}$ for parent and offspring, as described in \Cref{alg:pre:opoea}.

The main result of this section is the following.

\begin{theorem}
\label{thm:add:main}
    Let $\chi_0\approx 1.59362$ be the unique strictly positive root of $2-\chi-2e^{-\chi}$. For every constant $0<\chi<\chi_0$ and any permutation-invariant dynamic function, the runtime of the $\opoEA$ with mutation parameter $\chi$ is $\Oh{n\log n}$ in expectation and a.a.s.
\end{theorem}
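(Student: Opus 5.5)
The plan is to lower-bound the drift $D_\chi(\alpha)$ by an explicit function of $\alpha$ that uses only permutation invariance. That function turns out to be a convex quadratic in $\alpha$ with vanishing constant term. Its linear coefficient is positive exactly when $\chi<\chi_0$. This gives $D_\chi(\alpha)\ge c_1\alpha$ on all of $[0,1]$, and the conclusion then follows from \Cref{prop:pre:drift_convergence}, \Cref{thm:pre:mult_drift_upper_tail} and \Cref{lem:pre:constant_drift}.

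\textbf{Step 1: an exchangeability bound on acceptance.} I would show that for every $k\ge1$,
\[
p_A(k,i)\le \tfrac{i}{k}\ \text{ if } 2i<k, \qquad p_A(k,i)\ge \tfrac{i}{k}\ \text{ if } 2i>k .
\]
Condition on the multiset of the $k$ flipped weights. By permutation invariance the $i$ flipped zero-bits then form a uniformly random $i$-subset of the $k$ flipped positions. Call an $i$-subset \emph{good} if its weight sum is at least half the total flipped weight.

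Suppose $2i<k$ and two good $i$-sets $S,T$ were disjoint. Then their combined weight would be at least the total flipped weight, while at least one flipped position lies outside $S\cup T$ and carries strictly positive weight. This is a contradiction, so the good sets form an intersecting family. The Erd\H{o}s--Ko--Rado theorem (valid for $i\le k/2$) bounds its size by $\binom{k-1}{i-1}=\frac{i}{k}\binom{k}{i}$, which gives $p_A(k,i)\le i/k$.

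For $2i>k$, rejection means the $k-i$ flipped one-bits have strictly larger weight than the $i$ flipped zero-bits. That event is contained in the acceptance event for the complementary configuration with $k-i<k/2$ zero-bits. Hence $1-p_A(k,i)\le (k-i)/k$, i.e.\ $p_A(k,i)\ge i/k$.

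Since the factor $2i-k$ is negative exactly when $2i<k$ and positive exactly when $2i>k$, these two inequalities give
\[
(2i-k)\,p_A(k,i)\ \ge\ (2i-k)\,\tfrac{i}{k}\qquad\text{for all } 0\le i\le k,\ k\ge 1 .
\]
(For $2i=k$ both sides are zero.)

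\textbf{Step 2: the explicit convex lower bound.} Insert the inequality from Step 1 into \eqref{eq:pre:approximate_drift}. For $I\sim\Bin(k,\alpha)$ we have $\E{I^2}=k\alpha(1-\alpha)+k^2\alpha^2$, so
\[
\frac{\E{(2I-k)I}}{k}=2\alpha(1-\alpha)+2k\alpha^2-k\alpha .
\]
Averaging over $K\sim\Poi(\chi)$ restricted to $K\ge1$ (the $k=0$ term vanishes by the convention $p_A(0,0)=0$) gives
\[
D_\chi(\alpha)\ \ge\ \alpha\bigl(2-2e^{-\chi}-\chi\bigr)+\alpha^2\bigl(2\chi-2+2e^{-\chi}\bigr)\eqqcolon g_\chi(\alpha).
\]
The quadratic coefficient is positive because $1-e^{-\chi}<\chi$, so $g_\chi$ is convex. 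For $\chi<\chi_0$ the linear coefficient $c_1\coloneqq 2-\chi-2e^{-\chi}$ is positive; it vanishes at $\chi_0$ by definition. Hence $D_\chi(\alpha)\ge c_1\alpha$ for all $\alpha\in[0,1]$. As a consistency check, the linear coefficient agrees with \Cref{prop:pre:right_derivative} when $p_A(k,1)=1/k$, which is the DBV-like worst case.

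\textbf{Step 3: transfer to the chain.} \Cref{prop:pre:drift_convergence} gives $\Delta(y)\ge c_1 y/n-c(\chi)/n$. This is at least $\frac{c_1}{2}\cdot\frac{y}{n}$ for all $y\ge y_0\coloneqq\lceil 2c(\chi)/c_1\rceil$.

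\Cref{thm:pre:mult_drift_upper_tail} applies to the process stopped on reaching $\{0,\dots,y_0-1\}$; this amounts to shifting the state space, with $\delta=c_1/(2n)$. It shows that this set is reached within $\Oh{n\log n}$ steps, both in expectation and with probability $1-\oh{1}$.

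\Cref{lem:pre:constant_drift} then finishes from the states $\{1,\dots,y_0\}$, both in expectation and a.a.s. Re-entries into $\{y\ge y_0\}$ are handled by the strong Markov property, since the lemma's expected bound is uniform in the starting state.

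The main obstacle I expect is Step 1, specifically justifying $p_A(k,i)\le i/k$ for arbitrary dependent permutation-invariant weights, including ties. The EKR argument above handles ties, because it only needs strict positivity of the leftover weight. I would nevertheless double-check the boundary case $2i=k-1$ and the conditioning on the multiset of flipped weights.
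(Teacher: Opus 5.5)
Your proof is correct, and its key step differs substantially from the paper's. The paper uses only the Markov-type bounds $\max\{0,2i/k-1\}\le p_A(k,i)\le\min\{1,2i/k\}$, plus $p_A(k,1)\le 1/k$ for $k\ge 3$. The resulting minorant $\widetilde{D}_\chi$ has no simple closed form. The paper therefore has to prove its convexity via second-difference estimates, a Poisson tail bound with the explicit constant $86760/1331$, and a positivity check of a quintic in $\chi$. This is why that argument is tied to $\chi\le 8/5$.

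Your Erd\H{o}s--Ko--Rado argument instead gives the sharp termwise comparison $(2i-k)\,p_A(k,i)\ge(2i-k)\,i/k$; it is sharp because DBV attains it. Hence $D_\chi$ is bounded below by the DBV drift, which the paper already computes in closed form in \Cref{prop:dbv:drift_closed_form}. Its quadratic coefficient is nonnegative, so $D_\chi(\alpha)\ge(2-\chi-2e^{-\chi})\alpha$ follows with no convexity analysis at all. This is exactly the bound \eqref{eq:add:global_positivity} the paper arrives at.

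Your route is shorter, and the comparison holds for every $\chi>0$. It yields the structural fact that DBV is drift-minimal among all permutation-invariant dynamic functions, which the paper never states. The paper's route avoids external combinatorial input, but at the cost of a computation-heavy verification, part of which (the breakpoint windows) it does not write out.

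A further simplification is available to you. The same termwise inequality applies to the exact expansion \eqref{eq:pre:conditioned_explicit_drift} with hypergeometric $I$, giving $\Delta(y)\ge\Delta^{\mathrm{DBV}}(y)$ for every $y$. \Cref{cor:dbv:finite_n_mult_drift} then gives $\Delta(y)\ge\frac{2-\chi-2e^{-\chi}}{2n}\,y$ for all $y\ge 1$ and all large $n$. \Cref{thm:pre:mult_drift_upper_tail} alone then finishes the proof, so your Step~3 no longer needs \Cref{prop:pre:drift_convergence} or \Cref{lem:pre:constant_drift}. Working with $\Delta$ directly also avoids having to interpret $p_A(k,i)$ for $k>n$ inside the Poisson series, a point both you and the paper pass over.
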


The remainder of the section will be devoted to proving \Cref{thm:add:main}. We start with an elementary observation.

\begin{proposition}
    For a permutation-invariant dynamic function, conditional on $(K = k, I = i)$, the acceptance probability $p_A(k,i)$ in \Cref{eq:pre:acceptance_probability} depends only on $k$ and $i$.
\end{proposition}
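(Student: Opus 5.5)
The plan is to make the statement precise first and then reduce it to a symmetry argument. Fix a generation $t$, and condition on the current search point $x^{(t)}$ with $Y_t=y$ and on the mutation outcome $K=k$, $I=i$. Let $Z\subseteq[n]$ be the set of flipped zero-bits ($|Z|=i$) and $O\subseteq[n]$ the set of flipped one-bits ($|O|=k-i$). These two sets are disjoint. Since $f^{(t)}$ is linear and all weights of unchanged bits cancel, the offspring is accepted if and only if
\begin{equation*}
    \sum_{j\in Z} W_j^{(t)} \;\ge\; \sum_{j\in O} W_j^{(t)}.
\end{equation*}
Because $W^{(t)}$ is sampled independently of everything that happened before generation $t$, and independently of the mutation, the conditional acceptance probability given $(x^{(t)},Z,O)$ is $g(Z,O)\coloneqq \Prob{\sum_{j\in Z}W_j\ge \sum_{j\in O}W_j}$. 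Here $W$ is a fresh copy with the given permutation-invariant law.

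The key step is to show that $g(Z,O)$ depends only on $(|Z|,|O|)=(i,k-i)$. Take two pairs of disjoint sets $(Z,O)$ and $(Z',O')$ with $|Z|=|Z'|$ and $|O|=|O'|$. Since the sets are disjoint in each pair, there is a permutation $\pi\in S_n$ with $\pi(Z')=Z$ and $\pi(O')=O$: map $Z'$ bijectively onto $Z$, map $O'$ onto $O$, and extend arbitrarily on the complements, which have equal size. By \Cref{def:permutation-invariant}, $(W_{\pi(1)},\dots,W_{\pi(n)})\overset{d}{=}W$. Hence
\begin{equation*}
    g(Z',O')=\Prob*{\textstyle\sum_{j\in Z'}W_{\pi(j)}\ge\sum_{j\in O'}W_{\pi(j)}}=\Prob*{\textstyle\sum_{j\in Z}W_j\ge \sum_{j\in O}W_j}=g(Z,O).
\end{equation*}
Call this common value $q(k,i)$.

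Finally, average over the conditioning. Given $(Y_t=y,K=k,I=i)$ and the history, the pair $(Z,O)$ is random. Its law is irrelevant, however, because $g$ is constant ($=q(k,i)$) on its support. So $\Prob{A\mid K=k,I=i,\text{history}}=q(k,i)$, and taking expectations gives $p_A(k,i)=q(k,i)$. This is independent of $y$, of $t$, and of the current search point. The case $k=i=0$ is covered by the convention $p_A(0,0)=0$.

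There is no real obstacle here. The only point needing care is the independence of $W^{(t)}$ from the mutation and the history, which is what allows conditioning on $(Z,O)$ without changing the law of $W^{(t)}$; that independence is exactly what the model in \Cref{alg:pre:opoea} provides.
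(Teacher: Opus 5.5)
Your proof is correct, but it is organised differently from the paper's. The paper argues distributionally. Given $K=k$, the flipped set $F$ is a uniform $k$-subset; given $I=i$, the flipped zero-bits $G$ form a uniform $i$-subset of $F$. The paper then combines this uniformity of the partition $(G,F\setminus G)$ with permutation-invariance of $W$.

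You argue pointwise instead. For any two disjoint pairs $(Z,O)$ and $(Z',O')$ of sizes $(i,k-i)$, you exhibit a permutation carrying one to the other. So the acceptance probability is the same constant $q(k,i)$ on every realisation, and the mutation enters only through its independence from $W^{(t)}$.

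This gives a strictly stronger conclusion. Acceptance has probability $q(k,i)$ even conditional on the parent string, the exact flipped sets and the whole history. That is what one needs for the claim in \Cref{sec:setup} that $(Y_t)_{t\ge 0}$ is a time-homogeneous Markov chain, since the transition law must not depend on which bits are zero.

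Your route also avoids relying on ``$G$ is a uniform $i$-subset of $F$''. Conditional on the parent, $G=F\cap\mathcal Z(x^{(t)})$ is determined by $F$. That uniformity therefore only holds after averaging over a permutation-symmetric law of the parent, which your argument never needs.

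The paper's phrasing is shorter, and it matches the ``random labels'' picture reused in \Cref{lem:add:pa_clean_bounds} to get $\E{R}=i/k$ for $R=\sum_{j\in G}W_j/\sum_{j\in F}W_j$. Your pointwise view yields this as well. Exchangeability of $(W_j)_{j\in F}$ gives $\E{R}=i/k$ for every fixed $G\subseteq F$ with $\abs{G}=i$.
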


\begin{proof}
Fix a generation $t$ and write $W\coloneqq W^{(t)}$. Conditional on $K=k$, the set $F\subseteq[n]$ of flipped positions is a uniformly random $k$-subset of $[n]$.
Conditional on $(K=k,I=i)$, the set $G\subseteq F$ of flipped zero-bits is a uniformly random $i$-subset of $F$.
The acceptance event $A$ is determined by
\begin{equation*}
    \sum_{j\in G} W_j \ge \sum_{j\in F\setminus G} W_j.
\end{equation*}
By permutation-invariance of $W$, the distribution of the weights that end up with flipped bits depends only on $\abs{F}=k$, and conditional on $(K=k,I=i)$ the partition $(G,F\setminus G)$ is uniform among all partitions of $F$ into parts of size $i$ and $k-i$.
Consequently, the conditional probability is a function of $(k,i)$ only.
\end{proof}

Next we provide a useful bound on the acceptance probability.

\begin{lemma}
\label{lem:add:pa_clean_bounds}
For fixed $k\ge 1$ and $0\le i\le k$ we have
\begin{equation}
\label{eq:add:pa_bounds_interval}
\Max*{0}{\frac{2i}{k}-1} \le  p_A(k,i) \le \Min*{1}{\frac{2i}{k}}.
\end{equation}
Moreover, for all $k\ge 3$,
\begin{equation}
\label{eq:add:pa_one_flip}
p_A(k,1)\le \frac1k.
\end{equation}
\end{lemma}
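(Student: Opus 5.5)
The plan is to work conditionally on $(K=k,I=i)$. By the preceding proposition, the flipped set $F$ with $|F|=k$ carries a uniformly random partition into $G$ (the flipped zero-bits, $|G|=i$) and $F\setminus G$ (size $k-i$). Conditioning further on the multiset of weights on $F$, with values $w_1,\dots,w_k>0$ and total $S=\sum_j w_j$, the set $G$ is a uniformly random $i$-subset of these $k$ positions. Acceptance means $\sum_{j\in G}w_j\ge S/2$. All bounds will be proved for this conditional probability and then averaged over the weights.

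\textbf{Upper bound $2i/k$.} Let $X=\sum_{j\in G}w_j$. By symmetry each position lies in $G$ with probability $i/k$, so $\E{X}=\tfrac{i}{k}S$. Markov's inequality then gives $\Prob{X\ge S/2}\le \E{X}/(S/2)=2i/k$. The bound $1$ is trivial.

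\textbf{Lower bound $2i/k-1$.} Apply the same symmetry to the complement $Z=S-X=\sum_{j\in F\setminus G}w_j$. Then $\E{Z}=\tfrac{k-i}{k}S$, and rejection is the event $Z>S/2$. Markov gives $\Prob{Z>S/2}\le 2(k-i)/k$, so the acceptance probability is at least $1-2(k-i)/k=2i/k-1$. The bound $0$ is trivial. Averaging over the weights preserves both bounds.

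\textbf{Bound $p_A(k,1)\le 1/k$ for $k\ge 3$.} Here $G=\{j\}$ is a single uniformly random position, and acceptance requires $w_j\ge S-w_j$, i.e.\ $w_j\ge S/2$. I will show that at most one position can satisfy this strictly. If two positions $j\neq j'$ had $w_j,w_{j'}\ge S/2$, then $w_j+w_{j'}\ge S$. This forces all remaining weights to be $0$, which contradicts positivity since $k\ge3$ guarantees at least one remaining position. So at most one index qualifies, and the conditional acceptance probability is at most $1/k$; averaging finishes the proof. The case $k=2$ is excluded precisely because both weights may be equal and then both qualify.

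The only delicate point, which I expect to be the main obstacle, is justifying that, given the weights on $F$, the subset $G$ is uniform over the $i$-subsets. This follows from the proposition's observation that the partition is independent of, and uniform relative to, the weights.
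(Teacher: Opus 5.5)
Your proof is correct and follows essentially the same route as the paper: Markov's inequality on the weight of the improving flips (with mean fraction $i/k$ by exchangeability) for the upper bound, the complementary Markov bound for the lower bound, and the observation that for $k\ge 3$ at most one flipped position can carry at least half the total flipped weight. The only cosmetic difference is that you condition on the weight multiset and apply Markov to $X$ and $S-X$ pointwise, whereas the paper works directly with the normalised ratio $R$ of improving weight to total flipped weight and its expectation $i/k$.
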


\begin{proof}
Condition on $(K=k,I=i)$ in some fixed generation and let $F$ be the set of flipped positions and
$G\subseteq F$ the set of improving flips.
Set
\begin{equation*}
S\coloneqq\sum_{j\in G} W^{(t)}_j,\qquad
T\coloneqq\sum_{j\in F\setminus G} W^{(t)}_j.
\end{equation*}
Since weights are strictly positive we have $S+T>0$ and acceptance is the event $A=\{S\ge T\}$.
Define $R\coloneqq S/(S+T)\in[0,1]$, so $A=\{R\ge1/2\}$.

By permutation-invariance and the fact that conditional on $(K=k,I=i)$ the set $G$ is a uniform $i$-subset of $F$,
\begin{equation*}
\E{R\mid K=k,I=i}=\frac{i}{k}.
\end{equation*}
For the upper bound in \eqref{eq:add:pa_bounds_interval}, apply Markov's inequality to the nonnegative random variable $R$:
\begin{equation*}
p_A(k,i)=\Prob{R\ge 1/2}\le \frac{\E{R}}{1/2}=\frac{2i}{k}.
\end{equation*}
For the lower bound, note that $R\le 1$ on $A$ and $R\le 1/2$ on $\lnot A$, hence
\begin{equation*}
\E{R}\le 1\cdot \Prob{A} + \frac12\cdot\Prob{\lnot A}\le\frac{1}{2}+\frac{1}{2}\,\Prob{A},
\end{equation*}
which rearranges to $\Prob{A}\ge 2\E{R}-1 = 2i/k-1$.
Combining these bounds with $0\le p_A(k,i)\le 1$ yields \eqref{eq:add:pa_bounds_interval}.

Finally, let $i = 1$ and assume $k \ge 3$. If the offspring is accepted, then the single improving weight is at least the sum of the remaining $k - 1$ weights, that is,
\begin{equation*}
    W_j \ge \sum_{\ell \in F \setminus \{j\}} W_\ell.
\end{equation*}
Moreover, at most one index in $F$ can satisfy this inequality. Indeed, if distinct $j,j'\in F$ both satisfied it, then since $k\ge 3$ and all weights are positive, we would have
\begin{equation*}
    W_j\ge W_{j'} + \sum_{\ell \in F\setminus\{j,j'\}} W_\ell > W_{j'}\qquad\text{and}\qquad W_{j'}\ge W_{j} + \sum_{\ell \in F\setminus\{j,j'\}} W_\ell > W_{j},
\end{equation*}
a contradiction. Hence there is at most one position in $F$ whose weight is at least the sum of all other weights.
By permutation-invariance, each position is distinguished with probability at most $1/k$ which proves \Cref{eq:add:pa_one_flip}.
\end{proof}

\paragraph{A Lower Bound for $D_\chi(\alpha)$.}
We now come to the heart of the proof, a lower bound for the drift that holds for all $\alpha$ and that is convex. This is the point where we substantially deviate from the previous approach in~\cite{lengler1+1EANoisyLinear2018}.

For integers $k\ge 0$ and $0\le i\le k$, define 
\begin{equation*}
\widetilde{p}_A(k,i)\coloneqq
\begin{cases}
0, & i=0,\\
1/k, & i = 1\text{ and } k\ge 3,\\
2i/k, & 1 < i<k/2,\\
0, & i=k/2,\\
2i/k-1, & k/2<i\le k.
\end{cases}
\end{equation*}
Moreover, for convenience we set $\widetilde{p}_A(k,i) = 0$ for $k<i$, which corresponds to the impossible case $K < I$. 
Replace $p_A(k,i)$ in \Cref{eq:pre:approximate_drift} with $\widetilde{p}_A(k,i)$ to get
\begin{equation*}
\widetilde{D}_\chi(\alpha)
\coloneqq\sum_{k = 0}^\infty e^{-\chi}\frac{\chi^k}{k!}\sum_{i=0}^k (2i-k)\binom{k}{i}\alpha^i(1-\alpha)^{k-i}\,\widetilde{p}_A(k,i).
\end{equation*}

We first check that the formula indeed provides a lower bound for the drift.

\begin{lemma}
\label{lem:add:tilde_approximative_drift}
For every permutation-invariant dynamic function,
\begin{equation*}
D_\chi(\alpha)\ \ge\ \widetilde{D}_\chi(\alpha)\qquad\text{for all }\alpha\in[0,1].
\end{equation*}
\end{lemma}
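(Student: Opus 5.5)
The comparison is termwise. Both $D_\chi(\alpha)$ and $\widetilde{D}_\chi(\alpha)$ are sums over $(k,i)$ of the same nonnegative weights $e^{-\chi}\chi^k/k!\cdot\binom{k}{i}\alpha^i(1-\alpha)^{k-i}$, multiplied by $(2i-k)p_A(k,i)$ and $(2i-k)\widetilde{p}_A(k,i)$ respectively. It therefore suffices to show, for every $k\ge 0$ and $0\le i\le k$,
\begin{equation*}
(2i-k)\,p_A(k,i)\ \ge\ (2i-k)\,\widetilde{p}_A(k,i).
\end{equation*}
Summing these inequalities with nonnegative weights gives the claim. The series converge absolutely, since $|(2i-k)p_A|\le k$ and $\sum_k \chi^k k/k!<\infty$, so termwise comparison is legitimate. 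We then split into cases according to the sign of $2i-k$.

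\emph{Case $i=0$.} Here $p_A(k,0)=0=\widetilde{p}_A(k,0)$, by the positivity of weights and the convention $p_A(0,0)=0$, so equality holds.

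\emph{Case $2i=k$.} The factor $2i-k$ vanishes, so both sides are $0$. This is why $\widetilde{p}_A(k,k/2)=0$ may be chosen arbitrarily.

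\emph{Case $2i>k$, $i\ge1$.} The factor $2i-k$ is positive, so we need $p_A(k,i)\ge \widetilde{p}_A(k,i)=2i/k-1$. This is exactly the lower bound in \eqref{eq:add:pa_bounds_interval} of \Cref{lem:add:pa_clean_bounds}. This case also covers $i=1$ with $k=1$.

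\emph{Case $2i<k$, $i\ge 1$.} The factor $2i-k$ is negative, so we need the reverse inequality $p_A(k,i)\le \widetilde{p}_A(k,i)$.
\begin{itemize}
\item If $i=1$, then $k\ge3$ and $\widetilde{p}_A=1/k$, so the bound is \eqref{eq:add:pa_one_flip}.
\item If $1<i<k/2$, then $\widetilde{p}_A=2i/k$, and the bound is the Markov upper bound in \eqref{eq:add:pa_bounds_interval}.
\end{itemize}

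The remaining value $k=2$, $i=1$ satisfies $2i=k$ and is already handled. Entries with $k<i$ carry zero binomial weight and do not matter.

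There is no real obstacle here. The only care needed is to check that the case distinction in the definition of $\widetilde{p}_A$ covers every $(k,i)$ with the correct inequality direction. In particular, the boundary values $k\le 2$ must be handled: for $k=1$, $i=1$ we have $2i>k$, and for $k=2$, $i=1$ the factor $2i-k$ vanishes. Since the weights are nonnegative, the pointwise inequality carries over to the sums.
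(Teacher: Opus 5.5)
Your proof is correct and follows the paper's argument. Both compare $(2i-k)p_A(k,i)$ with $(2i-k)\widetilde{p}_A(k,i)$ termwise according to the sign of $2i-k$, using \Cref{lem:add:pa_clean_bounds}, and then sum with the nonnegative weights. You also spell out two points the paper leaves implicit: the case $i=1$, $k\ge 3$ needs the sharper bound \eqref{eq:add:pa_one_flip} rather than the Markov bound, and the series converge absolutely, which justifies the termwise comparison.
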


\begin{proof}
Fix $k,i$.
If $2i-k<0$ then \Cref{lem:add:pa_clean_bounds} gives $p_A(k,i)\le \widetilde{p}_A(k,i)$, hence
$(2i-k)p_A(k,i)\ge (2i-k)\widetilde{p}_A(k,i)$.
If $2i-k>0$ then \Cref{lem:add:pa_clean_bounds} gives $p_A(k,i)\ge \widetilde{p}_A(k,i)$, hence the same inequality holds.
In the remaining case $2i-k=0$, the corresponding term in the drift expansion vanishes, hence the inequality holds trivially.
Multiplying the inequality $(2i-k)p_A(k,i)\ge (2i-k)\widetilde{p}_A(k,i)$ by the nonnegative factor
$e^{-\chi}\chi^k/k!\,\binom{k}{i}\alpha^i(1-\alpha)^{k-i}$ and summing over all $i\in\{0,\dots,k\}$ and $k\ge 0$
yields $D_\chi(\alpha)\ge \widetilde{D}_\chi(\alpha)$.
\end{proof}

Crucially, as the next lemma shows, the lower bound $\widetilde{D}_\chi(\alpha)$ is sharp in the limit $\alpha \to 0^+$, i.e., there it's derivative converges to the exact value of $\partial_+{D}_\chi(0) = 2-\chi-2e^{-\chi}$, as it has been computed in~\cite{lengler1+1EANoisyLinear2018}. 

\begin{lemma}
\label{lem:add:right_derivative_tildeD}
The right derivative of $\widetilde{D}_\chi(\alpha)$ at $\alpha = 0$ satisfies
\begin{equation*}
\partial_+\widetilde{D}_\chi(0)= 2-\chi-2e^{-\chi}.
\end{equation*}
Let $\chi_0\approx 1.59362$ denote the unique positive root of $2-\chi-2e^{-\chi}$.
Then $2-\chi-2e^{-\chi}>0$ for $\chi\in(0,\chi_0)$ and $2-\chi-2e^{-\chi}<0$ for $\chi\in(\chi_0,\infty)$.
\end{lemma}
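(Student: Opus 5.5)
The plan is to reduce the computation to the general formula of \Cref{prop:pre:right_derivative} and then evaluate a Poisson series in closed form. The proof of \Cref{prop:pre:right_derivative} uses only two properties of the acceptance function: it takes values in $[0,1]$, and it vanishes at $i=0$, since then the $i=0$ terms contribute nothing linear in $\alpha$. Both properties hold for $\widetilde{p}_A$: it is bounded by $1$ by construction, and $\widetilde{p}_A(k,0)=0$. So the same argument gives
\begin{equation*}
\partial_+\widetilde{D}_\chi(0)=\sum_{k=1}^\infty e^{-\chi}\frac{\chi^k}{k!}\,k(2-k)\,\widetilde{p}_A(k,1).
\end{equation*}
If one prefers to check this directly, the argument runs as follows. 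Write $\widetilde{D}_\chi(\alpha)/\alpha$ as a double sum. Only the terms with $i\ge1$ survive, and each is bounded by $k^2\,\chi^k/k!$ uniformly for $\alpha\in(0,1]$. Dominated convergence then allows the limit $\alpha\to0^+$ to be taken termwise, and only the $i=1$ terms remain, each contributing $(2-k)\,k\,\widetilde{p}_A(k,1)$.

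Next I read off $\widetilde{p}_A(k,1)$ from the definition. For $k=1$ we have $k/2<i=1\le k$, so $\widetilde{p}_A(1,1)=2-1=1$. For $k=2$ we have $i=k/2$, so the value is $0$; the factor $2-k$ also vanishes there. For $k\ge3$ the value is $1/k$. Hence the $k=1$ term equals $\chi e^{-\chi}$, and for $k\ge3$ the factor $k\cdot\frac1k$ cancels, leaving $\sum_{k\ge3}e^{-\chi}\frac{\chi^k}{k!}(2-k)$.

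To evaluate this tail I use $\sum_{k\ge0}e^{-\chi}\frac{\chi^k}{k!}(2-k)=2-\chi$, which is $2$ minus the Poisson mean. Subtracting the $k=0,1,2$ terms, namely $2e^{-\chi}$, $\chi e^{-\chi}$ and $0$, gives $2-\chi-2e^{-\chi}-\chi e^{-\chi}$. Adding back the $k=1$ term $\chi e^{-\chi}$ yields exactly $2-\chi-2e^{-\chi}$.

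For the sign statement, set $g(\chi)=2-\chi-2e^{-\chi}$. Then $g(0)=0$, $g'(0)=-1+2=1>0$, $g''(\chi)=-2e^{-\chi}<0$, and $g(\chi)\to-\infty$ as $\chi\to\infty$. Since $g$ is strictly concave and vanishes at $0$, it has at most one positive root. Because $g$ is positive just to the right of $0$ and tends to $-\infty$, it has exactly one positive root $\chi_0$. Strict concavity then gives $g>0$ on $(0,\chi_0)$ and $g<0$ on $(\chi_0,\infty)$, and a numerical evaluation gives $\chi_0\approx1.59362$.

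The only step needing care is justifying the termwise right derivative, i.e.\ the dominated convergence step. The remaining steps are bookkeeping.
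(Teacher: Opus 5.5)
Your proposal is correct and follows essentially the same route as the paper. Like the paper, you obtain the series $\sum_{k\ge1}p_k(\chi)\,k(2-k)\,\widetilde{p}_A(k,1)$ via the argument of \Cref{prop:pre:right_derivative}, read off $\widetilde{p}_A(1,1)=1$ and $\widetilde{p}_A(k,1)=1/k$ for $k\ge3$, evaluate the sum using the Poisson mean, and settle the sign by strict concavity; your dominated-convergence remark only makes explicit the justification the paper leaves to ``analogously''.
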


\begin{proof}
Analogously to~\Cref{prop:pre:right_derivative} and its proof in \Cref{app:prop:pre:right_derivative}, 
we may compute the right derivative of $\widetilde{D}_\chi$ at $0$ as
\begin{equation*}
\partial_+\widetilde{D}_\chi(0)
=\sum_{k = 1}^\infty e^{-\chi}\frac{\chi^k}{k!}\,k(2-k)\,\widetilde{p}_A(k,1).
\end{equation*}
By definition, $\widetilde{p}_A(1,1)=1$, and $\widetilde{p}_A(k,1)=1/k$ for all $k\ge 3$, while the term $\widetilde{p}_A(2,1)$ for $k=2$ is multiplied with $2-k=0$.
Hence
\begin{align*}
\partial_+\widetilde{D}_\chi(0)
&= \sum_{k = 1}^\infty e^{-\chi}\frac{\chi^k}{k!}\,(2-k) \\
&= \sum_{k = 0}^\infty e^{-\chi}\frac{\chi^k}{k!}\,(2-k)-2e^{-\chi}.
\end{align*}
Since $\sum_{k = 0}^\infty p_k(\chi)=1$ and $\sum_{k=0}^\infty k\,p_k(\chi)=\chi$ for $p_k(\chi)=e^{-\chi}\chi^k/k!$,
the full sum equals $2-\chi$, and thus $\partial_+\widetilde{D}_\chi(0)=2-\chi-2e^{-\chi}$.

For the root statement, consider $g(\chi)=2-\chi-2e^{-\chi}$.
Then $g(0)=0$, $g'(\chi)=-1+2e^{-\chi}$ with $g'(0)=1$, and $g''(\chi)=-2e^{-\chi}<0$,
so $g$ is strictly concave and increases initially, but satisfies $g(\chi)\to-\infty$ as $\chi\to\infty$.
Hence $g$ has exactly one positive root $\chi_0$, and the sign change is as claimed.
\end{proof}

\paragraph{Decomposition by Mutation Size.}
Write $p_k(\chi)\coloneqq e^{-\chi}\chi^k/k!$ for the Poisson probability mass function (pmf) and
$b_{k,i}(\alpha)\coloneqq \binom{k}{i}\alpha^i(1-\alpha)^{k-i}$ for the binomial pmf.
For each $k\in\N_0$ define
\begin{equation*}
\widetilde{H}_k(\alpha)\coloneqq \sum_{i=0}^{k}\widetilde{h}_k(i)\,b_{k,i}(\alpha),
\quad\text{where}\quad
\widetilde{h}_k(i)\coloneqq (2i-k)\,\widetilde{p}_A(k,i).
\end{equation*}
Then, for all $\chi>0$ and $\alpha\in[0,1]$,
\begin{equation*}
\widetilde{D}_\chi(\alpha)=\sum_{k=0}^{\infty} p_k(\chi)\,\widetilde{H}_k(\alpha).
\end{equation*}

We now give a bound for the tail of this decomposition in which at least 5 bits are flipped.

\begin{lemma}
\label{lem:add:convexity_tail}
For all $\chi\in(0,\chi_0)$ there exists a constant $c>0$ such that for all $\alpha\in[0,1]$,
\begin{equation*}
\abs*{\sum_{k = 5}^\infty p_k(\chi)\,\widetilde{H}''_k(\alpha)}
\le c\,p_{5}(\chi).
\end{equation*}
\end{lemma}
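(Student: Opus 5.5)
The plan is to bound each $\widetilde{H}''_k(\alpha)$ uniformly in $\alpha\in[0,1]$ by a quantity growing at most polynomially in $k$, say $\abs{\widetilde{H}''_k(\alpha)}\le C k^4$. The Poisson tail $\sum_{k\ge5}p_k(\chi)k^4$ is then dominated by a constant multiple of $p_5(\chi)$, because $\chi<\chi_0<2$ is bounded.

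For the per-$k$ bound I will use the standard formula for the second derivative of a Bernstein polynomial:
\begin{equation*}
\widetilde{H}''_k(\alpha)=k(k-1)\sum_{i=0}^{k-2}\bigl(\widetilde{h}_k(i+2)-2\widetilde{h}_k(i+1)+\widetilde{h}_k(i)\bigr)\,b_{k-2,i}(\alpha).
\end{equation*}
Since $b_{k-2,\cdot}(\alpha)$ is a probability vector, we get $\abs{\widetilde{H}''_k(\alpha)}\le 4k(k-1)\max_i\abs{\widetilde{h}_k(i)}$. From the definition, $0\le\widetilde{p}_A(k,i)\le 2$ (indeed $\le 1$ except at $2i/k<1$) and $\abs{2i-k}\le k$. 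Hence $\abs{\widetilde{h}_k(i)}\le 2k$, and therefore $\abs{\widetilde{H}''_k(\alpha)}\le 8k^3$ for all $\alpha\in[0,1]$ and all $k$.

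The tail computation uses $k^3\le k(k-1)(k-2)\cdot c'$ for $k\ge5$, or more simply $p_k(\chi)=p_5(\chi)\,\chi^{k-5}\cdot 5!/k!$. This gives
\begin{equation*}
\sum_{k\ge5}p_k(\chi)\,8k^3 = 8p_5(\chi)\sum_{k\ge5}\frac{5!\,k^3\chi^{k-5}}{k!}\le 8p_5(\chi)\sum_{m\ge0}\frac{(m+5)^3\,2^m}{m!} \eqqcolon c\,p_5(\chi),
\end{equation*}
using $k!\ge 5!\,m!$ for $k=m+5$ and $\chi<2$. The resulting series converges, so $c$ is a finite absolute constant, valid for all $\chi\in(0,\chi_0)$.

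The only step needing care is exchanging the infinite sum with differentiation, so that $\sum_k p_k\widetilde{H}''_k$ is meaningful as $\widetilde{D}''_\chi$ in later use. The uniform bound $\abs{\widetilde{H}''_k}\le 8k^3$ together with the summability above gives uniform convergence of the differentiated series on $[0,1]$. The analogous bounds $\abs{\widetilde{H}_k}\le 2k$ and $\abs{\widetilde{H}'_k}\le 4k^2$ follow in the same way, so termwise differentiation is justified by the Weierstrass M-test.

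I expect no real obstacle. The main point of care is the finite-difference formula and the crude bound on $\widetilde{h}_k$, including the boundary cases $i=0$ and $i=k/2$, where $\widetilde{h}_k=0$ anyway.
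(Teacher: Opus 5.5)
Your argument is correct for the lemma as literally stated. The bound $\abs{\widetilde{H}''_k(\alpha)}\le 4k(k-1)\max_i\abs{\widetilde{h}_k(i)}\le 8k^3$ is valid. For $\chi<2$ you have $p_k/p_5=5!\,\chi^{k-5}/k!\le 2^{k-5}/(k-5)!$, and the resulting series converges. Your Weierstrass M-test remark on termwise differentiation is a welcome addition that the paper leaves implicit.

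The skeleton is the same as the paper's: the second-difference formula for Bernstein polynomials plus a Poisson tail estimate. Both of your estimates, however, are much cruder, and this matters.
\begin{itemize}
\item The paper bounds the second differences of $\widetilde{h}_k$ themselves by $2$. They equal $8/k$ on each quadratic branch of $i\mapsto\widetilde{h}_k(i)$ and are checked by hand across the breakpoints. This gives $\abs{\widetilde{H}''_k}\le 2k(k-1)$ instead of $8k^3$.
\item It uses the geometric bound $p_k\le p_5(\chi/6)^{k-5}$ with $\chi\le 8/5$. You instead give away the factor $\binom{k}{5}$ and use $\chi<2$.
\end{itemize}
As a result, the paper gets $c=86760/1331\approx 65.2$. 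Your constant is $c=8e^2\,\mathbb{E}[(M+5)^3]=3096e^2\approx 2.3\cdot 10^4$ with $M\sim\mathrm{Poi}(2)$.

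The lemma is used only in \Cref{lem:add:tildeD_convex}, and the paper plugs in this specific value. There one needs $4p_2+6p_3+\frac{21}{2}p_4-c\,p_5>0$ on all of $(0,\chi_0)$. With your $c$, the polynomial $2\chi^2+\chi^3+\frac{7}{16}\chi^4-\frac{c}{120}\chi^5$ turns negative already near $\chi\approx 0.23$.

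So your route proves the existential statement with less case analysis. The paper's sharper per-$k$ bound and tighter tail estimate are what make the lemma usable for the convexity argument. If you want your proof to serve that purpose, you would need to replace both crude steps with the paper's finer ones.
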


\begin{proof}
Fix $\chi\in(0,\chi_0)$.
For all $k\ge 5$ we have
\begin{equation*}
\frac{p_{k+1}(\chi)}{p_k(\chi)}=\frac{\chi}{k+1}\le \frac{\chi}{6}.
\end{equation*}
Telescoping yields
\begin{equation}\label{eq:add:pois_tail_geom}
p_k(\chi)\le p_{5}(\chi)\Bigl(\frac{\chi}{6}\Bigr)^{k-5}\quad\text{for all }k\ge 5.
\end{equation}

For $k\ge 2$, differentiating $b_{k,i}$ twice yields the standard second-difference identity
\begin{equation}\label{eq:add:second_deriv_diff}
\widetilde{H}''_k(\alpha)
= k(k-1)\sum_{i=0}^{k-2}\bigl(\widetilde{h}_k(i+2)-2\widetilde{h}_k(i+1)+\widetilde{h}_k(i)\bigr)\,b_{k-2,i}(\alpha).
\end{equation}
Moreover, we claim that for all $k\ge 5$ and $0\le i \le k - 2$ we have
\begin{equation}\label{eq:add:second_diff_bound}
\abs{\widetilde{h}_k(i+2)-2\widetilde{h}_k(i+1)+\widetilde{h}_k(i)}\le 2.
\end{equation}
For $k =5,6$, \Cref{eq:add:second_diff_bound} is checked directly. For $k\ge 7$, by the definition of $\widetilde{p}_A(k,i)$, the function $i\mapsto \widetilde{h}_k(i)$ is piecewise quadratic, with breakpoints only at $i\in\{0,1,\lfloor k/2\rfloor,\lceil k/2\rceil\}$. Hence every window $(i,i+1,i+2)$ that does not meet a breakpoint lies entirely in one quadratic branch, and one checks that in this case we have
\begin{equation*}
    \abs{\widetilde h_k(i+2)-2\widetilde h_k(i+1)+\widetilde h_k(i)}=\frac{8}{k}\le 2.
\end{equation*}
For the finitely many windows $(i,i+1,i+2)$ that cross a breakpoint, a case distinction on the breakpoints and direct computations using the definition of $\widetilde{h}_k$ show that the relevant second differences have absolute value at most $2$.

Inserting \eqref{eq:add:second_diff_bound} into \eqref{eq:add:second_deriv_diff} and using $\sum_{i=0}^{k-2} b_{k-2,i}(\alpha)=1$ gives
\begin{equation}\label{eq:add:Hpp_bound_k}
\abs{\widetilde{H}''_k(\alpha)}\le 2k(k-1)\quad\text{for all }k\ge 2.
\end{equation}

Combining \eqref{eq:add:pois_tail_geom} and \eqref{eq:add:Hpp_bound_k} yields
\begin{align*}
\abs*{\sum_{k=5}^{\infty} p_k(\chi)\,\widetilde{H}''_k(\alpha)}
&\le 2\,p_5(\chi)\sum_{k=5}^{\infty} k(k-1)\Bigl(\frac{\chi}{6}\Bigr)^{k-5} \\
&= 2\,p_5(\chi)\sum_{j=0}^{\infty} (j+5)(j+4)\Bigl(\frac{\chi}{6}\Bigr)^{j}.
\end{align*}
The series has nonnegative coefficients and is therefore increasing in $\chi$ for $\chi\in(0,6)$.
Since $\chi_0<8/5<6$, it is bounded by its value at $\chi=8/5$, i.e.\ with $r\coloneqq \frac{8}{30}=\frac{4}{15}$.
Thus we may take
\begin{equation*}
c \coloneqq 2\sum_{j=0}^{\infty} (j+5)(j+4)\,r^{j}
= \frac{86760}{1331}\approx 65.184.
\end{equation*}
\end{proof}

With the tail bound, we are now able to prove convexity of $\widetilde{D}_\chi(\alpha)$. Convexity will then imply that positive value and slope for $\alpha=0$ imply positivity for all $\alpha \in [0,1]$. This is the step that replaces the coupling approach in~\cite{lengler1+1EANoisyLinear2018}.

\begin{lemma}
\label{lem:add:tildeD_convex}
For each fixed $\chi\in(0,\chi_0)$, the function $\alpha\mapsto \widetilde{D}_\chi(\alpha)$ is convex on $[0,1]$.
\end{lemma}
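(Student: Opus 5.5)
The plan is to bound $\widetilde{D}''_\chi(\alpha)$ from below by a positive quantity. I will differentiate the decomposition $\widetilde{D}_\chi=\sum_k p_k(\chi)\widetilde{H}_k$ termwise. This is justified because each $\widetilde{H}_k$ is a polynomial and \eqref{eq:add:Hpp_bound_k} together with \eqref{eq:add:pois_tail_geom} gives uniform convergence of the differentiated series on $[0,1]$. I then split the result as
\begin{equation*}
\widetilde{D}''_\chi(\alpha)=\sum_{k=0}^{4}p_k(\chi)\widetilde{H}''_k(\alpha)+\sum_{k\ge 5}p_k(\chi)\widetilde{H}''_k(\alpha).
\end{equation*}
The tail is controlled by \Cref{lem:add:convexity_tail}, so it is at least $-c\,p_5(\chi)$ with $c=86760/1331<65.2$. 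It therefore suffices to show that the head is explicitly positive and dominates $c\,p_5(\chi)$ for $\chi<\chi_0$.

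For the head I compute $\widetilde{h}_k(i)=(2i-k)\widetilde{p}_A(k,i)$ directly and use the second-difference identity \eqref{eq:add:second_deriv_diff}. The values are as follows.
\begin{itemize}
\item For $k=0$ and $k=1$: $\widetilde{H}_0=0$ and $\widetilde{H}_1(\alpha)=\alpha$, since $\widetilde{h}_1=(0,1)$. Both second derivatives vanish.
\item For $k=2$: $\widetilde{h}_2=(0,0,2)$, so $\widetilde{H}_2=2\alpha^2$ and $\widetilde{H}''_2=4$.
\item For $k=3$: $\widetilde{h}_3=(0,-\tfrac13,\tfrac13,3)$, with second differences $1$ and $2$. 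Hence $\widetilde{H}''_3(\alpha)=6\bigl((1-\alpha)+2\alpha\bigr)\ge 6$.
\item For $k=4$: $\widetilde{h}_4=(0,-\tfrac12,0,1,4)$, with second differences $1,\tfrac12,2$. Hence $\widetilde{H}''_4(\alpha)\ge 12\cdot\tfrac12=6$.
\end{itemize}
All head contributions are therefore nonnegative, and
\begin{equation*}
\sum_{k=0}^{4}p_k(\chi)\widetilde{H}''_k(\alpha)\ \ge\ e^{-\chi}\Bigl(2\chi^2+\chi^3+\tfrac14\chi^4\Bigr).
\end{equation*}

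It remains to check that
\begin{equation*}
e^{-\chi}\bigl(2\chi^2+\chi^3+\tfrac14\chi^4\bigr)\ \ge\ c\,e^{-\chi}\frac{\chi^5}{120}.
\end{equation*}
Dividing by $e^{-\chi}\chi^2$, this is equivalent to $2+\chi+\chi^2/4\ge (c/120)\chi^3\approx 0.544\,\chi^3$. The left side is increasing in $\chi$ and the right side is also increasing, so I only need to verify it at the right end of the range, $\chi=8/5>\chi_0$. There the left side is $4.24$ and the right side is about $2.23$. A slightly more careful argument, checking that the difference stays positive on $(0,8/5]$, closes it. For instance, $2\ge 0.544\chi^3$ already holds for $\chi\le 1.54$, and the extra terms cover the remaining range. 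Consequently $\widetilde{D}''_\chi(\alpha)>0$ on $[0,1]$, which gives convexity.

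The main obstacle is the bookkeeping in the small-$k$ second differences. Everything hinges on each of them being nonnegative, in particular at the breakpoints $i=1$ and $i=k/2$ of $\widetilde{p}_A$, where the value $1/k$ and the zero at $i=k/2$ enter. A single negative entry would force a much finer comparison with the tail. Because $c$ is fairly large, the margin comes mainly from the $k=2$ term $4p_2$, which is why the explicit constant $8/5$ used in \Cref{lem:add:convexity_tail} matters.
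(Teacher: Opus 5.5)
Your proposal is correct and is essentially the paper's proof. You split $\widetilde{D}''_\chi$ at $k=5$ and compute $\widetilde{H}''_2=4$, $\widetilde{H}''_3=6(1+\alpha)$ and $\widetilde{H}''_4=12(2\alpha^2-\alpha+1)$, using the cruder bound $6$ for the last one instead of the paper's $21/2$, which still suffices. You then absorb the tail with \Cref{lem:add:convexity_tail} and verify the resulting polynomial inequality on $(0,8/5]$. The remark that both sides are increasing, so checking the right endpoint suffices, is not valid on its own. Your follow-up split at $\chi\approx 1.54$ (the bound $2\ge 0.544\chi^3$ below it, and left side $\ge 3.54$ versus right side $\le 2.23$ above it) does correctly close the inequality.
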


\begin{proof}
We have $\widetilde{D}_\chi(\alpha)=\sum_{k=0}^\infty p_k(\chi)\widetilde{H}_k(\alpha)$, hence
\begin{equation*}
\widetilde{D}_\chi''(\alpha)=\sum_{k=0}^\infty p_k(\chi)\,\widetilde{H}_k''(\alpha).
\end{equation*}
For $k=0,1$ we have $\widetilde{H}_k''(\alpha)=0$. For $k=2,3,4$, a direct computation from the definition of $\widetilde{p}_A$ yields
\begin{equation*}
\widetilde{H}_2(\alpha)=2\alpha^2,\qquad
\widetilde{H}_3(\alpha)=\alpha^3+3\alpha^2-\alpha,\qquad
\widetilde{H}_4(\alpha)=2\alpha^4-2\alpha^3+6\alpha^2-2\alpha,
\end{equation*}
and therefore
\begin{equation*}
\widetilde{H}_2''(\alpha)=4,\qquad
\widetilde{H}_3''(\alpha)=6\alpha+6\ge 6,\qquad
\widetilde{H}_4''(\alpha)=24\left(\alpha - \frac{1}{4}\right)^2 + \frac{21}{2} \ge \frac{21}{2}.
\end{equation*}
Using \Cref{lem:add:convexity_tail} (with tail starting at $k=5$) we obtain, for all $\alpha\in[0,1]$,
\begin{align*}
\widetilde{D}_\chi''(\alpha)
&= \sum_{k=2}^{4} p_k(\chi)\widetilde{H}_k''(\alpha) \;+\; \sum_{k=5}^{\infty} p_k(\chi)\widetilde{H}_k''(\alpha) \\
&\ge 4p_2(\chi)+6p_3(\chi)+\frac{21}{2}p_4(\chi) \;-\; c\,p_5(\chi).
\end{align*}

Since $p_k(\chi)=e^{-\chi}\chi^k/k!$, the right-hand side equals $e^{-\chi}Q(\chi)$ with
\begin{equation*}
Q(\chi)=2\chi^2+\chi^3+\frac{21}{48}\chi^4-\frac{c}{120}\chi^5
=2\chi^2+\chi^3+\frac{7}{16}\chi^4-\frac{723}{1331}\chi^5,
\end{equation*}
where we used $c=\frac{86760}{1331}$ from \Cref{lem:add:convexity_tail}.
We claim that $Q(\chi)>0$ for all $\chi\in(0,8/5]$, and hence in particular for all $\chi\in(0,\chi_0)$.

Indeed, write 
\begin{equation*}
Q(\chi) = \chi^2\bigl(2 + \chi R(\chi)\bigr),
\qquad R(\chi)\coloneqq 1 + \frac{7}{16}\chi - \frac{723}{1331}\chi^2.
\end{equation*}
The function $R$ is a concave quadratic, so its minimum on $[0,8/5]$ is attained at one of the endpoints.
Moreover,
\begin{equation*}
R(0) = 1\qquad\text{and}\qquad R(8/5)=\frac{20591}{66550}>0.
\end{equation*}
Hence $R(\chi)>0$ for all $\chi\in[0,8/5]$, and therefore
\begin{equation*}
Q(\chi) = \chi^2(2+\chi R(\chi))>0\qquad\text{for all }\chi\in(0,8/5).
\end{equation*}

Consequently, $\widetilde{D}_\chi''(\alpha)\ge e^{-\chi}Q(\chi)>0$ for all $\alpha\in[0,1]$ and $\chi\in(0,\chi_0)$, which proves convexity.
\end{proof}

We are now ready to put everything together. As announced earlier, convexity implies that the drift is positive for all $\alpha \in (0,1]$.

\begin{corollary}
\label{cor:add:global_positivity_tilde}
Let $\chi\in(0,\chi_0)$ and set $g(\chi)\coloneqq 2-\chi-2e^{-\chi}>0$.
Then for all $\alpha\in[0,1]$,
\begin{equation*}
\widetilde{D}_\chi(\alpha)\ge \alpha\,\partial_+\widetilde{D}_\chi(0)
=\alpha\,g(\chi),
\end{equation*}
and in particular $\widetilde{D}_\chi(\alpha)> 0$ for all $\alpha\in(0,1]$.
\end{corollary}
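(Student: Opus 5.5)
The plan is to combine convexity from \Cref{lem:add:tildeD_convex} with the value and slope at the origin. We know $\widetilde{D}_\chi(0)=0$: at $\alpha=0$ only the $i=0$ terms survive, and $\widetilde{p}_A(k,0)=0$. By \Cref{lem:add:right_derivative_tildeD} we also know $\partial_+\widetilde{D}_\chi(0)=g(\chi)$, and this is strictly positive for $\chi\in(0,\chi_0)$. The claim is then the tangent-line inequality for convex functions at the left endpoint.

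The first step is to note that $\widetilde{D}_\chi$ is continuous on $[0,1]$, indeed twice differentiable. Each $\widetilde{H}_k$ is a polynomial whose coefficients and derivatives up to second order are bounded polynomially in $k$. The Poisson weights decay superexponentially, so the series and its termwise derivatives converge uniformly. This is the same justification already implicit in \Cref{lem:add:tildeD_convex}.

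The second step is the convexity argument. For $\alpha\in(0,1]$ and $s\in(0,1)$, convexity gives
\[
\widetilde{D}_\chi(s\alpha)\le s\,\widetilde{D}_\chi(\alpha)+(1-s)\widetilde{D}_\chi(0)=s\,\widetilde{D}_\chi(\alpha).
\]
Rearranging, $\widetilde{D}_\chi(\alpha)/\alpha \ge \widetilde{D}_\chi(s\alpha)/(s\alpha)$. Letting $s\to 0^+$, the right-hand side tends to $\partial_+\widetilde{D}_\chi(0)=g(\chi)$. Hence $\widetilde{D}_\chi(\alpha)\ge \alpha g(\chi)$. The case $\alpha=0$ is trivial.

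Positivity for $\alpha\in(0,1]$ then follows immediately, since $\alpha>0$ and $g(\chi)>0$. There is no real obstacle here. The only point needing care is to make sure the right derivative computed in \Cref{lem:add:right_derivative_tildeD} is genuinely the limit of difference quotients, and that convexity holds on the closed interval including the endpoint $0$. Both follow from the smoothness of the uniformly convergent series noted in the first step.
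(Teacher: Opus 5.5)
Your proposal is correct and takes the same approach as the paper: convexity of $\widetilde{D}_\chi$ on $[0,1]$ from \Cref{lem:add:tildeD_convex}, together with $\widetilde{D}_\chi(0)=0$ and the slope $\partial_+\widetilde{D}_\chi(0)=g(\chi)>0$ from \Cref{lem:add:right_derivative_tildeD}, gives the supporting-line bound $\widetilde{D}_\chi(\alpha)\ge \alpha g(\chi)$. Your chord-slope argument with $s\to 0^+$ just writes out the tangent-line inequality at the left endpoint, which the paper states in one line.
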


\begin{proof}
By \Cref{lem:add:tildeD_convex}, the function $\widetilde{D}_\chi$ is convex on $[0,1]$ and satisfies $\widetilde{D}_\chi(0)=0$.
Hence $\widetilde{D}_\chi(\alpha)\ge \alpha\,\partial_+\widetilde{D}_\chi(0)$ for all $\alpha\in[0,1]$.
The derivative is given by \Cref{lem:add:right_derivative_tildeD}, and $g(\chi)>0$ for $\chi\in(0,\chi_0)$ by definition of $\chi_0$.
\end{proof}

We are now ready to prove the main theorem of this section.

\subsection{Proof of \Cref{thm:add:main}}

\begin{proof}
Fix $\chi\in(0,\chi_0)$ and write $g(\chi)\coloneqq 2-\chi-2e^{-\chi}>0$.
By combining \Cref{lem:add:tilde_approximative_drift} and \Cref{cor:add:global_positivity_tilde}, we have for all $\alpha\in[0,1]$,
\begin{equation}\label{eq:add:global_positivity}
D_\chi(\alpha)\ge g(\chi)\alpha.
\end{equation}
By \Cref{prop:pre:drift_convergence}, there exists a constant $c(\chi)>0$ such that, uniformly in $\alpha\in[0,1]$,
\begin{equation}\label{eq:add:drift_convergence}
\abs*{\Delta(\lfloor \alpha n\rfloor)-D_\chi(\alpha)}\le \frac{c(\chi)}{n}.
\end{equation}
Let
\begin{equation*}
y_0\coloneqq \left\lceil \frac{2c(\chi)}{g(\chi)}\right\rceil.
\end{equation*}
Then for all $y\ge y_0$, setting $\alpha=y/n$ and using \eqref{eq:add:global_positivity} and \eqref{eq:add:drift_convergence} yields
\begin{equation}\label{eq:add:mult_drift_lower}
\Delta(y)\ge g(\chi)\frac{y}{n}-\frac{c(\chi)}{n}\ge\frac{g(\chi)}{2n}\,y.
\end{equation}

Define the stopping time $T_{y_0}\coloneqq \min\{t\ge 0 : Y_t\le y_0\}$ and the shifted process
\begin{equation*}
Z_t \coloneqq \max\{Y_t-y_0,0\}.
\end{equation*}
Then $Z_t\in\{0,1,\dots,n-y_0\}$ and $T_{y_0}=\min\{t\ge 0: Z_t=0\}$.
Moreover, for all $s\ge 1$,
\begin{equation*}
\E{Z_t-Z_{t+1}\mid Z_t=s}
= \Delta(s+y_0)
\ge \frac{g(\chi)}{2n}(s+y_0)
\ge \frac{g(\chi)}{2n}\,s,
\end{equation*}
where we used \eqref{eq:add:mult_drift_lower}.
Hence \Cref{thm:pre:mult_drift_upper_tail} applies to $(Z_t)_{t\ge 0}$ and yields $T_{y_0}=\Oh{n\log n}$ a.a.s.
In particular, the exponential tail bound in \Cref{thm:pre:mult_drift_upper_tail} implies
\begin{equation}\label{eq:add:runtime_y0}
\E{T_{y_0}} = \Oh{n\log n}.
\end{equation}

To cover the remaining states $\{1,\dots,y_0\}$ we invoke \Cref{lem:pre:constant_drift}.
Its assumptions are satisfied since weights are strictly positive almost surely and the multiplicative drift lower bound
\eqref{eq:add:mult_drift_lower} holds for all $y\ge y_0$.
Consequently, for all $n$ sufficiently large,
\begin{equation}\label{eq:add:endgame_exp}
\sup_{1\le y\le y_0}\E{T\mid Y_0=y}=\Oh{n},
\end{equation}
and
\begin{equation}\label{eq:add:endgame_tail}
\sup_{1\le y\le y_0}\Prob{T>n\log n\mid Y_0=y}=\oh{1}.
\end{equation}

We now combine the two parts. Recall $T=\min\{t\ge 0:Y_t=0\}$ and $T_{y_0}=\min\{t\ge 0:Y_t\le y_0\}$.
At time $T_{y_0}$ we have $Y_{T_{y_0}}\in\{0,1,\dots,y_0\}$.
Since $(Y_t)_{t\ge 0}$ is a time-homogeneous Markov chain, conditioning on $Y_{T_{y_0}}$ yields
\begin{align*}
\E{T}
= \E{T_{y_0}} + \E{\E{T-T_{y_0}\mid Y_{T_{y_0}}}}
&\le \E{T_{y_0}} + \sup_{1\le y\le y_0}\E{T\mid Y_0=y} \\
&= \Oh{n\log n},
\end{align*}
where we used \eqref{eq:add:runtime_y0} and \eqref{eq:add:endgame_exp}.

Moreover, conditioning on $Y_{T_{y_0}}$ and using \eqref{eq:add:endgame_tail} gives
\begin{equation*}
\Prob{T-T_{y_0}>n\log n}
\le \sup_{1\le y\le y_0}\Prob{T>n\log n\mid Y_0=y}
= \oh{1}.
\end{equation*}
Together with $T_{y_0}=\Oh{n\log n}$ a.a.s., a union bound implies $T=\Oh{n\log n}$ a.a.s.
\end{proof}
\section{Dynamic Binary Value}
\label{sec:dbv}

In this chapter we analyse the $\opoEA$ on the dynamic binary value (DBV) problem \cite{lenglerMeier2024}.
In each generation, the weight vector is a uniformly random permutation of $(1,2,4,\dots,2^{n-1})$ so all weights are distinct and positive. As in \Cref{sec:preliminaries}, we study the Markov chain $(Y_t)_{t\ge 0}$ where
$Y_t$ denotes the number of zero-bits of the current search point.

\paragraph{Motivation and Results.}
Vermetten~et~al.\ \cite{vermettenEmpiricalAnalysisDynamic2024} report that for larger mutation rates the $\opoEA$ fails to get close to the optimum on DBV even with a generous evaluation budget of $1000n$.
A structural explanation was given in~\cite{lengler1+1EANoisyLinear2018,lenglerMeier2024}, where it was shown that above the critical mutation parameter $\chi_\mathrm{dbv}$, the drift is negative close to the optimum and thus the algorithm does not come close to the optimum. However, those papers did not make any statement on where exactly the algorithm would stall.
We give a substantially more refined picture. For $\chi > \chi_\mathrm{dbv}$, define
\begin{equation}
\label{eq:dbv:alpha_star_cont}
    \alpha^*(\chi)
    \coloneqq \frac{2-\chi-2e^{-\chi}}{2-2\chi-2e^{-\chi}}.
\end{equation}
Then we show that the $\opoEA$ with mutation rate $\chi/n$ reaches distance $\alpha^*(\chi)$ quickly, but then progress comes to a complete halt for an exponential time.


\begin{theorem}
\label{thm:dbv:main}
Let $\chi_\mathrm{dbv}\approx 1.59362$ be the unique strictly positive root of $2-\chi-2e^{-\chi}$.
Consider the $\opoEA$ with mutation parameter $\chi$ on DBV.
\begin{enumerate}[leftmargin=*,label=(\arabic*)]
    \item For every fixed $0<\chi<\chi_\mathrm{dbv}$, the runtime is $\Oh{n\log n}$ in expectation and a.a.s.
    \item For every fixed $\chi>\chi_\mathrm{dbv}$, the runtime is $2^{\Om{n}}$ in expectation and a.a.s. \item For fixed $\chi>\chi_\mathrm{dbv}$, consider $\alpha^*(\chi)$ from \Cref{eq:dbv:alpha_star_cont}. For every fixed $\alpha\in[0,\alpha^*(\chi))$ the hitting time of $\{0,1,\dots,\lfloor \alpha n\rfloor\}$ is $2^{\Om{n}}$ in expectation and a.a.s., and for every fixed $\alpha\in(\alpha^*(\chi),1]$ the hitting time of $\{0,1,\dots,\lfloor \alpha n\rfloor\}$ is $\Oh{n}$ in expectation and a.a.s.
    \end{enumerate}
\end{theorem}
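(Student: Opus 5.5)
The plan is to compute the approximate drift $D_\chi(\alpha)$ for DBV in closed form and read off all three parts from the sign pattern of this explicit function.

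\emph{Computing $D_\chi$.} The key observation is that all weights are distinct powers of two. Hence $2^m>\sum_{j<m}2^j$, and the comparison between parent and offspring is decided entirely by the flipped position carrying the largest weight. The offspring is accepted iff that position was a zero-bit. Conditional on $(K=k,I=i)$ with $k\ge 1$, permutation invariance makes this position uniform among the $k$ flipped ones, so $p_A(k,i)=i/k$. With $I\sim\Bin(k,\alpha)$ we get
\begin{equation*}
\E{(2I-k)\tfrac{I}{k}}=\tfrac{1}{k}\bigl(2\E{I^2}-k\E{I}\bigr)=2\alpha(1-\alpha)+k(2\alpha^2-\alpha).
\end{equation*}
Summing over $k\ge1$ against $\Poi(\chi)$ (the $k=0$ term vanishes) gives
\begin{equation*}
D_\chi(\alpha)=2(1-e^{-\chi})\alpha(1-\alpha)+\chi(2\alpha^2-\alpha)=\alpha\bigl[(2-\chi-2e^{-\chi})+\alpha(2\chi-2+2e^{-\chi})\bigr].
\end{equation*}
For $\chi>\chi_\mathrm{dbv}$ the constant term $2-\chi-2e^{-\chi}$ is negative. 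The slope $2\chi-2+2e^{-\chi}=\chi-(2-\chi-2e^{-\chi})$ is then positive. So the bracket is increasing and linear with unique root $\alpha^*(\chi)$ from \eqref{eq:dbv:alpha_star_cont}, and $\alpha^*(\chi)\in(0,1/2)$. The bound $\alpha^*<1/2$ reduces to $2(2-\chi-2e^{-\chi})>2-2\chi-2e^{-\chi}$, i.e.\ $2>2e^{-\chi}$, which holds. Consequently $D_\chi<0$ on $(0,\alpha^*)$ and $D_\chi>0$ on $(\alpha^*,1]$, and on any compact subinterval these bounds are uniform constants.

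\emph{Parts (1) and (3).} Part (1) is \Cref{thm:add:main}, since DBV is permutation-invariant. For the lower bound in part (3), fix $\alpha<\alpha^*$ and choose $\alpha<a<b<\alpha^*$. On $[a,b]$ we have $D_\chi\le-2\eps$ for some $\eps>0$, so \Cref{lem:pre:simplified_drift_conditions} verifies the hypotheses of \Cref{thm:pre:simplified_drift} with $\ell=(b-a)n$ and constant $r=e^\chi$. Since $Y_0\sim\Bin(n,1/2)$ and $b<\alpha^*<1/2$, a Chernoff bound gives $Y_0\ge bn$ with probability $1-2^{-\Om{n}}$. Hence hitting $\lfloor an\rfloor\ge\lfloor\alpha n\rfloor$ takes time $2^{\Om{n}}$ with probability $1-2^{-\Om{n}}$, which also gives the bound in expectation.

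For the upper bound in part (3), fix $\alpha>\alpha^*$ (the case $\alpha=1$ is trivial). Then $\delta\coloneqq\min_{\beta\in[\alpha,1]}D_\chi(\beta)>0$. By \Cref{prop:pre:drift_convergence}, $\Delta(y)\ge\delta/2$ for all $y>\lfloor\alpha n\rfloor$ once $n$ is large. The additive drift theorem, applied to $Y_t$ stopped upon entering $\{0,\dots,\lfloor\alpha n\rfloor\}$, then yields expected hitting time at most $2n/\delta=\Oh{n}$. For the a.a.s.\ statement, Markov's inequality alone only gives $\Oh{n\,\omega(1)}$. I would therefore use a concentration version of additive drift: the steps have exponential tails $\Prob{\abs{Y_{t+1}-Y_t}\ge j}\le e^\chi 2^{-j}$ by \Cref{lem:pre:simplified_drift_conditions}(2). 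Then Azuma/Hoeffding for martingales with sub-exponential increments, applied to $Y_t+t\delta/2$, shows that after $Cn$ steps the process has descended by $\Om{n}$ with probability $1-\oh{1}$.

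\emph{Part (2) and the main obstacle.} Part (2) follows from the lower bound in part (3) with $\alpha=0$. More precisely, take $\alpha=1/n$-free constants $0<a<b<\alpha^*$: reaching $Y_t=0$ requires first passing below $an$, which takes $2^{\Om{n}}$ steps a.a.s., and the expectation bound follows. The closed-form drift makes the analysis itself routine. I expect the main technical point to be the a.a.s.\ $\Oh{n}$ bound in part (3), which needs the tail form of additive drift rather than just the expectation. A secondary point is justifying $p_A(k,i)=i/k$ carefully via the uniform position of the maximal weight among the flipped bits.
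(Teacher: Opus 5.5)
Your proof is correct. For part (2) and the lower bound in part (3) you follow the paper's route exactly: you use the closed form of $D_\chi$, which matches \Cref{prop:dbv:drift_closed_form}, then a compact interval $[a,b]\subset(0,\alpha^*)$ with uniformly negative drift, then \Cref{lem:pre:simplified_drift_conditions} together with the Simplified Drift Theorem, and finally a Chernoff bound on $Y_0$. Your check that $\alpha^*(\chi)<1/2$ is a useful addition. The Chernoff step needs $b<1/2$, but \Cref{cor:dbv:threshold_and_plateau} only establishes $\alpha^*\in(0,1)$.

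The differences are in parts (1) and (3). For part (1), you invoke the general \Cref{thm:add:main}. The paper instead uses the exact finite-$n$ drift \Cref{prop:dbv:discrete_drift_clean}, which gives $\Delta(y)\ge A(\chi)y/(2n)$ for \emph{all} $y\ge 1$ (\Cref{cor:dbv:finite_n_mult_drift}). One application of multiplicative drift then suffices, with no convexity machinery and no endgame \Cref{lem:pre:constant_drift}. You did not need \Cref{thm:add:main} at all. In your own closed form the $\alpha^2$-coefficient $2\chi-2+2e^{-\chi}$ is positive for every $\chi>0$, so $D_\chi(\alpha)\ge A(\chi)\alpha$ follows immediately, and the usual transfer argument finishes the proof.

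For the upper bound in part (3), the paper only asserts that positive drift yields time $\Oh{n}$. Your argument supplies the missing details:
\begin{itemize}
\item a uniform constant $\delta$ on $[\alpha,1]$ by compactness;
\item transfer to $\Delta$ via \Cref{prop:pre:drift_convergence};
\item optional stopping on $Y_{t\wedge T}+(\delta/2)(t\wedge T)$, where $Y_t\ge 0$ absorbs any overshoot below $\lfloor\alpha n\rfloor$ and gives $\E{T}\le 2n/\delta$.
\end{itemize}
You are also right that the a.a.s.\ claim needs a concentration bound for supermartingales with exponentially tailed increments. Markov's inequality alone only gives a constant failure probability at time $Cn$.
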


\begin{figure}[h]
    \centering
    \includegraphics[width=1.0\linewidth]{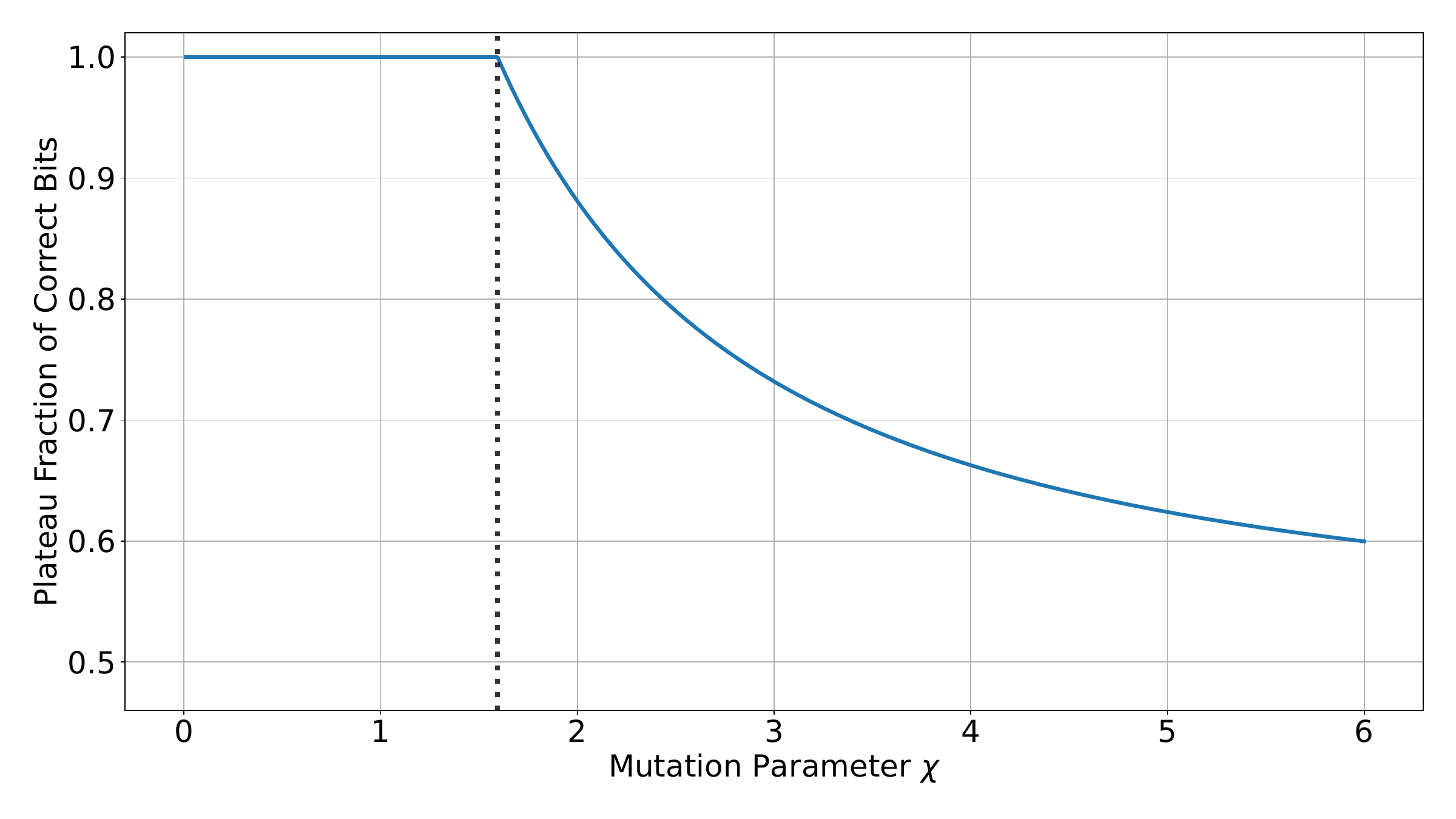}
    \caption{Visualization of the DBV plateau location from \Cref{thm:dbv:main}~(3). For $0<\chi< \chi_{\mathrm{dbv}}$, the $\opoEA$ reaches the optimum efficiently, so the plotted plateau fraction of correct bits is $1$. For $\chi > \chi_{\mathrm{dbv}}$, the algorithm gets stuck at plateau fraction $1-\alpha^*(\chi)$, where $\alpha^*(\chi)$ is given in \Cref{eq:dbv:alpha_star_cont}. The dotted line marks the critical threshold $\chi_{\mathrm{dbv}} \approx 1.59362$.}
    \label{fig:dbv:plateau}
\end{figure}

As discussed earlier, the proof in~\cite{lenglerMeier2024,lengler1+1EANoisyLinear2018} has a gap, but the gap only affected part (1) of the statement, not part (2). However, the statement there is slightly weaker than our result in (2), because it only claimed a superpolynomial runtime, whereas we show an exponential one. Therefore, we also provide a proof of (2), even though it is just a small extension compared to previous work. Moreover, in order for the section to be self-contained we also provide a proof of (1) even though it is also implied by the more general statement of Theorem~\ref{thm:add:main} in Section~\ref{sec:addendum}. As mentioned before, there is no analogue in the literature to part (3) of the statement. 

In Section~\ref{sec:dbv:finite_n} we will give explicit formulas for the drift and for the location of the plateau even for finite values of $n$.

\subsection{Acceptance Probability}
\label{sec:dbv:acceptance}

Recall from \Cref{eq:pre:acceptance_probability} that $p_A(k,i)$ denotes the probability of accepting the offspring conditional on
flipping exactly $k$ bits in total, among which exactly $i$ are zero-bits.

For DBV, the acceptance decision is determined by the flipped position with the largest weight.
Since all weights are distinct powers of two, the total fitness change is dominated by the largest-weight flip:
the offspring is accepted if and only if the largest-weight flipped bit changes from $0$ to $1$.
Under the random permutation model, conditional on flipping a fixed set of $k$ positions, the largest weight among these
positions is equally likely to be attached to any of the $k$ flipped positions.

\begin{proposition}
\label{prop:dbv:acceptance}
For DBV, for all integers $k\ge 1$ and $0\le i\le k$,
\begin{equation*}
    p_A(k,i)=\frac{i}{k}.
\end{equation*}
\end{proposition}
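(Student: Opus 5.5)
The plan is to condition on the set of flipped positions and the subset of flipped zero-bits, and reduce acceptance to an event about which flipped position carries the largest weight. Fix a generation $t$ and write $W\coloneqq W^{(t)}$. Condition on $(K=k,\,I=i)$ with $k\ge 1$. Let $F\subseteq[n]$ be the flipped positions, $\abs{F}=k$, and let $G\subseteq F$ be the flipped zero-bits, $\abs{G}=i$. As in the proof of \Cref{lem:add:pa_clean_bounds}, acceptance is the event
\begin{equation*}
    \sum_{j\in G} W_j \ge \sum_{j\in F\setminus G} W_j .
\end{equation*}

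First I will establish the deterministic domination fact. Let $j^*\coloneqq\arg\max_{j\in F}W_j$, which is unique because all weights are distinct powers of two. Write $W_{j^*}=2^m$; every other weight in $F$ is a distinct power $2^{m'}$ with $m'<m$. Hence
\begin{equation*}
    \sum_{j\in F\setminus\{j^*\}}W_j\le\sum_{m'=0}^{m-1}2^{m'}=2^m-1<W_{j^*}.
\end{equation*}
So if $j^*\in G$, the left-hand side is at least $W_{j^*}$ and the right-hand side is strictly smaller, so the offspring is accepted. If $j^*\notin G$, the right-hand side is at least $W_{j^*}$, while the left-hand side is strictly smaller, so the offspring is rejected. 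Equality never occurs, since $k\ge1$ and the two sides are sums over disjoint sets of distinct powers of two. Therefore $A=\{j^*\in G\}$ exactly. This also covers $i=0$, where $G=\emptyset$ and the probability is $0$, consistent with $p_A(k,0)=0$.

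Second, I will compute $\Prob{j^*\in G}$. The weight vector is a uniformly random permutation independent of the mutation, so conditional on $F$ (and on $G$) the restriction of the weights to $F$ is exchangeable. Hence $j^*$ is uniformly distributed on $F$, independently of which $i$-subset $G$ is. This gives $\Prob{j^*\in G\mid K=k,I=i}=i/k$. One could alternatively argue with $G$ a uniform $i$-subset of $F$ and $j^*$ fixed; either argument gives the same result.

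The only step needing care is the strict domination inequality and the absence of ties. The geometric-sum bound $2^m-1<2^m$ handles both, so I do not expect a genuine obstacle. The rest is the symmetry argument already used in the preceding proposition on permutation-invariant functions.
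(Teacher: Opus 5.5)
Your proof is correct and follows the paper's own argument: acceptance is exactly the event that the maximum-weight flipped position is a $0\to 1$ flip, and since the weights are a uniformly random permutation independent of the mutation, this position is uniform on the $k$ flipped positions, giving $i/k$. Your geometric-sum bound $2^m-1<2^m$, which also rules out ties, makes rigorous the domination claim that the paper states only informally before the proposition.
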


\begin{proof}
Fix $k\ge 1$ and condition on the event that mutation flips exactly $k$ positions, among which exactly $i$ positions are currently zero-bits.
Let $S$ be this set of flipped positions.
In DBV, the weight vector in the current generation is a uniformly random permutation of distinct weights, hence the relative order of weights on $S$ is uniform over all $k!$ permutations.
In particular, the position in $S$ carrying the maximum weight is uniformly random over the $k$ elements of $S$.
The offspring is accepted if and only if this maximum-weight flipped position is a $0\to 1$ flip, which holds with probability $i/k$.
\end{proof}

\subsection{Continuous Drift, Threshold, and Plateau}
\label{sec:dbv:continuous}

We now compute the continuous drift approximation $D(\chi,\alpha)$ from \Cref{eq:pre:approximate_drift} in closed form for DBV.

\begin{proposition}
\label{prop:dbv:drift_closed_form}
For DBV, for all $(\chi,\alpha)\in(0,\infty)\times[0,1]$,
\begin{equation}
\label{eq:dbv:drift_closed_form}
    D(\chi,\alpha)
    = \alpha\Bigl(\bigl(2-\chi-2e^{-\chi}\bigr)-\alpha\bigl(2-2\chi-2e^{-\chi}\bigr)\Bigr).
\end{equation}
\end{proposition}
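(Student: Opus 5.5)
We substitute $p_A(k,i)=i/k$ from \Cref{prop:dbv:acceptance} into the definition \eqref{eq:pre:approximate_drift}. This turns $D(\chi,\alpha)$ into a Poisson-weighted sum of binomial moments. We also use the convention $p_A(0,0)=0$, which is consistent with $i/k$ in the sense that the $k=0$ term vanishes anyway because $2i-k=0$. For $k\ge 1$ the inner sum becomes
\begin{equation*}
\sum_{i=0}^k (2i-k)\frac{i}{k}\binom{k}{i}\alpha^i(1-\alpha)^{k-i}
=\frac{1}{k}\,\E{2I^2-kI},\qquad I\sim\Bin(k,\alpha).
\end{equation*}
Using $\E{I}=k\alpha$ and $\E{I^2}=k\alpha(1-\alpha)+k^2\alpha^2$, this equals
\begin{equation*}
2\alpha(1-\alpha)+2k\alpha^2-k\alpha = 2\alpha-2\alpha^2+k(2\alpha^2-\alpha).
\end{equation*}
This is valid for all $k\ge1$, and the contribution for $k=0$ is zero.

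Next we sum over $k\ge 1$ with weights $p_k(\chi)=e^{-\chi}\chi^k/k!$. We need only the two Poisson sums $\sum_{k\ge1}p_k(\chi)=1-e^{-\chi}$ and $\sum_{k\ge1}k\,p_k(\chi)=\chi$. Note that the constant part is summed only from $k=1$, and this is exactly where the $e^{-\chi}$ terms come from. We obtain
\begin{equation*}
D(\chi,\alpha)=(2\alpha-2\alpha^2)(1-e^{-\chi})+\chi(2\alpha^2-\alpha).
\end{equation*}

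Finally we expand and collect powers of $\alpha$. The linear coefficient is $2-2e^{-\chi}-\chi$. The quadratic coefficient is $-2+2e^{-\chi}+2\chi=-(2-2\chi-2e^{-\chi})$. Factoring out $\alpha$ then gives \eqref{eq:dbv:drift_closed_form}.

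Interchanging the sums is justified by absolute convergence, since each inner sum is bounded by $k$ in absolute value and the Poisson distribution has finite mean. The only delicate point is the bookkeeping of the $k=0$ term. The rest is routine, and as a sanity check the linear coefficient agrees with $\partial_+D_\chi(0)$ from \Cref{prop:pre:right_derivative}.
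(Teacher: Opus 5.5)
Your proposal is correct and follows essentially the same route as the paper's appendix proof. Both substitute $p_A(k,i)=i/k$, evaluate the inner sum through the first two binomial moments to get $2\alpha(1-\alpha)+k(2\alpha^2-\alpha)$, sum against the Poisson weights over $k\ge 1$ using $1-e^{-\chi}$ and $\chi$, and regroup the coefficients of $\alpha$ and $\alpha^2$.
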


\begin{proof}
The proof is a direct computation and is deferred to \Cref{sec:app:dbv:drift_closed_form}.
\end{proof}

\paragraph{Roots and Sign.}
The expression \Cref{eq:dbv:drift_closed_form} is a quadratic polynomial in $\alpha$ and is already factorised.
In particular, $\alpha=0$ is always a root.
Solving the bracket in \Cref{eq:dbv:drift_closed_form} for~$\alpha$ yields the second root
\begin{equation*}
    \alpha^*(\chi)
    = \frac{2-\chi-2e^{-\chi}}{2-2\chi-2e^{-\chi}},
\end{equation*}
which matches the definition of $\alpha^*(\chi)$ in \eqref{eq:dbv:alpha_star_cont}. Since $2-2\chi-2e^{-\chi}<0$ for every $\chi>0$, the bracket is strictly increasing in~$\alpha$.
It is therefore enough to understand the sign of
\begin{equation*}
    A(\chi)\coloneqq 2-\chi-2e^{-\chi},
\end{equation*}
which is exactly the value of the bracket at $\alpha=0$.
Let $\chi_\mathrm{dbv}\approx 1.59362$ denote the unique strictly positive root of $A(\chi)$.

\begin{corollary}
\label{cor:dbv:threshold_and_plateau}
For $\chi\neq \chi_\mathrm{dbv}$, the drift has the following sign structure on $\alpha\in(0,1]$.
\begin{enumerate}[leftmargin=*,label=(\arabic*)]
    \item If $0<\chi<\chi_\mathrm{dbv}$, then $D_\chi(\alpha)>0$ for all $\alpha\in(0,1]$.
    \item If $\chi>\chi_\mathrm{dbv}$, then $\alpha^*(\chi)\in(0,1)$ and
    \begin{equation*}
        D_\chi(\alpha)<0\ \text{ for }\ \alpha\in(0,\alpha^*(\chi)),
        \quad
        D_\chi(\alpha)>0\ \text{ for }\ \alpha\in(\alpha^*(\chi),1].
    \end{equation*}
\end{enumerate}
In addition, $\alpha^*(\chi)\to \frac12$ as $\chi\to\infty$.
\end{corollary}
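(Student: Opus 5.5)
The plan is to read everything off the factorised closed form in \Cref{prop:dbv:drift_closed_form}. Write $A(\chi)=2-\chi-2e^{-\chi}$ and $B(\chi)\coloneqq 2-2\chi-2e^{-\chi}$, so that
\begin{equation*}
D_\chi(\alpha)=\alpha\bigl(A(\chi)-\alpha B(\chi)\bigr).
\end{equation*}
First we record that $B(\chi)<0$ for all $\chi>0$. Indeed, $B(0)=0$ and $B'(\chi)=-2+2e^{-\chi}<0$ for $\chi>0$. Hence the bracket $\beta(\alpha)\coloneqq A(\chi)-\alpha B(\chi)$ is strictly increasing in $\alpha$. Since the prefactor $\alpha$ is positive on $(0,1]$, the sign of $D_\chi$ there equals the sign of $\beta$. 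The sign of $A$ is given by \Cref{lem:add:right_derivative_tildeD}: $A>0$ on $(0,\chi_\mathrm{dbv})$ and $A<0$ on $(\chi_\mathrm{dbv},\infty)$.

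For part (1), let $0<\chi<\chi_\mathrm{dbv}$. Then $\beta(0)=A(\chi)>0$, and since $\beta$ is increasing we get $\beta(\alpha)>0$ on $[0,1]$. Therefore $D_\chi(\alpha)>0$ for all $\alpha\in(0,1]$.

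For part (2), let $\chi>\chi_\mathrm{dbv}$. Then $\beta(0)=A<0$, and the unique root of the affine function $\beta$ is $\alpha^*(\chi)=A/B$. This ratio is positive because both $A$ and $B$ are negative. To get $\alpha^*<1$, we need $A/B<1$, which, multiplying by $B<0$, is equivalent to $A>B$. But $A-B=\chi>0$, so this holds. Strict monotonicity of $\beta$ then gives $\beta<0$ on $(0,\alpha^*)$ and $\beta>0$ on $(\alpha^*,1]$. Multiplying by $\alpha>0$ yields the claimed sign pattern of $D_\chi$.

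For the limit, divide numerator and denominator by $\chi$:
\begin{equation*}
\alpha^*(\chi)=\frac{2/\chi-1-2e^{-\chi}/\chi}{2/\chi-2-2e^{-\chi}/\chi}\longrightarrow\frac{-1}{-2}=\frac12 \qquad (\chi\to\infty).
\end{equation*}

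No step is a real obstacle. The only point needing care is part (2), where one must keep track of the sign of $B$ when manipulating the inequality $\alpha^*<1$; the identity $A-B=\chi$ settles it cleanly.
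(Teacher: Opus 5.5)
Your proof is correct and follows the paper's argument essentially step for step. You factorise $D_\chi(\alpha)=\alpha\bigl(A(\chi)-\alpha B(\chi)\bigr)$, use $B<0$ so the bracket is increasing, take the root $\alpha^*=A/B$, and use $A-B=\chi>0$ to get $\alpha^*<1$. Your justification of $B<0$ via $B(0)=0$, $B'<0$, and your division by $\chi$ in the limit, are slightly more careful than the paper, which simply asserts $B<0$ and substitutes $e^{-\chi}\to 0$.
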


\begin{proof}
Fix $\chi>0$ and write $B(\chi)\coloneqq 2-2\chi-2e^{-\chi}$.
Then \Cref{eq:dbv:drift_closed_form} reads
\begin{equation*}
    D_\chi(\alpha)=\alpha\bigl(A(\chi)-\alpha B(\chi)\bigr).
\end{equation*}
For all $\chi>0$ we have $B(\chi)<0$, so the map $\alpha\mapsto A(\chi)-\alpha B(\chi)$ is strictly increasing on $[0,1]$.
If $A(\chi)>0$ (i.e.\ $\chi<\chi_\mathrm{dbv}$), then $A(\chi)-\alpha B(\chi)\ge A(\chi)>0$ for all $\alpha\in[0,1]$,
and thus $D_\chi(\alpha)>0$ for all $\alpha\in(0,1]$.

If $A(\chi)<0$ (i.e.\ $\chi>\chi_\mathrm{dbv}$), then the increasing function $\alpha\mapsto A(\chi)-\alpha B(\chi)$ has a unique root
\begin{equation*}
    \alpha^*(\chi)=\frac{A(\chi)}{B(\chi)}=\frac{2-\chi-2e^{-\chi}}{2-2\chi-2e^{-\chi}}.
\end{equation*}
Since both $A(\chi)$ and $B(\chi)$ are negative, we have $\alpha^*(\chi)>0$.
Moreover $\alpha^*(\chi)<1$ is equivalent to $A(\chi)>B(\chi)$, i.e.\ $\chi>0$.
Thus $\alpha^*(\chi)\in(0,1)$ and the sign change of $A(\chi)-\alpha B(\chi)$ yields the stated sign pattern for $D_\chi(\alpha)$ on $(0,1]$.

For $\chi\to\infty$, since $e^{-\chi}\to 0$, we have $\alpha^*(\chi)\to (2-\chi)/(2-2\chi)=\frac12$.
\end{proof}

\paragraph{Interpretation.}
At state $y=\alpha n$, the fraction of correct bits equals $1-\alpha$.
Thus for $\chi>\chi_\mathrm{dbv}$, the drift points away from the optimum throughout a whole neighbourhood of the optimum,
namely for all sufficiently small $\alpha$, up to the plateau $\alpha^*(\chi)$.
Equivalently, the process is pushed towards a constant fraction of wrong bits, and correspondingly stabilises around a constant fraction of correct bits $1-\alpha^*(\chi)$.

\subsection{Finite-$n$ Drift and Finite-$n$ Plateau}
\label{sec:dbv:finite_n}

A convenient property of DBV is that the exact drift admits a simple closed form for every finite~$n$.
This allows us to obtain a uniform multiplicative drift bound in the subcritical regime and quantify how the finite-$n$ plateau differs from $\alpha^*(\chi)$ in the approximated setting.

\paragraph{Finite-$n$ Analogue of $e^{-\chi}$.}
Let $K_n\sim\Bin(n,\chi/n)$ be the number of flipped bits in one mutation step and define
\begin{equation*}
    r_n(\chi)\coloneqq \Prob{K_n=0}
    =\Bigl(1-\frac{\chi}{n}\Bigr)^{\!n}.
\end{equation*}
In the continuous approximation we have $\Prob{\Poi(\chi)=0}=e^{-\chi}$. Thus $r_n(\chi)$ is the natural finite-$n$ analogue of $e^{-\chi}$.

\begin{proposition}
\label{prop:dbv:discrete_drift_clean}
Fix $\chi>0$ and $n\ge 2$.
For DBV, for all $y\in\{0,1,\dots,n\}$ with $\alpha\coloneqq y/n$,
\begin{equation}
\label{eq:dbv:discrete_drift_clean}
    \Delta(y)
    = \frac{n}{n-1}\,
    \alpha\Bigl(\bigl(2-\chi-2r_n(\chi)-\tfrac{\chi}{n}\bigr)
    -\alpha\bigl(2-2\chi-2r_n(\chi)\bigr)\Bigr).
\end{equation}
\end{proposition}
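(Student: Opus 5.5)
Since $p_A(k,i)=i/k$ by \Cref{prop:dbv:acceptance}, I will start from the exact representation \eqref{eq:pre:conditioned_explicit_drift} and avoid case analysis by using linearity of expectation over the flipped positions. Conditional on $K=k\ge 1$, the offspring is accepted if and only if the maximum-weight flipped position is a zero-bit. This position is a uniformly random element of the flipped set, independent of which positions were flipped. Hence
\begin{equation*}
    \Delta(y)=\E{(2I-K)\ind_A}=\E{(2I-K)\,\tfrac{I}{K}\,\ind_{K\ge1}}=\E{\Bigl(\tfrac{2I^2}{K}-I\Bigr)\ind_{K\ge 1}}.
\end{equation*}

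The next step is to compute the conditional moments of $I\mid K=k\sim\Hyp(n,y,k)$:
\begin{equation*}
    \E{I\mid K=k}=k\alpha,\qquad
    \E{I^2\mid K=k}=k\alpha+k(k-1)\frac{y(y-1)}{n(n-1)}.
\end{equation*}
Substituting gives
\begin{equation*}
    \E{\tfrac{2I^2}{k}-I\mid K=k}=2\alpha+2(k-1)\beta-k\alpha,
    \qquad \beta\coloneqq\frac{y(y-1)}{n(n-1)}.
\end{equation*}
This is linear in $k$, so the only nonlinearity is the indicator $\ind_{K\ge1}$.

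I then average over $K_n\sim\Bin(n,\chi/n)$ using $\E{K_n}=\chi$ and $\Prob{K_n\ge 1}=1-r_n$, which gives $\E{K_n\ind_{K_n\ge1}}=\chi$ and $\E{(K_n-1)\ind_{K_n\ge1}}=\chi-1+r_n$. Therefore
\begin{equation*}
    \Delta(y)=2\alpha(1-r_n)+2\beta(\chi-1+r_n)-\alpha\chi .
\end{equation*}

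The remaining step is algebra. Write $\beta=\frac{n}{n-1}\alpha(\alpha-\frac1n)$ and $\alpha=\frac{n}{n-1}\alpha\bigl(1-\frac1n\bigr)$, and factor out $\frac{n}{n-1}\alpha$. The $\alpha$-free part of the bracket is then
\begin{equation*}
    \Bigl(1-\tfrac1n\Bigr)(2-2r_n-\chi)-\tfrac{2}{n}(\chi-1+r_n).
\end{equation*}
This has to be checked against the claimed coefficient $2-\chi-2r_n-\frac{\chi}{n}$. The $1/n$ terms are $-\frac{1}{n}(2-2r_n-\chi)-\frac{2}{n}(\chi-1+r_n)=-\frac{\chi}{n}$, so they match. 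The $\alpha^2$ coefficient is $2(\chi-1+r_n)=-(2-2\chi-2r_n)$, which also matches. This verification of the $1/n$ corrections is the only delicate point: the argument hinges on it, and a sign or factor slip there would break the formula.

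I will also check the edge cases $y\in\{0,1\}$, where $\beta=0$. These are consistent with the formula, which vanishes at $y=0$, and the hypergeometric second-moment identity remains valid there, so no separate treatment is needed.
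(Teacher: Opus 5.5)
Your proof is correct and follows essentially the same route as the paper. Both condition on $K=k\ge 1$, use $p_A(k,i)=i/k$ to reduce to the first two moments of $I\sim\Hyp(n,y,k)$, average the resulting expression (linear in $k$) over $K_n\sim\Bin(n,\chi/n)$ using $\mathbb{E}[K_n]=\chi$ and $\mathbb{P}[K_n\ge 1]=1-r_n(\chi)$, and then regroup; the only cosmetic difference is that you use the factorial moment $\mathbb{E}[I(I-1)]=k(k-1)\frac{y(y-1)}{n(n-1)}$ where the paper uses the hypergeometric variance $k\alpha(1-\alpha)\frac{n-k}{n-1}$, and your final coefficient check is right.
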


\begin{proof}
The proof is a direct computation and is deferred to \Cref{sec:app:dbv:discrete_drift_clean}.
\end{proof}

\paragraph{Roots.}
The expression \Cref{eq:dbv:discrete_drift_clean} is a quadratic polynomial in $\alpha$ and is already factorised.
In particular, $\alpha=0$ is always a root.
The second root is obtained by solving the bracket in \Cref{eq:dbv:discrete_drift_clean} for~$\alpha$, namely
\begin{equation}
\label{eq:dbv:alpha_n_star}
    \alpha_n^*(\chi)
    \coloneqq
    \frac{2-\chi-2(1-\chi/n)^n-\chi/n}{2-2\chi-2(1-\chi/n)^n}.
\end{equation}

Note that for every $\chi>0$ and all sufficiently large $n$ we have $2-2\chi-2(1-\chi/n)^n<0$,
so the bracket in \Cref{eq:dbv:discrete_drift_clean} is strictly increasing in~$\alpha$.

\begin{corollary}
\label{cor:dbv:finite_n_mult_drift}
Fix $\chi\in(0,\chi_\mathrm{dbv})$.
Then for all sufficiently large $n$ and all $y\in\{1,\dots,n\}$,
\begin{equation*}
    \Delta(y)\ge \frac{A(\chi)}{2n}\,y.
\end{equation*}
\end{corollary}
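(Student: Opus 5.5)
We use the exact finite-$n$ formula of \Cref{prop:dbv:discrete_drift_clean} directly. Writing
\begin{equation*}
A_n(\chi)\coloneqq 2-\chi-2r_n(\chi)-\tfrac{\chi}{n},\qquad B_n(\chi)\coloneqq 2-2\chi-2r_n(\chi),
\end{equation*}
the formula \eqref{eq:dbv:discrete_drift_clean} reads
\begin{equation*}
\Delta(y)=\frac{n}{n-1}\,\alpha\bigl(A_n(\chi)-\alpha B_n(\chi)\bigr),\qquad \alpha=y/n.
\end{equation*}
The plan has three steps. First show that the bracket is at least $A_n(\chi)$ on $[0,1]$. Then show that $A_n(\chi)\ge A(\chi)/2$ for large $n$. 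Finally, multiply out.

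\textbf{Step 1 (sign of $B_n$).} Since $r_n(\chi)=(1-\chi/n)^n\to e^{-\chi}$, we have $B_n(\chi)\to B(\chi)=2-2\chi-2e^{-\chi}$. We also have $B(\chi)<0$ for every $\chi>0$, because $e^{-\chi}>1-\chi$. Hence $B_n(\chi)<0$ for all sufficiently large $n$, as already noted after \eqref{eq:dbv:alpha_n_star}.

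There is also a direct route. For $n\ge\chi$, the inequality $1-x\le e^{-x}$ gives $r_n(\chi)\le e^{-\chi}$. It also gives $r_n(\chi)\ge 1-\chi$ by Bernoulli's inequality, strictly for $n\ge 2$. Thus $B_n(\chi)\le 0$ outright.

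Either way, for large $n$ the bracket $A_n(\chi)-\alpha B_n(\chi)$ is nondecreasing in $\alpha$. So for all $\alpha\in[0,1]$ it is at least its value at $\alpha=0$, namely $A_n(\chi)$.

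\textbf{Step 2 (lower bound on $A_n$).} As $n\to\infty$,
\begin{equation*}
A_n(\chi)=A(\chi)-2\bigl(r_n(\chi)-e^{-\chi}\bigr)-\chi/n\longrightarrow A(\chi),
\end{equation*}
and $A(\chi)>0$ because $\chi<\chi_\mathrm{dbv}$. Hence there is $n_0$ with $A_n(\chi)\ge A(\chi)/2$ for all $n\ge n_0$.

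This step can even be made explicit with the right sign. Since $r_n(\chi)\le e^{-\chi}$, we get $A_n(\chi)\ge A(\chi)-\chi/n$, so $n\ge 2\chi/A(\chi)$ suffices.

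\textbf{Step 3 (conclusion).} Combining the two steps, for large $n$ and all $y\in\{1,\dots,n\}$,
\begin{equation*}
\Delta(y)\ge \frac{n}{n-1}\cdot\frac{y}{n}\cdot\frac{A(\chi)}{2}\ge \frac{A(\chi)}{2n}\,y,
\end{equation*}
using $n/(n-1)\ge 1$.

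\textbf{Main obstacle.} The only delicate point is uniformity in $y$, which would fail if we tried to go through the continuous approximation. The additive $\Oh{1/n}$ error in \Cref{prop:pre:drift_convergence} would swamp the drift at constant $y$. Using the exact formula avoids this entirely, since the only $n$-dependence sits in the constants $A_n(\chi)$ and $B_n(\chi)$, which do not depend on $y$. So the main work is just checking the monotonicity of $r_n(\chi)$ against $e^{-\chi}$ used in Steps 1 and 2.
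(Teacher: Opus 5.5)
Your proof is correct and follows essentially the same route as the paper: you drop the nonnegative term $-\alpha B_n(\chi)$ using $B_n(\chi)<0$, use $r_n(\chi)\le e^{-\chi}$ to get $A_n(\chi)\ge A(\chi)-\chi/n\ge A(\chi)/2$, and discard the factor $n/(n-1)\ge 1$. Your Bernoulli-inequality argument for $B_n(\chi)<0$ is a small addition that makes the threshold on $n$ explicit (e.g.\ $n\ge\max\{2,2\chi/A(\chi)\}$), whereas the paper only argues via the limit $r_n(\chi)\to e^{-\chi}$.
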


\begin{proof}
Recall $\alpha=y/n$ and $r_n(\chi)=(1-\chi/n)^n$.
From \Cref{eq:dbv:discrete_drift_clean},
\begin{equation*}
    \Delta(y)=\frac{n}{n-1}\,\alpha\Bigl(\bigl(2-\chi-2r_n(\chi)-\tfrac{\chi}{n}\bigr)
      -\alpha\bigl(2-2\chi-2r_n(\chi)\bigr)\Bigr).
\end{equation*}
For fixed $\chi>0$ we have $r_n(\chi)\to e^{-\chi}$ as $n\to\infty$, hence
\begin{equation*}
    2-2\chi-2r_n(\chi)\to 2-2\chi-2e^{-\chi}<0.
\end{equation*}
Therefore, for all sufficiently large $n$ we have $2-2\chi-2r_n(\chi)<0$, and thus
\begin{equation*}
    -\alpha\bigl(2-2\chi-2r_n(\chi)\bigr)\ge 0.
\end{equation*}
Consequently,
\begin{equation*}
    \Delta(y)\ge \frac{n}{n-1}\,\alpha\bigl(2-\chi-2r_n(\chi)-\tfrac{\chi}{n}\bigr)
    \ge \alpha\bigl(2-\chi-2r_n(\chi)-\tfrac{\chi}{n}\bigr).
\end{equation*}

Moreover, $r_n(\chi)\le e^{-\chi}$ implies $2-\chi-2r_n(\chi)\ge 2-\chi-2e^{-\chi}$.
Choose $n_0$ such that $\chi/n\le (2-\chi-2e^{-\chi})/2$ for all $n\ge n_0$.
Then for $n\ge n_0$,
\begin{equation*}
    \Delta(y)\ge \alpha\,\frac{2-\chi-2e^{-\chi}}{2}
    =\frac{2-\chi-2e^{-\chi}}{2n}\,y.
\end{equation*}
\end{proof}

\paragraph{The Difference between $\alpha^*_n(\chi)$ and $\alpha^*(\chi)$}
For $\chi>\chi_\mathrm{dbv}$, both the limiting plateau $\alpha^*(\chi)$ from \Cref{eq:dbv:alpha_star_cont}
and the finite-$n$ plateau $\alpha_n^*(\chi)$ from \Cref{eq:dbv:alpha_n_star} are well-defined.
The next lemma shows that $\alpha_n^*(\chi)=\alpha^*(\chi)\pm \Oh{1/n}$.


\begin{corollary}
\label{cor:dbv:alpha_star_shift}
For every fixed $\chi>\chi_\mathrm{dbv}$ we have $\alpha^*_n(\chi) = \alpha^*(\chi) \pm \Oh{1/n}$.
\end{corollary}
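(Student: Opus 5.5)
The claim follows from comparing the two closed-form ratios. Write $A(\chi)=2-\chi-2e^{-\chi}$ and $B(\chi)=2-2\chi-2e^{-\chi}$, so that $\alpha^*(\chi)=A(\chi)/B(\chi)$. Write $A_n(\chi)=2-\chi-2r_n(\chi)-\chi/n$ and $B_n(\chi)=2-2\chi-2r_n(\chi)$, so that $\alpha_n^*(\chi)=A_n(\chi)/B_n(\chi)$ by \eqref{eq:dbv:alpha_n_star}. The whole argument reduces to showing $A_n-A=\Oh{1/n}$ and $B_n-B=\Oh{1/n}$, together with $B$ being bounded away from zero.

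\textbf{Step 1: estimate $r_n(\chi)-e^{-\chi}$.} Use $\ln(1-x)=-x-x^2/2-\dots$ for $x=\chi/n\le 1/2$, which gives $n\ln(1-\chi/n)=-\chi-\Oh{\chi^2/n}$. Exponentiating and using $|e^{-u}-1|\le u$ for $u\ge 0$ yields
\[
0\le e^{-\chi}-r_n(\chi)\le e^{-\chi}\bigl(1-e^{-\chi^2/n}\bigr)\le \frac{\chi^2 e^{-\chi}}{n}
\]
for $n\ge 2\chi$. (The more careful bound $\ln(1-x)\ge -x-x^2$ for $x\le 1/2$ suffices here.) Consequently
\[
|A_n-A|\le 2|r_n-e^{-\chi}|+\frac{\chi}{n}=\Oh{1/n}
\quad\text{and}\quad
|B_n-B|\le 2|r_n-e^{-\chi}|=\Oh{1/n},
\]
with constants depending only on $\chi$.

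\textbf{Step 2: control the denominator.} Since $B(\chi)<0$ for all $\chi>0$ (as noted before \Cref{cor:dbv:threshold_and_plateau}), set $\beta\coloneqq|B(\chi)|>0$. By Step 1, $|B_n|\ge \beta/2$ for all sufficiently large $n$. This step only needs $B$ bounded away from zero uniformly in $n$, which is immediate for fixed $\chi$. Also $|A(\chi)|\le 2+\chi$ is a constant.

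\textbf{Step 3: compare the ratios.} Write
\[
\alpha_n^*-\alpha^*=\frac{A_n}{B_n}-\frac{A}{B}=\frac{(A_n-A)B-A(B_n-B)}{B_nB}.
\]
The numerator is $\Oh{1/n}$ by Step 1 and boundedness of $A$ and $B$. The denominator satisfies $|B_nB|\ge \beta^2/2$ by Step 2. Hence $|\alpha_n^*(\chi)-\alpha^*(\chi)|=\Oh{1/n}$.

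The assumption $\chi>\chi_{\mathrm{dbv}}$ is used only so that both plateaus lie in $(0,1)$ and are meaningful; the estimate itself holds for every fixed $\chi>0$. The only mildly delicate point is the quantitative bound on $r_n(\chi)-e^{-\chi}$ in Step 1, and it is routine.
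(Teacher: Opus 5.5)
Your proposal is correct and matches the paper's proof step for step: the same quotient identity $\frac{a_n}{b_n}-\frac{a}{b}=\frac{(a_n-a)b-a(b_n-b)}{bb_n}$, the same $\Oh{1/n}$ bounds on the numerator and denominator differences, and the same bound $|b_n|\ge |b|/2$ for large $n$. The only cosmetic difference is how you get $r_n(\chi)=e^{-\chi}+\Oh{1/n}$: you use the elementary inequality $\ln(1-x)\ge -x-x^2$ on $[0,1/2]$, whereas the paper cites the sandwich $(1-\chi/n)^n\le e^{-\chi}\le(1-\chi/n)^{n-\chi}$.
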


\begin{proof}
    By \Cref{eq:dbv:alpha_star_cont} and \Cref{eq:dbv:alpha_n_star}, we may write
    \begin{equation*}
         \alpha^*(\chi) = \frac{a}{b},\qquad\alpha_n^*(\chi) = \frac{a_n}{b_n},
    \end{equation*}
    where 
    \begin{equation*}
        a \coloneqq 2 - \chi - 2e^{-\chi},\qquad b \coloneqq 2 - 2\chi - 2e^{-\chi},
    \end{equation*}
    and
    \begin{equation*}
        a_n \coloneqq 2 - \chi - 2(1-\chi/n)^n-\chi/n,\qquad b_n \coloneqq 2 - 2\chi - 2(1 - \chi/n)^n.
    \end{equation*}
    By \cite[Corollary~1.4.6]{doerr2019probabilistic}, for all $n\ge \chi$,
    \begin{equation*}
        \left(1-\frac{\chi}{n}\right)^n \le e^{-\chi}\le \left(1-\frac{\chi}{n}\right)^{n-\chi},
    \end{equation*}
    which implies, for fixed $\chi$,
    \begin{equation*}
        \Big(1-\frac{\chi}{n}\Big)^n = e^{-\chi} + \Oh{1/n}.
    \end{equation*}
    Therefore
    \begin{equation}\label{eq:dbv:num_denom_diff}
        a_n - a = \Oh{1/n}\qquad\text{and}\qquad b_n - b = \Oh{1/n}.
    \end{equation}
    Since $\chi > \chi_{\mathrm{dbv}}$, we have $a = 2 - \chi - 2e^{-\chi}<0$ and hence $b = a - \chi < 0$. As $b_n \to b$ for $n\to \infty$, we have $\abs{b_n}\ge \abs{b}/2$ for sufficiently large $n$.
    Finally,
    \begin{equation*}
        \alpha_n^*(\chi) - \alpha^*(\chi) = \frac{a_n}{b_n} - \frac{a}{b} = \frac{(a_n - a)b - a(b_n - b)}{bb_n}.
    \end{equation*}
    Using \Cref{eq:dbv:num_denom_diff} and the fact that $bb_n$ is bounded away from zero for all sufficiently large $n$, it follows that $\alpha_n^*(\chi) = \alpha^*(\chi) \pm \Oh{1/n}$.
\end{proof}

\subsection{Proof of \Cref{thm:dbv:main}}
\label{sec:dbv:proof}

\begin{proof}
Case $0<\chi<\chi_\mathrm{dbv}$.
Fix $\chi\in(0,\chi_\mathrm{dbv})$.
By \Cref{cor:dbv:finite_n_mult_drift}, for all sufficiently large $n$,
\begin{equation*}
    \E{Y_t-Y_{t+1}\mid Y_t=y}=\Delta(y)\ge \frac{A(\chi)}{2n}\,y
    \quad\text{for all }y\in\{1,\dots,n\}.
\end{equation*}
Thus \Cref{thm:pre:mult_drift_upper_tail} applies with $\delta=A(\chi)/(2n)>0$ and $s_{\min}=1$.
This yields $T=\Oh{n\log n}$ a.a.s.\ and $\E{T}=\Oh{n\log n}$.

Case $\chi>\chi_\mathrm{dbv}$. Fix $\chi>\chi_\mathrm{dbv}$ and let $\alpha^*(\chi)$ be as in \Cref{eq:dbv:alpha_star_cont}. 
Since the drift is positive for all $\alpha > \alpha^*(\chi)$, the $\opoEA$ reaches distance $\alpha > \alpha^*(\chi)$ in time $\Oh{n}$.

So let us consider constants $0<a<b<\alpha^*(\chi)$ and show that it takes exponential time to reach distance $an$ from the optimum. 
By \Cref{cor:dbv:threshold_and_plateau}, we have $D_\chi(\alpha)<0$ for all $\alpha\in(0,\alpha^*(\chi))$.
Hence, choosing $0<a<b<\alpha^*(\chi)$, continuity yields some $\eps>0$ such that
\begin{equation}
\label{eq:dbv:neg_interval}
    D_\chi(\alpha)\le -2\eps\quad\text{for all }\alpha\in[a,b].
\end{equation}
Applying \Cref{lem:pre:simplified_drift_conditions} yields that, for all sufficiently large $n$,
the hypotheses of \Cref{thm:pre:simplified_drift} hold for $(Y_t)_{t\ge 0}$ on the interval $[a_n,b_n]$ with
$a_n\coloneqq\lfloor an\rfloor$ and $b_n\coloneqq\lfloor bn\rfloor$. Indeed, condition~(1) holds with $\eps$ as above, and condition~(2) holds with $\delta=1$ and $r(\ell)=e^\chi$, where $\ell\coloneqq b_n - a_n$. Define the hitting time $T^*\coloneqq \min\SetBuilder{t\ge 0}{Y_t\le a_n}$.
Then \Cref{thm:pre:simplified_drift} implies that there exists a constant $c^*>0$ such that
\begin{equation*}
    \Prob{T^* \le 2^{c^*\ell/r(\ell)} \mid Y_0\ge b_n}
    = 2^{-\Om{\ell/r(\ell)}}
    = 2^{-\Om{n}},
\end{equation*}
for all sufficiently large $n$, where we used $r(\ell)=e^\chi\le\Oh{1}$ and $\ell \le (b-a)n\le\Oh{n}$.
Equivalently, with probability $1-2^{-\Om{n}}$ we have $T^*\ge 2^{\Om{n}}$ on the event $\{Y_0\ge b_n\}$.
Since the runtime $T$ satisfies $T\ge T^*$, the same lower bound holds for~$T$.

Under u.a.r.\ initialisation we have $Y_0\sim\Bin(n,1/2)$, and since $b<1/2$ is fixed, a Chernoff bound yields
$\Prob{Y_0\ge b_n}=1-e^{-\Om{n}}$.
Therefore $T=2^{\Om{n}}$ a.a.s.
Moreover,
\begin{equation*}
    \E{T}\ge \E{T\mid Y_0\ge b_n}\Prob{Y_0\ge b_n}\ge 2^{\Om{n}}(1-e^{-\Om{n}})=2^{\Om{n}}.
\end{equation*}
\end{proof}
\section{Uniform Weight Model}
\label{sec:uniform}

In this chapter we analyse the $\opoEA$ in the Uniform weight model, where in each generation $t$ the fitness function is
\begin{equation*}
    f^{(t)}(x)=\sum_{i=1}^n W_i^{(t)}x_i
    \quad\text{with}\quad
    W_1^{(t)},\dots,W_n^{(t)} \overset{\text{i.i.d.}}{\sim} \Unif(0,1).
\end{equation*}
As in the other cases, the weights are positive, so the unique optimum is $(1,\dots,1)$.
As in \Cref{sec:preliminaries}, we track progress via the Markov chain $(Y_t)_{t\ge 0}$, where $Y_t$ denotes the number of zero-bits of the current search point.

\paragraph{Motivation and Results.}
Vermetten~et~al.\ \cite{vermettenEmpiricalAnalysisDynamic2024} observe a stark difference between the Uniform weight model and the other variants:
for small mutation rates all considered settings behave similarly, since progress is dominated by one-bit mutations---a single flipped bit is accepted iff it flips a zero-bit.
For larger mutation rates, however, the Uniform weight model becomes visibly easier than the rest.
Our results identify a critical value $\chi_\mathrm{unif}\approx 2.76531$ such that, for every fixed $\chi<\chi_\mathrm{unif}$,
the $\opoEA$ optimises the uniform setting in $\Oh{n\log n}$ in expectation and a.a.s.
This is consistent with \cite[Figure~3]{vermettenEmpiricalAnalysisDynamic2024}, which suggests an intermediate range between
$\chi_\mathrm{dbv}\approx 1.59362$ and $\chi_\mathrm{unif}\approx 2.76531$ in which the optimum is reached efficiently for $\Unif(0,1)$ weights, while the other variants fail to be optimised.

\begin{theorem}
\label{thm:unif:main}
Let $\chi_\mathrm{unif} \approx 2.76531$ be the unique strictly positive root of $g(\chi)$ from \Cref{lem:unif:right_derivative}.
Consider the $\opoEA$ with mutation parameter $\chi$ on dynamic linear functions in the Uniform weight model.

For every fixed $0<\chi<\chi_\mathrm{unif}$ the runtime is $\Oh{n\log n}$ in expectation and a.a.s.
On the other hand, for every fixed $\chi>\chi_\mathrm{unif}$ the runtime is $2^{\Om{n}}$ in expectation and a.a.s.
\end{theorem}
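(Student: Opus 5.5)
The plan is to follow the two-regime template already used for \Cref{thm:dbv:main} and \Cref{thm:add:main}, working throughout with the approximate drift $D_\chi(\alpha)=\sum_k p_k(\chi)\sum_i(2i-k)b_{k,i}(\alpha)p_A(k,i)$. For the Uniform model I first make $p_A$ explicit through the Irwin--Hall distribution. Conditional on $(K=k,I=i)$, acceptance is the event that a sum of $i$ uniforms is at least an independent sum of $k-i$ uniforms. Since ties have probability zero, this gives the symmetry $p_A(k,i)+p_A(k,k-i)=1$. For $i=1$ the acceptance probability is
\[
p_A(k,1)=\Prob{U\ge S_{k-1}}=\int_0^1F_{k-1}(u)\,du=\frac{1}{k!}.
\]
By \Cref{prop:pre:right_derivative} this yields
\[
g(\chi)=\partial_+D_\chi(0)=\sum_{k\ge1}e^{-\chi}\frac{\chi^k}{k!}\cdot\frac{k(2-k)}{k!}.
\]
I would then check that $g$ has a unique positive root $\chi_\mathrm{unif}$, with $g>0$ before it and $g<0$ after it. Only the $k=1$ term is positive, and the negative terms dominate increasingly as $\chi$ grows, so $e^{\chi}g(\chi)/\chi$ should be strictly decreasing.

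\emph{Exponential regime ($\chi>\chi_\mathrm{unif}$).} Here $g(\chi)<0$. The drift $D_\chi$ is a power series in $\alpha$ whose coefficients are controlled by the Poisson weights, so its second derivative is bounded on $[0,1]$. Hence $D_\chi(\alpha)\le \alpha g(\chi)+C\alpha^2$, and therefore $D_\chi\le -2\eps$ on some interval $[a,b]$ with $0<a<b<1/2$. \Cref{lem:pre:simplified_drift_conditions} then supplies the hypotheses of \Cref{thm:pre:simplified_drift}. Because $Y_0\sim\Bin(n,1/2)$ exceeds $bn$ with probability $1-e^{-\Om{n}}$, the argument concludes verbatim as in the proof of \Cref{thm:dbv:main}~(2).

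\emph{Efficient regime ($\chi<\chi_\mathrm{unif}$).} The goal is a global bound $D_\chi(\alpha)\ge c\alpha$ on $[0,1]$ for some $c=c(\chi)>0$. Once this holds, the conclusion follows exactly as in the proof of \Cref{thm:add:main}: \Cref{prop:pre:drift_convergence} gives $\Delta(y)\ge \frac{c}{2n}y$ for $y\ge y_0$, then \Cref{thm:pre:mult_drift_upper_tail} applies to the shifted process, and \Cref{lem:pre:constant_drift} handles the end game.

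For the global bound I split $[0,1]$ into two pieces.
\begin{itemize}
\item On $[0,\alpha_1]$ I use the second-order expansion $D_\chi(\alpha)\ge \alpha g(\chi)-C\alpha^2$, with an explicit constant $C$ bounding $\abs{D_\chi''}$ obtained from second differences as in \Cref{lem:add:convexity_tail}. Choosing $\alpha_1=g(\chi)/(2C)$ gives $D_\chi(\alpha)\ge \alpha g(\chi)/2$ there.
\item On $[1/2,1]$ I pair the terms $i$ and $k-i$. Using $p_A(k,i)+p_A(k,k-i)=1$ together with $b_{k,i}(\alpha)\ge b_{k,k-i}(\alpha)$ for $i>k/2$ and $\alpha\ge1/2$, each pair contributes
\[
(2i-k)\bigl[b_{k,i}(\alpha)p_A(k,i)-b_{k,k-i}(\alpha)\,p_A(k,k-i)\bigr]\ge (2i-k)\,b_{k,k-i}(\alpha)\bigl(2p_A(k,i)-1\bigr)\ge 0,
\]
since $p_A(k,i)\ge1/2$ when $i>k/2$. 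The term $k=1$, $i=1$ is strictly positive, so positivity on $[1/2,1]$ is clean.
\end{itemize}

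\emph{Main obstacle.} The remaining compact interval $[\alpha_1,1/2]$ is the hard part. Convexity, as used in \Cref{lem:add:tildeD_convex}, is not available here: for $\chi$ near $\chi_\mathrm{unif}$ the large-$k$ terms are heavy and $D_\chi$ need not be convex. My first attempt would be a certified numerical argument. I would truncate the Poisson sum at some $k_0$, using the exact Irwin--Hall values of $p_A(k,i)$ for $k\le k_0$, and bound the tail by $\sum_{k>k_0}p_k(\chi)\,k$. The result is an explicit polynomial lower bound $P_\chi(\alpha)$ on $D_\chi(\alpha)$. Positivity of $P_\chi$ on $[\alpha_1,1/2]$ would then be verified by Lipschitz gridding, separately for every fixed $\chi<\chi_\mathrm{unif}$.

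Two points make this delicate. First, the margin degenerates as $\chi\uparrow\chi_\mathrm{unif}$. Second, the $\alpha$ near $\alpha_1$ must match the small-$\alpha$ expansion uniformly. To handle both I would try to prove the stronger statement $D_\chi(\alpha)/\alpha\ge g(\chi)/2$ on $(0,1/2]$, for example by showing that $\alpha\mapsto D_\chi(\alpha)/\alpha$ is nondecreasing there. If that works, the threshold is governed entirely by $g$, as the theorem asserts.
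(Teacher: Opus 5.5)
Your exponential regime matches the paper's. Your pairing argument on $[1/2,1]$ is correct: it gives $D_\chi(\alpha)\ge p_1(\chi)H_1(\alpha)=\chi e^{-\chi}\alpha$ there. Your root-uniqueness argument is also valid, and more elementary than the paper's Bessel-function route, because
\[
\frac{e^{\chi}g(\chi)}{\chi}=1-\sum_{k\ge 3}\frac{k(k-2)}{(k!)^2}\,\chi^{k-1}
\]
is strictly decreasing from $1$ to $-\infty$.

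However, the efficient regime has a genuine gap. You never establish positivity of $D_\chi$ on the middle interval $[\alpha_1,1/2]$, and the route you sketch cannot establish it. A certified numerical check ``separately for every fixed $\chi<\chi_\mathrm{unif}$'' is not a finite verification. Gridding jointly over $(\chi,\alpha)$ also fails: $\alpha_1=g(\chi)/(2C)\to 0$ as $\chi\uparrow\chi_\mathrm{unif}$, while $D_\chi(\alpha)\to 0$ as $\alpha\to 0$. So the positivity margin vanishes near the corner $(\chi_\mathrm{unif},0)$, and some uniform second-order information is unavoidable. Your fallback, monotonicity of $D_\chi(\alpha)/\alpha$, is left unproved.

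The missing idea is the one you ruled out. $D_\chi$ \emph{is} convex on $[0,1]$ for every $\chi\in(0,\chi_\mathrm{unif})$. Convexity together with $D_\chi(0)=0$ is exactly the monotonicity of $D_\chi(\alpha)/\alpha$ you want. It gives $D_\chi(\alpha)\ge g(\chi)\alpha$ on all of $[0,1]$ at once, so neither your Taylor step nor your pairing step is needed.

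Your intuition that individual terms are not convex is right; for example $H_5''(0)=-7/2$. But for $\chi\le 14/5$ the Poisson weights keep the terms $H_2''=4$, $H_3''=7$ and $H_4''\ge 2$ dominant. The paper (\Cref{lem:unif:convexity}) makes this rigorous, uniformly in $\alpha$, in three steps:
\begin{enumerate}
\item It bounds $\abs{H_k''}\le\frac75k^3$ for $k\ge 10$ via second differences of $h_k$, so the tail contributes at most $c\,p_{10}(\chi)$.
\item It takes constants $m_k\le\min_{[0,1]}H_k''$ for $k\le 9$.
\item It verifies, computer-assisted, that the one-variable polynomial $P(\chi)=\sum_{k\le 9}m_k\chi^k/k!-c\,\chi^{10}/10!$ is nonnegative on $[0,14/5]$.
\end{enumerate}
This check covers an interval that extends past $\chi_\mathrm{unif}$, so nothing degenerates at the threshold. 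The threshold enters only through the sign of $g(\chi)=\partial_+D_\chi(0)$.
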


\paragraph{Roadmap.}
We proceed in four steps.
First, we derive the conditional acceptance probability $p_A(k,i)$ for uniform weights (\Cref{prop:unif:irwin_hall}), yielding an explicit expression for the continuous drift approximation $D(\chi,\alpha)$.
Second, we study the regime near the optimum by expanding $D_\chi(\alpha)$ at $\alpha=0$.
This identifies the critical value $\chi_\mathrm{unif}$ through the sign of the right derivative $\partial_+D_\chi(0)=g(\chi)$ (\Cref{lem:unif:right_derivative} and \Cref{cor:unif:small_alpha_sign}).
Third, for $\chi<\chi_\mathrm{unif}$ we lift this local positivity to all $\alpha\in(0,1]$ by exploiting convexity of $\alpha\mapsto D_\chi(\alpha)$ (\Cref{lem:unif:convexity} and \Cref{cor:unif:global_positivity}).
Finally, we transfer the resulting bounds $D_\chi$ to the drift $\Delta$ via the drift tools from \Cref{sec:preliminaries}.
We apply multiplicative drift (together with an argument for the drift below a constant threshold) in the subcritical regime $\chi<\chi_\mathrm{unif}$, and the Simplified Drift Theorem on a constant interval in the supercritical regime $\chi>\chi_\mathrm{unif}$.

\subsection{Acceptance Probability}

\begin{proposition}
\label{prop:unif:irwin_hall}
In the Uniform weight model the acceptance probability defined in \Cref{eq:pre:acceptance_probability} satisfies, for all integers $k > 0$ and $0\le i\le k$,
\begin{equation*}
    p_A(k,i)
    = \frac{1}{k!}\sum_{m=0}^{i}(-1)^m\binom{k}{m}(i-m)^{k}.
\end{equation*}
\end{proposition}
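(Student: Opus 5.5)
The plan is to reduce acceptance to a single Irwin--Hall probability through the reflection $U\mapsto 1-U$, and then read off the closed form from the cdf $F_k$ recalled in \Cref{sec:preliminaries}.

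\textbf{Step 1: reduce acceptance to a comparison of uniform sums.} Fix $k>0$ and $0\le i\le k$, and condition on $(K=k,I=i)$. Let $F$ be the set of flipped positions and $G\subseteq F$ the set of flipped zero-bits, as in the proof of \Cref{lem:add:pa_clean_bounds}. All unflipped bits cancel in the comparison, so acceptance is the event
\begin{equation*}
\sum_{j\in G} W_j \ge \sum_{j\in F\setminus G} W_j .
\end{equation*}
The weights are i.i.d.\ $\Unif(0,1)$ and independent of the mutation. Hence, conditionally, the $k$ weights on $F$ are i.i.d.\ $\Unif(0,1)$, and $p_A(k,i)=\Prob{U_1+\dots+U_i\ge U_{i+1}+\dots+U_k}$ with $U_1,\dots,U_k$ i.i.d.\ uniform.

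\textbf{Step 2: reflect the improving weights.} Set $U'_j\coloneqq 1-U_j$ for $j\le i$. These are again i.i.d.\ $\Unif(0,1)$ and independent of $U_{i+1},\dots,U_k$. The acceptance inequality becomes
\begin{equation*}
i-\sum_{j\le i}U'_j\ \ge\ \sum_{j>i}U_j,
\qquad\text{that is,}\qquad
\sum_{j\le i}U'_j+\sum_{j>i}U_j\le i .
\end{equation*}
The left-hand side is a sum of $k$ i.i.d.\ uniforms, so it is distributed as $S_k$. Therefore $p_A(k,i)=\Prob{S_k\le i}=F_k(i)$.

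\textbf{Step 3: insert the Irwin--Hall cdf.} Plug $x=i$ into the formula for $F_k$. Since $i$ is an integer, $\lfloor i\rfloor=i$, and the sum over $j$ becomes exactly the stated sum over $m=0,\dots,i$.

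\textbf{Edge cases.} For $i=0$ the formula gives $0^k/k!=0$, which matches $p_A(k,0)=0$ because the weights are positive. For $i=k$ we get $F_k(k)=1$, which matches certain acceptance. Ties occur with probability zero, so whether acceptance is defined with $\ge$ or $>$ makes no difference.

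The only step needing any care is the reflection in Step 2; the rest is bookkeeping. There I must justify that conditioning on $(K,I)$ leaves the weights on $F$ i.i.d.\ uniform. This holds because the weights are drawn independently of the mutation.
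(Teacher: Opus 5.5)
Your proof is correct and follows the paper's approach: you reduce acceptance to a comparison of i.i.d.\ uniform sums, reflect one side via $U\mapsto 1-U$ to obtain an Irwin--Hall sum, and evaluate $F_k$ at an integer. The only difference is that you reflect the improving weights rather than the losing ones, which gives $p_A(k,i)=F_k(i)$ directly, whereas the paper first obtains $1-F_k(k-i)$ and then needs the symmetry $F_k(x)=1-F_k(k-x)$.
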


\begin{proof}
A full derivation is given in \Cref{app:prop:unif:irwin_hall}.
\end{proof}

\paragraph{Interpretation.}
Conditional on $(K=k,I=i)$, acceptance means that the total weight gained from the $i$ flipped zero-bits exceeds the total weight lost from the $k-i$ flipped one-bits.
For the Uniform weight model this comparison can be rewritten using a sum of $k$ i.i.d.\ $\Unif(0,1)$ variables, hence an Irwin--Hall cdf appears (see \Cref{sec:preliminaries}).
In particular, the special case $i=1$ yields $p_A(k,1)=1/k!$, which will make the linear term of $D_\chi(\alpha)$ at $\alpha=0$ explicit in \Cref{lem:unif:right_derivative}.

In the next section we use the local expansion of $D_\chi(\alpha)$ at $\alpha=0$ to identify the critical mutation parameter $\chi_\mathrm{unif}$.

\subsection{Local Threshold}

We first analyse the drift close to the optimum, i.e.\ for $\alpha\approx 0$.
In this regime only few zero-bits remain, and the sign of the drift is determined by the first-order term of $\alpha\mapsto D_\chi(\alpha)$ at $\alpha=0$.
Since $\alpha=0$ is a boundary point, we work with the right derivative.

\begin{lemma}
\label{lem:unif:right_derivative}
In the Uniform weight model the right derivative of the drift at $\alpha=0$ satisfies
\begin{equation*}
    \partial_+ D_\chi(0)=g(\chi),
    \quad
    g(\chi)\coloneqq \sum_{k=0}^\infty p_k(\chi)\,k(2-k)\,\frac{1}{k!},
\end{equation*}
where $p_k(\chi)\coloneqq e^{-\chi}\chi^k/k!$ denotes the Poisson pmf.
The function $g$ has a unique strictly positive root $\chi_\mathrm{unif}\approx 2.76531$ and satisfies
$g(\chi)>0$ for $\chi\in(0,\chi_\mathrm{unif})$ and $g(\chi)<0$ for $\chi\in(\chi_\mathrm{unif},\infty)$.
\end{lemma}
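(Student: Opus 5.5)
The derivative formula follows from \Cref{prop:pre:right_derivative}, which gives, for any weight model,
\begin{equation*}
\partial_+ D_\chi(0)=\sum_{k\ge 1} p_k(\chi)\,k(2-k)\,p_A(k,1).
\end{equation*}
By \Cref{prop:unif:irwin_hall} with $i=1$, only the $m=0$ term survives, so $p_A(k,1)=1/k!$. The $k=0$ term of the series for $g$ vanishes because of the factor $k$. Hence $\partial_+D_\chi(0)=g(\chi)$. (Sanity check: $p_A(k,1)=\Prob{U_1\ge U_2+\dots+U_k}=\Prob{S_{k-1}\le U_1}$, which is $1/k!$ by the Irwin--Hall cdf $F_{k-1}(x)=x^{k-1}/(k-1)!$ on $[0,1]$, integrated against $U_1$.)

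For the root statement, I write
\begin{equation*}
g(\chi)=e^{-\chi}\sum_{k\ge 1}\frac{k(2-k)}{(k!)^2}\chi^k=e^{-\chi}\,\phi(\chi),
\qquad
\phi(\chi)\coloneqq \chi-\sum_{k\ge 3}\frac{k(k-2)}{(k!)^2}\chi^k.
\end{equation*}
Since $e^{-\chi}>0$, it suffices to study $\phi(\chi)/\chi=1-\psi(\chi)$ with
\begin{equation*}
\psi(\chi)\coloneqq \sum_{k\ge 3}\frac{k(k-2)}{(k!)^2}\chi^{k-1}.
\end{equation*}
The function $\psi$ is a power series with nonnegative coefficients and infinite radius of convergence. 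It satisfies $\psi(0)=0$ and is strictly increasing on $(0,\infty)$, and $\psi(\chi)\to\infty$ as $\chi\to\infty$ (for instance $\psi(\chi)\ge \chi^2/12$). By the intermediate value theorem and monotonicity, $\psi(\chi)=1$ has exactly one positive solution $\chi_\mathrm{unif}$. Moreover $1-\psi>0$ before this point and $1-\psi<0$ after it, which transfers directly to the sign of $g$.

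The only remaining step, and the only mildly delicate one, is certifying $\chi_\mathrm{unif}\approx 2.76531$. For this I will bound $\psi$ at $2.7653$ and $2.7654$ by a truncated partial sum plus a geometric tail estimate. The ratio of consecutive terms is $\frac{(k+1)(k-1)}{k(k-2)}\cdot\frac{\chi}{(k+1)^2}$, which is tiny for moderately large $k$, so a cutoff around $k=12$ makes the tail negligible. I expect no real obstacle beyond this numerical verification.
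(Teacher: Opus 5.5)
Your proof is correct, and the root analysis takes a genuinely different route from the paper's. The first half, the right derivative via \Cref{prop:pre:right_derivative} together with $p_A(k,1)=1/k!$ from \Cref{prop:unif:irwin_hall}, is exactly what the paper does.

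For the root, the paper substitutes $x=2\sqrt{\chi}$ and rewrites $\sum_k k(2-k)\chi^k/(k!)^2$ as $xI_1(x)-\frac{x^2}{4}I_0(x)=\frac{x^2}{4}I_0(x)\bigl(h(x)-1\bigr)$, where $h(x)=4I_1(x)/(xI_0(x))$. It then imports strict monotonicity of the Bessel ratio $xI_0(x)/I_1(x)$ from Baricz, and combines this with $h(0^+)=2$ and $h(x)<4/x$ for $x>4$.

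You instead observe that the coefficient $k(2-k)$ is positive only at $k=1$, vanishes at $k=2$, and is negative for all $k\ge 3$. Factoring out $e^{-\chi}\chi$ leaves $1-\psi(\chi)$, where $\psi$ is a nonnegative-coefficient power series with $\psi(0)=0$. A single sign change is then immediate.

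The two factorizations are linked by $I_0(x)\bigl(h(x)-1\bigr)=1-\psi(\chi)$. The paper additionally divides by $I_0$, and that is precisely what makes it need a nontrivial monotonicity theorem for a ratio of Bessel functions. Your numerator is monotone for trivial reasons.

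Your route is therefore self-contained and more elementary. Your truncated-sum-plus-geometric-tail bracket also actually certifies the numerical value, which the paper merely states. What the paper's route buys is a closed form in standard special functions: it is convenient for high-precision evaluation with library routines and makes the threshold equation $xI_0(x)=4I_1(x)$ recognisable.

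A small numerical remark: your bracket $[2.7653,2.7654]$ certifies four decimals, which is all the $\approx$ claim requires. If you refine it, you will find $\psi(2.76531)\approx 0.999994<1$, so the root lies near $2.765316$. The paper's quoted fifth digit (like its $x_0\approx 3.325$) is a truncation rather than a rounding, so do not be surprised if a tight bracket does not straddle $2.76531$.
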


\begin{proof}
We compute $\partial_+D_\chi(0)$ via \Cref{prop:pre:right_derivative} and then analyse $g(\chi)$ using a Bessel-function representation.
The full argument (including uniqueness of the root and the sign change) is given in \Cref{app:lem:unif:right_derivative}.
\end{proof}

\begin{corollary}
\label{cor:unif:small_alpha_sign}
In the Uniform weight model and for any fixed $\chi\in(0,\infty)$ with $\chi\neq \chi_\mathrm{unif}$, the drift $D_\chi(\alpha)$ has the same sign as $g(\chi)$ for all sufficiently small $\alpha>0$.
In particular, for sufficiently small $\alpha$ we have $D_\chi(\alpha)>0$ if $\chi\in(0,\chi_\mathrm{unif})$ and $D_\chi(\alpha)<0$ if $\chi\in(\chi_\mathrm{unif},\infty)$.
\end{corollary}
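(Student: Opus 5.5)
The plan is to argue from a first-order expansion of $D_\chi$ at $\alpha=0$. We know $D_\chi(0)=0$, and by \Cref{lem:unif:right_derivative} the right derivative exists and equals $g(\chi)$. By definition of the right derivative,
\begin{equation*}
    \frac{D_\chi(\alpha)}{\alpha} \;=\; \frac{D_\chi(\alpha)-D_\chi(0)}{\alpha} \;\longrightarrow\; g(\chi)\qquad(\alpha\to 0^+).
\end{equation*}
For $\chi\neq\chi_\mathrm{unif}$, \Cref{lem:unif:right_derivative} gives $g(\chi)\neq 0$. We then choose $\alpha_0>0$ such that
\begin{equation*}
    \abs*{\frac{D_\chi(\alpha)}{\alpha}-g(\chi)}<\frac{\abs{g(\chi)}}{2}\qquad\text{for all }\alpha\in(0,\alpha_0).
\end{equation*}
On this interval $D_\chi(\alpha)/\alpha$ has the same sign as $g(\chi)$. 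Since $\alpha>0$, so does $D_\chi(\alpha)$. In fact this yields the quantitative bounds $D_\chi(\alpha)\ge \alpha g(\chi)/2$ in the subcritical case and $D_\chi(\alpha)\le \alpha g(\chi)/2<0$ in the supercritical case, which is convenient later.

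The two sign claims then follow from the sign pattern of $g$ stated in \Cref{lem:unif:right_derivative}: $g>0$ on $(0,\chi_\mathrm{unif})$ and $g<0$ on $(\chi_\mathrm{unif},\infty)$.

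The only real content is the existence of the right derivative, i.e.\ \Cref{prop:pre:right_derivative}, which we take as given. If I wanted to re-verify it, I would bound the second-order remainder directly. Write $D_\chi(\alpha)=\sum_k p_k(\chi)\sum_i (2i-k)b_{k,i}(\alpha)p_A(k,i)$, and note that the $i=0$ terms vanish because $p_A(k,0)=0$. The $i=1$ terms give $\alpha\sum_k p_k k(2-k)(1-\alpha)^{k-1}p_A(k,1)$. Every term with $i\ge2$ is bounded by $k\binom{k}{i}\alpha^i$, so all of them together contribute at most $\alpha^2\sum_k p_k k\,2^k=\Oh{\alpha^2}$, since the Poisson weights decay factorially. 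Dominated convergence then handles the $(1-\alpha)^{k-1}\to1$ factor, and this is the step I expect to need the most care.
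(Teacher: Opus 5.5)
Your proposal is correct and follows the paper's proof: both use $D_\chi(0)=0$ and $\partial_+D_\chi(0)=g(\chi)$ from \Cref{lem:unif:right_derivative}, choose a neighbourhood on which $\abs{D_\chi(\alpha)/\alpha-g(\chi)}$ is at most $\tfrac12\abs{g(\chi)}$, and conclude $D_\chi(\alpha)\ge\tfrac12\alpha g(\chi)$ or $D_\chi(\alpha)\le\tfrac12\alpha g(\chi)$ according to the sign pattern of $g$. Your optional re-verification of the right derivative, via an $\Oh{\alpha^2}$ bound on the $i\ge 2$ terms and dominated convergence for the $i=1$ terms, is also sound; it is a slightly different bookkeeping of the same expansion as in the proof of \Cref{prop:pre:right_derivative}.
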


\begin{proof}
Fix $\chi > 0$. By \Cref{lem:unif:right_derivative} we have $D_\chi(\alpha)/\alpha\to g(\chi)$ as $\alpha\to 0^+$.
By the definition of the limit, there exists $\alpha^*>0$ such that for all $\alpha\in(0,\alpha^*)$,
\begin{equation*}
    \abs*{\frac{D_\chi(\alpha)}{\alpha}-g(\chi)}\le \frac12\abs{g(\chi)}.
\end{equation*}
If $g(\chi)>0$, then $D_\chi(\alpha)/\alpha\ge g(\chi)-\frac12 g(\chi)=\frac12 g(\chi)$ and hence $D_\chi(\alpha)\ge \frac12\alpha\,g(\chi)>0$ for all $\alpha\in(0,\alpha^*)$.
If $g(\chi)<0$, then $D_\chi(\alpha)/\alpha\le g(\chi)+\frac12\abs{g(\chi)} = g(\chi)-\frac12 g(\chi)=\frac12 g(\chi)$ and hence $D_\chi(\alpha)\le \frac12\alpha\,g(\chi)<0$ for all $\alpha\in(0,\alpha^*)$.

Finally, \Cref{lem:unif:right_derivative} gives $g(\chi)>0$ for $\chi\in(0,\chi_\mathrm{unif})$ and $g(\chi)<0$ for $\chi\in(\chi_\mathrm{unif},\infty)$, which completes the proof.
\end{proof}

The corollary determines the sign of $D_\chi(\alpha)$ for all sufficiently small $\alpha>0$ and thus identifies $\chi_\mathrm{unif}$ as a local threshold.
To obtain a runtime bound in the regime $\chi<\chi_\mathrm{unif}$, we need a lower bound that holds uniformly for all $\alpha\in[0,1]$.
Since $D_\chi(0)=0$ (as noted in \Cref{sec:preliminaries}) and $\partial_+D_\chi(0)=g(\chi)>0$ for $\chi\in(0,\chi_\mathrm{unif})$, convexity of $\alpha\mapsto D_\chi(\alpha)$ on $[0,1]$ would imply
\begin{equation*}
    D_\chi(\alpha)\ge \alpha\,\partial_+D_\chi(0)=g(\chi)\alpha
    \quad\text{for all }\alpha\in[0,1].
\end{equation*}
We establish this convexity next.

\subsection{Global Positivity via Convexity}

For the convexity proof it is convenient to separate the contribution of a fixed mutation size $k$, which leads to the functions $H_k$ below.

\paragraph{Decomposition by Mutation Size.}
Recall \Cref{eq:pre:approximate_drift} and write $p_k(\chi)\coloneqq e^{-\chi}\chi^k/k!$ for the Poisson pmf and $b_{k,i}(\alpha)\coloneqq \binom{k}{i}\alpha^i(1-\alpha)^{k-i}$ for the binomial pmf. Moreover, for each $k\in\N_0$ define the function
\begin{equation*}
    H_k(\alpha)\coloneqq \sum_{i=0}^{k} h_k(i)\,b_{k,i}(\alpha),
    \quad\text{where}\quad
    h_k(i)\coloneqq (2i-k)\,p_A(k,i).
\end{equation*}
Then, for all $\chi>0$ and $\alpha\in[0,1]$, we may write
\begin{equation*}
    D(\chi,\alpha)=\sum_{k=0}^{\infty} p_k(\chi)\,H_k(\alpha).
\end{equation*}

\begin{lemma}
\label{lem:unif:convexity_tail}
In the Uniform weight model and for all $\chi\in(0,\chi_\mathrm{unif})$ there exists a constant $c>0$ such that
\begin{equation*}
    \abs*{\sum_{k = 10}^\infty p_k(\chi)H''_k(\alpha)}
    \le c\,p_{10}(\chi).
\end{equation*}
\end{lemma}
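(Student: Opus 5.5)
We will follow the template of \Cref{lem:add:convexity_tail}, with the tail now starting at $k=10$. There are three ingredients: a geometric bound on the Poisson tail, a second-difference identity for $H_k''$, and a uniform bound on the second differences of $h_k$.

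\textbf{Poisson tail.} For $k\ge 10$ we have
\begin{equation*}
\frac{p_{k+1}(\chi)}{p_k(\chi)}=\frac{\chi}{k+1}\le\frac{\chi}{11}.
\end{equation*}
Telescoping gives $p_k(\chi)\le p_{10}(\chi)(\chi/11)^{k-10}$ for all $k\ge10$. Since $\chi_\mathrm{unif}<3$, the ratio is at most $r\coloneqq 3/11$ for every $\chi\in(0,\chi_\mathrm{unif})$.

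\textbf{Second-difference identity.} Differentiating the Bernstein representation twice gives, exactly as in \eqref{eq:add:second_deriv_diff},
\begin{equation*}
H_k''(\alpha)=k(k-1)\sum_{i=0}^{k-2}\bigl(h_k(i+2)-2h_k(i+1)+h_k(i)\bigr)\,b_{k-2,i}(\alpha).
\end{equation*}
So it suffices to bound the second differences of $h_k(i)=(2i-k)p_A(k,i)$ uniformly in $i$ and $k$.

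\textbf{Bounding the second differences.} Here we do not need an exact computation. By \Cref{lem:add:pa_clean_bounds}, which applies because the Uniform model is permutation-invariant, we have $0\le p_A(k,i)\le1$. Hence $\abs{h_k(i)}\le\abs{2i-k}\le k$, and each second difference is at most $4k$ in absolute value. This gives
\begin{equation*}
\abs{H_k''(\alpha)}\le 4k^2(k-1).
\end{equation*}
This cubic bound is crude but harmless against the geometric decay of $p_k$. Combining it with the tail bound yields
\begin{equation*}
\abs*{\sum_{k\ge10}p_k(\chi)H_k''(\alpha)}\le 4\,p_{10}(\chi)\sum_{j\ge0}(j+10)^2(j+9)\,r^j,
\end{equation*}
and the series converges because $r<1$. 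We take $c$ to be $4$ times its value, which is independent of $\chi$ and $\alpha$.

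\textbf{Expected obstacle.} The lemma itself poses little difficulty; any finite $c$ suffices. The real issue is the later use of $c$ in the convexity argument. There, $c\,p_{10}(\chi)$ must be dominated by $\sum_{k\le 9}p_k(\chi)H_k''(\alpha)$ for all $\chi$ up to $\approx2.77$, so a loose $c$ may make that comparison fail.

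If that happens, we sharpen the bound on the second differences, as was done in \Cref{sec:addendum}. By the Irwin--Hall symmetry $p_A(k,i)=1-p_A(k,k-i)$ (up to ties of measure zero), the function $h_k$ is symmetric about $k/2$. Moreover, $p_A(k,i)$ is a cdf value $F_k(i)$ whose mass concentrates within $O(\sqrt k)$ of $k/2$. This should give second differences of order $O(1)$ away from the center, and hence $\abs{H_k''}=O(k^2)$, mirroring \eqref{eq:add:Hpp_bound_k}. We would also bound the sum numerically with the true ratio $\chi_\mathrm{unif}/11$ rather than $3/11$.
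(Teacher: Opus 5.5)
Your proof is correct and follows the paper's argument: a geometric Poisson tail from $k=10$, the second-difference identity, and an $O(k)$ bound on the second differences of $h_k$ giving $\abs{H_k''}=O(k^3)$; it differs only in the constant. The paper replaces your bound $4k$ by $k+4\le\frac75k$, writing the second difference as $(2i-k)(q_2-q_1)+4q_2$ with $q_1,q_2\in[0,1]$ the increments of the nondecreasing map $i\mapsto p_A(k,i)$, and evaluates the series at $\chi=14/5$ to get $c=\frac75\cdot1504\approx2106$. This is the cheap sharpening your ``expected obstacle'' asks for (no symmetry or concentration argument is needed), and it matters: your $c\approx5.6\cdot10^3$ is in fact too large for the later check $P(\chi)\ge0$ near $\chi_\mathrm{unif}$.
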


\begin{proof}
For fixed $\chi>0$ and all $k\ge 10$ we have
\begin{equation*}
    \frac{p_{k+1}(\chi)}{p_k(\chi)}=\frac{\chi}{k+1}\le \frac{\chi}{11}.
\end{equation*}
Telescoping this inequality yields
\begin{equation}
\label{eq:unif:pois_tail_geom}
    p_k(\chi)\le p_{10}(\chi)\Bigl(\frac{\chi}{11}\Bigr)^{k-10}\quad\text{for all }k\ge 10.
\end{equation}

For $k\ge 2$ and through the identity $b'_{k,i}(\alpha)=k\bigl(b_{k-1,i-1}(\alpha)-b_{k-1,i}(\alpha)\bigr)$ the second derivative of $H_k$ yields
\begin{equation}
\label{eq:unif:second_deriv_diff}
    H''_k(\alpha)
    = k(k-1)\sum_{i=0}^{k-2}\bigl(h_k(i+2)-2h_k(i+1)+h_k(i)\bigr)\,b_{k-2,i}(\alpha).
\end{equation}

Fix $k\ge 10$ and $0\le i\le k-2$ and let $q_1\coloneqq p_A(k,i+1)-p_A(k,i)$ and $q_2\coloneqq p_A(k,i+2)-p_A(k,i+1)$.
Then $0\le q_1,q_2\le 1$ and $\abs{q_2-q_1}\le 1$.
A direct computation gives
\begin{equation*}
    h_k(i+2)-2h_k(i+1)+h_k(i)=(2i-k)(q_2-q_1)+4q_2,
\end{equation*}
hence, using $\abs{2i-k}\le k$,
\begin{equation*}
    \abs{(2i-k)(q_2-q_1)+4q_2}\le k+4\le \frac{7}{5}k.
\end{equation*}
Inserting into \Cref{eq:unif:second_deriv_diff} and using $\sum_{i=0}^{k-2} b_{k-2,i}(\alpha)=1$ yields
\begin{equation}
\label{eq:unif:second_deriv_bound_k}
    \abs{H''_k(\alpha)}\le \frac{7}{5}k^3
    \quad\text{for all }k\ge 10.
\end{equation}

Combining \Cref{eq:unif:pois_tail_geom} and \Cref{eq:unif:second_deriv_bound_k} yields
\begin{align*}
    \abs*{\sum_{k=10}^{\infty} p_k(\chi)H''_k(\alpha)}
    \le \sum_{k=10}^{\infty} p_k(\chi)\,\abs*{H''_k(\alpha)}
    &\le \frac{7}{5}\,p_{10}(\chi)\sum_{k=10}^{\infty} k^3\Bigl(\frac{\chi}{11}\Bigr)^{k-10} \\
    &= \frac{7}{5}\,p_{10}(\chi)\sum_{j=0}^{\infty} (j+10)^3\Bigl(\frac{\chi}{11}\Bigr)^{j}.
\end{align*}
The series on the right hand side has strictly positive coefficients and is therefore increasing in $\chi$ for $\chi\in(0,11)$.
Since $\chi_\mathrm{unif}<14/5<11$, we may bound it for all $\chi\in(0,\chi_\mathrm{unif})$ by its value at $\chi=14/5$.
Evaluating the resulting closed form at $\chi=14/5$ gives
\begin{equation*}
    \sum_{j=0}^{\infty} (j+10)^3\Bigl(\frac{14}{55}\Bigr)^{j}\approx 1503.725,
\end{equation*}
hence the bound holds with $c\coloneqq \frac{7}{5}\,1504$.
\end{proof}

The next step is to prove convexity of $\alpha\mapsto D_\chi(\alpha)$ for $\chi<\chi_\mathrm{unif}$.
We do this by writing $D(\chi,\alpha)=\sum_{k\ge 0} p_k(\chi)H_k(\alpha)$ and controlling the tail contribution $\sum_{k\ge 10}p_k(\chi)H_k''(\alpha)$ via \Cref{lem:unif:convexity_tail}, while handling the finitely many terms $k\le 9$ by explicit lower bounds.

\begin{lemma}
\label{lem:unif:convexity}
In the Uniform weight model there exists a polynomial $P(\chi)$ such that for all $(\alpha,\chi)\in[0,1]\times(0,\chi_\mathrm{unif})$,
\begin{equation*}
    \frac{\partial^2}{\partial\alpha^2}D(\chi,\alpha)\ge e^{-\chi}P(\chi)\ge 0.
\end{equation*}
In particular, for each fixed $\chi\in(0,\chi_\mathrm{unif})$ the function $\alpha\mapsto D_\chi(\alpha)$ is convex on $[0,1]$.
\end{lemma}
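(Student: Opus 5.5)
We follow the template of \Cref{lem:add:tildeD_convex}. We write
\begin{equation*}
\frac{\partial^2}{\partial\alpha^2}D(\chi,\alpha)=\sum_{k=0}^{9}p_k(\chi)H_k''(\alpha)+\sum_{k=10}^{\infty}p_k(\chi)H_k''(\alpha).
\end{equation*}
The first sum is handled exactly, and the second is bounded below by $-c\,p_{10}(\chi)$ using \Cref{lem:unif:convexity_tail}. For $k=0,1$ we have $H_k''\equiv 0$, since $H_k$ is affine in $\alpha$. For $2\le k\le 9$ we compute $p_A(k,i)$ explicitly from \Cref{prop:unif:irwin_hall}; these are rationals of the form $F_k(i)=\frac{1}{k!}\sum_{m\le i}(-1)^m\binom{k}{m}(i-m)^k$. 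From them we form the integer second differences $\Delta^2 h_k(i)\coloneqq h_k(i+2)-2h_k(i+1)+h_k(i)$ with $h_k(i)=(2i-k)p_A(k,i)$. By \eqref{eq:unif:second_deriv_diff},
\begin{equation*}
H_k''(\alpha)=k(k-1)\sum_{i=0}^{k-2}\Delta^2 h_k(i)\,b_{k-2,i}(\alpha).
\end{equation*}
Since $b_{k-2,i}\ge 0$ and they sum to $1$, this gives the crude bound $H_k''(\alpha)\ge k(k-1)\min_i \Delta^2 h_k(i)\eqqcolon m_k$.

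The hope is that $\Delta^2 h_k(i)\ge 0$ for all $i$ when $k$ is small, so that $m_k\ge 0$. The symmetry $F_k(x)=1-F_k(k-x)$ gives $p_A(k,k-i)=1-p_A(k,i)$ (up to the tie at $i=k/2$, which has probability zero). Hence $h_k(k-i)=(k-2i)(1-p_A(k,i))$, which halves the number of checks. For $k=2,3$ one gets closed forms directly. For $k=2$, $p_A=(0,1/2,1)$ and $H_2=2\alpha^2$, so $H_2''=4$. For $k=3$, $p_A=(0,1/6,5/6,1)$, giving $h_3=(-3,-1/6,5/6,3)$ and second differences $(11/6,\,7/6)$, so $H_3''\ge 6\cdot 7/6=7$. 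In general, positive second differences follow from $p_A(k,\cdot)$ being a cdf evaluated at integers, convex below $k/2$ and concave above, multiplied by the increasing linear factor $2i-k$. If some $\Delta^2h_k(i)$ with $k\le 9$ turns out negative, we keep the exact Bernstein polynomial $H_k''(\alpha)$ instead of its minimum. We then lower bound it on $[0,1]$ by minimizing the explicit polynomial, or use a negative constant $m_k$ that is absorbed by the dominant $k=2$ term.

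This yields
\begin{equation*}
\frac{\partial^2}{\partial\alpha^2}D(\chi,\alpha)\ge \sum_{k=2}^{9}p_k(\chi)\,m_k-c\,p_{10}(\chi)=e^{-\chi}P(\chi),
\qquad
P(\chi)\coloneqq\sum_{k=2}^{9}\frac{m_k}{k!}\chi^k-\frac{c}{10!}\chi^{10},
\end{equation*}
with $c=\tfrac75\cdot 1504$. It remains to show $P(\chi)\ge 0$ on $(0,\chi_{\mathrm{unif}})\subseteq(0,14/5]$. We factor $P(\chi)=\chi^2\bigl(2+\sum_{k\ge3}\frac{m_k}{k!}\chi^{k-2}-\frac{c}{10!}\chi^8\bigr)$. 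For $\chi\le 14/5$ we bound the bracket from below: drop the nonnegative middle terms if needed, and note that the last term is at most $\frac{2105.6}{3628800}(2.8)^8\approx 2.3$. Since this may exceed $2$, the middle terms are needed. We therefore either verify positivity with $m_3,m_4$ included, which already add $\frac{7}{6}\chi+\dots$, or split $(0,14/5]$ into subintervals and check monotone bounds on each.

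\textbf{Main obstacle.} The delicate step is this final polynomial inequality near $\chi=14/5$. The tail constant $c\approx 2100$ is crude, so we expect to need the genuine positive contributions of $k=3,\dots,9$ rather than the minimal bound from $k=2$ alone. If this still fails, we would sharpen \Cref{lem:unif:convexity_tail}. One route is to use $\abs{\Delta^2h_k(i)}\le k+4$ only near $i\approx k/2$ and the much smaller Irwin--Hall increments elsewhere. Alternatively, we would move the cut-off from $10$ to a larger index, at the cost of more explicit finite-$k$ computations.
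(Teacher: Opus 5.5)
Your plan is essentially the paper's proof. You split off $k\ge 10$ via \Cref{lem:unif:convexity_tail}, bound each $H_k''$ for $k\le 9$ from below by a constant $m_k$ through the second-difference representation, and check that $P(\chi)=\sum_{k\le 10}m_k\chi^k/k!$ is nonnegative on $(0,14/5]$. The only difference is that the paper does this last check by a computer-assisted root exclusion, where you split the interval by hand.

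A few corrections. Your heuristic for nonnegative second differences is backwards. Write $\Delta^2h_k(i)=(2i-k)(q_2-q_1)+4q_2$, with $q_1,q_2$ the consecutive increments of $F_k$. Convexity of $F_k$ below $k/2$ and concavity above both make the first term nonpositive, so any positivity has to come from $4q_2$.

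In fact the hope fails: $m_k=k(k-1)\min_i\Delta^2h_k(i)<0$ for $5\le k\le 9$. For example, $\Delta^2h_5(0)=-21/120$ gives $H_5''(0)=-7/2$. So the negative-$m_k$ branch is the one you actually need.

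That branch closes without sharpening the tail lemma. Take $(m_2,\dots,m_9)\approx(4,7,2,-3.5,-4.5,-4.93,-9.87,-8.00)$ and $c=\tfrac75\cdot1504$. Then $P(14/5)\approx 18.7$. Splitting $(0,14/5]$ at $\chi=1.4$ shows the bracket in $P(\chi)=\chi^2(\cdots)$ stays positive, but this needs the $k=3$ term as well as $k=2$.

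Finally, $h_3(0)=0$, not $-3$, since $p_A(3,0)=0$. Your listed second differences are inconsistent with your own vector, although $H_3''\equiv 7$ is correct.
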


\begin{proof}
By \Cref{prop:unif:irwin_hall} we have $p_A(k,i)=F_k(i)$ for integers $0\le i\le k$,
where $F_k$ denotes the Irwin--Hall cdf. In particular, $p_A(k,i)$ is independent of $\alpha$ and $\chi$.
Recalling the definition
\begin{equation*}
    H_k(\alpha)=\sum_{i=0}^{k} h_k(i)\,b_{k,i}(\alpha),
    \quad h_k(i)=(2i-k)p_A(k,i),
\end{equation*}
it follows that each $H_k$ is a polynomial in $\alpha$ of degree at most $k$, and hence $H_k''$ is a polynomial of degree at most $k-2$.
Moreover, differentiating the identity
\begin{equation*}
    D(\chi,\alpha)=\sum_{k=0}^{\infty} p_k(\chi)\,H_k(\alpha)
\end{equation*}
twice with respect to $\alpha$ yields
\begin{equation}
\label{eq:unif:D_second_derivative_series}
    \frac{\partial^2}{\partial\alpha^2}D(\chi,\alpha)
    =\sum_{k=0}^{\infty} p_k(\chi)\,H_k''(\alpha).
\end{equation}

By \Cref{lem:unif:convexity_tail} there exists a constant $c>0$ such that for all $\chi\in(0,\chi_\mathrm{unif})$,
\begin{equation*}
    \abs*{\sum_{k=10}^{\infty} p_k(\chi)\,H_k''(\alpha)}\le c\,p_{10}(\chi)
    \quad\text{for all }\alpha\in[0,1].
\end{equation*}
Consequently, for all $(\alpha,\chi)\in[0,1]\times(0,\chi_\mathrm{unif})$,
\begin{equation}
\label{eq:unif:D_second_derivative_split}
    \frac{\partial^2}{\partial\alpha^2}D(\chi,\alpha)
    \ge \sum_{k=0}^{9} p_k(\chi)\,H_k''(\alpha)-c\,p_{10}(\chi).
\end{equation}

For each $0\le k\le 9$ choose constants $m_k$ such that
\begin{equation*}
    m_k \le H_k''(\alpha)\quad\text{for all }\alpha\in[0,1],
\end{equation*}
and set $m_{10}\coloneqq -c$. Using $p_k(\chi)=e^{-\chi}\chi^k/k!$, inequality \Cref{eq:unif:D_second_derivative_split} implies
\begin{equation*}
    \frac{\partial^2}{\partial\alpha^2}D(\chi,\alpha)
    \ge e^{-\chi}\sum_{k=0}^{10} m_k\frac{\chi^k}{k!}
    = e^{-\chi}P(\chi),
\end{equation*}
where we define the polynomial
\begin{equation*}
    P(\chi)\coloneqq \sum_{k=0}^{10} m_k\frac{\chi^k}{k!}.
\end{equation*}

It remains to show $P(\chi)\ge 0$ for all $\chi\in(0,\chi_\mathrm{unif})$.
We proceed by a computer-assisted verification.
First, we compute valid constants $m_k$ for $0\le k\le 9$ and take $m_{10}=-c$, where $c$ is the tail constant from \Cref{lem:unif:convexity_tail}.
Second, on the interval $[0,14/5]\supseteq(0,\chi_\mathrm{unif})$ we verify $P(0)=0$ and $P(14/5)>0$, and we exclude any further real roots of $P$ in $[0,14/5]$.
Hence $P(\chi)\ge 0$ for all $\chi\in(0,\chi_\mathrm{unif})$. The full verification (choice of $m_k$ and root exclusion for $P$ on $[0,14/5]$) is documented in a reproducible script, available on our Zenodo repository \cite{hasebe_2026_19232932}.

Therefore $e^{-\chi}P(\chi)\ge 0$ on $(0,\chi_\mathrm{unif})$, and the claimed inequality follows.
In particular, for each fixed $\chi\in(0,\chi_\mathrm{unif})$ we have $D''_\chi(\alpha)\ge 0$ for all $\alpha\in[0,1]$,
so $\alpha\mapsto D_\chi(\alpha)$ is convex on $[0,1]$.
\end{proof}

Convexity turns the local slope at $\alpha=0$ into a global bound: since $D_\chi(0)=0$ and $D_\chi$ is convex, we obtain
$D_\chi(\alpha)\ge \alpha\,\partial_+D_\chi(0)=g(\chi)\alpha$ for all $\alpha\in[0,1]$ when $\chi<\chi_\mathrm{unif}$.

\begin{corollary}
\label{cor:unif:global_positivity}
Let $g(\chi)$ be as in \Cref{lem:unif:right_derivative}. Then for any fixed $\chi\in(0,\chi_\mathrm{unif})$ and all $\alpha\in(0,1]$,
\begin{equation*}
    D_\chi(\alpha)\ge g(\chi)\alpha > 0.
\end{equation*}
\end{corollary}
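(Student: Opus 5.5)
The plan is to combine convexity from \Cref{lem:unif:convexity} with the boundary information at $\alpha=0$, mirroring the argument of \Cref{cor:add:global_positivity_tilde}.

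Fix $\chi\in(0,\chi_\mathrm{unif})$. By \Cref{lem:unif:convexity}, $\alpha\mapsto D_\chi(\alpha)$ is convex on $[0,1]$. As observed in \Cref{sec:preliminaries}, $D_\chi(0)=0$, since $p_A(k,0)=0$ for all $k$. By \Cref{lem:unif:right_derivative}, the right derivative $\partial_+D_\chi(0)$ exists and equals $g(\chi)$.

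For a convex function on $[0,1]$, the difference quotient $\alpha\mapsto (D_\chi(\alpha)-D_\chi(0))/\alpha$ is nondecreasing on $(0,1]$. Its infimum is therefore its limit as $\alpha\to 0^+$, which is $\partial_+D_\chi(0)=g(\chi)$. Hence
\[
\frac{D_\chi(\alpha)}{\alpha}\ \ge\ g(\chi)\quad\text{for all }\alpha\in(0,1],
\]
and multiplying by $\alpha>0$ gives $D_\chi(\alpha)\ge g(\chi)\alpha$.

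For strict positivity, \Cref{lem:unif:right_derivative} gives $g(\chi)>0$ on $(0,\chi_\mathrm{unif})$, so $g(\chi)\alpha>0$ for every $\alpha\in(0,1]$.

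There is no real obstacle, since all the substantive work (convexity and the value of the slope) sits in the earlier lemmas. The only point needing care is the monotonicity of difference quotients at the boundary point $0$. This holds because convexity on the closed interval includes the endpoint. Alternatively, $D_\chi$ is a power series in $\alpha$, so $D''_\chi\ge 0$ on $[0,1]$ from \Cref{lem:unif:convexity} can be integrated twice from $0$ to obtain the same bound directly.
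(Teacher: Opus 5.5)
Your proposal is correct and follows essentially the same route as the paper: convexity from \Cref{lem:unif:convexity}, together with $D_\chi(0)=0$ and $\partial_+D_\chi(0)=g(\chi)>0$ from \Cref{lem:unif:right_derivative}, gives the supporting-line bound $D_\chi(\alpha)\ge g(\chi)\alpha$. Your monotone-difference-quotient argument justifies the tangent inequality at the boundary point $\alpha=0$ a little more explicitly than the paper, which simply asserts it.
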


\begin{proof}
Fix $\chi\in(0,\chi_\mathrm{unif})$. By \Cref{lem:unif:convexity}, the function $\alpha\mapsto D_\chi(\alpha)$ is convex on $[0,1]$.
Hence it lies above its tangent at every point and at $\alpha=0$ we have
\begin{equation}
\label{eq:unif:supporting_line_at_0}
    D_\chi(\alpha)\ge D_\chi(0)+\alpha\,\partial_+ D_\chi(0)\quad\text{for all }\alpha\in[0,1].
\end{equation}
At $\alpha=0$ the algorithm is at the optimum, so no offspring will be accepted and therefore $D_\chi(0)=0$.
Moreover, by \Cref{lem:unif:right_derivative} we have $\partial_+D_\chi(0)=g(\chi)$, and $g(\chi)>0$ for $\chi\in(0,\chi_\mathrm{unif})$.
Substituting these facts into \Cref{eq:unif:supporting_line_at_0} yields $D_\chi(\alpha)\ge \alpha\,g(\chi)> 0$ for all $\alpha\in(0,1]$.
\end{proof}

\subsection{Proof of \Cref{thm:unif:main}}
\label{sec:unif:proof}

\begin{proof}
Case $0<\chi<\chi_\mathrm{unif}$. Fix $\chi \in (0,\chi_\mathrm{unif})$.
By \Cref{cor:unif:global_positivity} we have for all $\alpha\in[0,1]$,
\begin{equation}
\label{eq:unif:global_positivity}
    D_\chi(\alpha)\ge g(\chi)\alpha,
    \quad\text{where }g(\chi)>0.
\end{equation}
By \Cref{prop:pre:drift_convergence}, there exists a constant $c(\chi)>0$ such that, uniformly in $\alpha\in[0,1]$,
\begin{equation}
\label{eq:unif:drift_convergence}
    \abs*{\Delta(\lfloor \alpha n\rfloor)-D_\chi(\alpha)}\le \frac{c(\chi)}{n}.
\end{equation}
Let $y_0\coloneqq \bigl\lceil 2c(\chi)/g(\chi)\bigr\rceil$.
Then for all $y\ge y_0$, using \Cref{eq:unif:global_positivity} and \Cref{eq:unif:drift_convergence} we obtain
\begin{equation}
\label{eq:unif:mult_drift_lower}
    \Delta(y)\ge g(\chi)\frac{y}{n}-\frac{c(\chi)}{n}\ge\frac{g(\chi)}{2n}\,y.
\end{equation}

Define the runtime $T_{y_0}\coloneqq \min\SetBuilder{t\ge 0}{Y_t\le y_0}$ and the shifted process
\begin{equation*}
    Z_t \coloneqq \max\{Y_t-y_0,0\}.
\end{equation*}
Then $Z_t\in\{0,1,\dots,n-y_0\}$ and $T_{y_0}=\min\SetBuilder{t\ge 0}{Z_t=0}$.
Moreover, for all $s\ge 1$,
\begin{equation*}
    \E{Z_t-Z_{t+1}\mid Z_t=s}
    = \Delta(s+y_0)
    \ge \frac{g(\chi)}{2n}(s+y_0)
    \ge \frac{g(\chi)}{2n}\,s,
\end{equation*}
where we used \Cref{eq:unif:mult_drift_lower}.
Hence \Cref{thm:pre:mult_drift_upper_tail} applies to $(Z_t)_{t\ge 0}$ and yields $T_{y_0}=\Oh{n\log n}$ a.a.s.
Since \Cref{thm:pre:mult_drift_upper_tail} provides an exponential upper tail, it also implies
\begin{equation}
\label{eq:unif:runtime_y0}
    \E{T_{y_0}} = \Oh{n\log n}.
\end{equation}

To cover the remaining states $\{1,\dots,y_0\}$ we invoke \Cref{lem:pre:constant_drift}.
Its assumptions are satisfied since all weights are positive almost surely and the multiplicative drift lower bound
\Cref{eq:unif:mult_drift_lower} holds for all $y\ge y_0$.
Consequently, for all $n$ sufficiently large,
\begin{equation}
\label{eq:unif:endgame_exp}
    \sup_{1\le y\le y_0}\E{T\mid Y_0=y}=\Oh{n},
\end{equation}
and for every fixed $y\in\{1,\dots,y_0\}$,
\begin{equation}
\label{eq:unif:endgame_tail}
    \Prob{T>n\log n\mid Y_0=y}=\oh{1}.
\end{equation}

We now combine the two parts. Recall $T\coloneqq\min\SetBuilder{t\ge 0}{Y_t=0}$ and $T_{y_0}\coloneqq\min\SetBuilder{t\ge 0}{Y_t\le y_0}$.
At time $T_{y_0}$ we have $Y_{T_{y_0}}\in\{0,1,\dots,y_0\}$.
Since $(Y_t)_{t\ge 0}$ is a time-homogeneous Markov chain, conditioning on $Y_{T_{y_0}}$ yields
\begin{align*}
    \E{T}
    = \E{T_{y_0}} + \E{\E{T-T_{y_0}\mid Y_{T_{y_0}}}}
    &\le \E{T_{y_0}} + \sup_{1\le y\le y_0}\E{T\mid Y_0=y}\\
    &= \Oh{n\log n},
\end{align*}
where we used \Cref{eq:unif:runtime_y0} and \Cref{eq:unif:endgame_exp}.

Moreover, conditioning on $Y_{T_{y_0}}$ and using \Cref{eq:unif:endgame_tail} gives
\begin{equation*}
    \Prob{T-T_{y_0}>n\log n}
    \le \sup_{1\le y\le y_0}\Prob{T>n\log n\mid Y_0=y}
    = \oh{1}.
\end{equation*}
Together with $T_{y_0}=\Oh{n\log n}$ a.a.s., a union bound implies $T=\Oh{n\log n}$ a.a.s.

Case $\chi > \chi_\mathrm{unif}$. Fix $\chi>\chi_\mathrm{unif}$.
By \Cref{cor:unif:small_alpha_sign} and continuity of $\alpha\mapsto D_\chi(\alpha)$, there exists $\alpha^*>0$ such that
$D_\chi(\alpha)<0$ for all $\alpha\in(0,\alpha^*]$.
Choose constants $0<a<b\le \alpha^*$.
Then $\max_{\alpha\in[a,b]}D_\chi(\alpha)<0$, and hence we may assume that
\begin{equation*}
    D_\chi(\alpha)\le -2\eps
    \quad\text{for all }\alpha\in[a,b].
\end{equation*}
Applying \Cref{lem:pre:simplified_drift_conditions} yields that, for all sufficiently large $n$,
the hypotheses of \Cref{thm:pre:simplified_drift} hold for $(Y_t)_{t\ge 0}$ on the interval $[a_n,b_n]$ with
$a_n\coloneqq\lfloor an\rfloor$ and $b_n\coloneqq\lfloor bn\rfloor$. Indeed, condition~(1) holds with $\eps$ as above, and condition~(2) holds with $\delta=1$ and $r(\ell)=e^\chi$, where $\ell\coloneqq b_n - a_n$.
Define the hitting time $T^*\coloneqq \min\SetBuilder{t\ge 0}{Y_t\le a_n}$.
Then \Cref{thm:pre:simplified_drift} implies that there exists a constant $c^*>0$ such that
\begin{equation*}
    \Prob{T^*\le 2^{c^*\ell/r(\ell)}\mid Y_0\ge b_n} = 2^{-\Om{\ell/r(\ell)}} = 2^{-\Om{n}},
\end{equation*}
for all sufficiently large $n$, where we used $r(\ell)=e^\chi\le\Oh{1}$ and $\ell \le (b - a)n\le\Oh{n}$.
Hence with probability $1-2^{-\Om{n}}$ we have $T^*\ge 2^{\Om{n}}$ on the event $\{Y_0\ge b_n\}$.
Since reaching the optimum $Y_t=0$ implies reaching $Y_t\le a_n$, the runtime $T$ satisfies $T\ge T^*$ on $\{Y_0\ge b_n\}$.

Under uniform random initialisation, $Y_0\sim\Bin(n,1/2)$, and for any fixed $b<1/2$ a Chernoff bound gives
$\Prob{Y_0\ge b_n}=1-e^{-\Om{n}}$.
Therefore $T=2^{\Om{n}}$ a.a.s.
Moreover,
\begin{equation*}
    \E{T}\ge \Prob{Y_0\ge b_n}\,\E{T\mid Y_0\ge b_n}
    \ge (1-e^{-\Om{n}})\,2^{\Om{n}}
    = 2^{\Om{n}},
\end{equation*}
so the runtime is exponential in expectation as well.
\end{proof}
\section{Conclusion}
\label{sec:conclusion}

We studied the $\opoEA$ in dynamic linear environments, where a weight vector is redrawn in every generation.
A central empirical observation reported by Vermetten et al.~\cite{vermettenEmpiricalAnalysisDynamic2024} is that for large mutation rates, the algorithm fails to get close to the optimum even under generous evaluation budgets, and instead stalls at a surprisingly large distance from the optimum. Our results locate this distance for Dynamic Binary Value (DBV) and identify the critical mutation rate for the Uniform $(0,1)$ distribution of weights. Moreover, we show that the critical mutation parameter $\chi_\mathrm{dbv}\approx 1.59362$ for DBV is universal in the sense that any mutation parameter $\chi < \chi_\mathrm{dbv}$ guarantees optimisation time $\Oh{n\log n}$ on all permutation-invariant dynamic functions. This fixes a bug in a previous proof and extends and strengthens the results that were previously claimed.

\paragraph{Open problems.} 
We left some questions unanswered in this paper. First of all, we do not locate the plateau by an explicit formula for the Uniform $(0,1)$ distribution. Moreover, since it is difficult to realize DBV in a standard  dynamic framework due to floating point limitations (the DBV weights get forbiddingly large), Vermetten~et~al.\ \cite{vermettenEmpiricalAnalysisDynamic2024} also introduced further weight distributions such as \textsc{PowersOfTwo} and Pareto as approximations of DBV. The performance differences to DBV were empirically found to be small, and it would be interesting if they could be analytically quantified.

Widening the scope to other distributions for dynamic linear functions, it would be interesting to understand which weight distributions lead to a critical mutation parameter for the $\opoEA$ and which don't. For example, we show that the uniform distribution on $(0,1)$ leads to a threshold of $\chi_\mathrm{unif}\approx 2.76531$. On the other hand, the constant distribution with value $1$ just recovers the \onemax function which is known \emph{not} to have a critical mutation rate, since the $\opoEA$ is efficient on \onemax for every constant mutation parameter $\chi >0$~\cite{witt2013tight}. Particularly intriguing is whether there are distributions with compact domain which show a dichotomy. In other words: are there weight distributions with domain $[1,C]$, $C>1$ for which the $\opoEA$ needs exponential time on the corresponding dynamic linear function for large $\chi$? If yes, which is the infimal $C$ for which that can happen? 

Finally, the situation for more complex algorithms is far from understood. While the behaviour close to the optimum has been studied for many algorithms in~\cite{lengler2020dichotomy}, it is known that this is not always the hardest part for optimisation, especially if the population size is larger than one~\cite{lenglerRiedi2022,lenglerZou2021}. Our understanding even for simple algorithms like the $(\mu+1)$ GA is still very limited.

\begin{credits}
\subsubsection{\ackname} This project was supported by the Swiss National Science Foundation [grant number 0003390].
\end{credits}
%
%
%
\bibliographystyle{splncs04}
\bibliography{bib/refs}

\appendix

\section{Appendix}
\label{app:proofs}

\subsection{A mistake in the previous proof}
\label{sec:erratum}

To treat states away from the optimum, the proof of \cite[Theorem~4~a)]{lengler1+1EANoisyLinear2018} introduces an auxiliary search point $z^{(t)}$ whose set of zero-bits is a subset of the set of zero-bits of $x^{(t)}$, and then claims that at each step the difference between $Y_t$ and $Y_t'$ can only decrease or remain the same.
This is false.

Let $\mathcal Z(x)\subseteq[n]$ denote the set of zero-bits of a bit string $x$.
Assume that for some time $t$ we have
\begin{equation*}
\mathcal Z(z^{(t)})\subsetneq \mathcal Z(x^{(t)}).
\end{equation*}
Choose indices
\begin{equation*}
a\in \mathcal Z(x^{(t)})\setminus \mathcal Z(z^{(t)}),
\qquad
b,c\notin \mathcal Z(x^{(t)}),
\end{equation*}
and apply the same mutation set $F=\{a,b,c\}$ to both search points.

For $x^{(t)}$, exactly one zero-bit and two one-bits are flipped, so $K=3$ and $I=1$.
Hence, if the offspring is accepted, the number of zero-bits increases by
\begin{equation*}
K-2I = 1.
\end{equation*}

For $z^{(t)}$, all flipped bits are one-bits, so $I'=0$.
Hence only $1\to 0$ flips occur, and since all weights are strictly positive, the offspring is rejected.

Now consider a step in which the realised weights satisfy
\begin{equation*}
W_a>W_b+W_c.
\end{equation*}
Then the offspring of $x^{(t)}$ is accepted, whereas the offspring of $z^{(t)}$ is rejected.
Consequently,
\begin{equation*}
Y_{t+1}=Y_t+1,
\qquad
Y'_{t+1}=Y'_t,
\end{equation*}
and therefore
\begin{equation*}
Y_{t+1}-Y'_{t+1}=(Y_t-Y'_t)+1.
\end{equation*}
Thus the difference increases, contradicting the claim in \cite{lengler1+1EANoisyLinear2018}.

The importance of this point is that the monotonicity of $Y_t-Y_t'$ is exactly what would imply
\begin{equation*}
Y_t-Y_{t+1} \ge Y_t'-Y_{t+1}'
\end{equation*}
for every coupled step.
This is the step used in \cite{lengler1+1EANoisyLinear2018} to transfer a positive drift estimate from the auxiliary process, started at
\begin{equation*}
\tilde y = n\log^{-1/2} n,
\end{equation*}
to the original process in the whole region
\begin{equation*}
Y_t\in[n\log^{-1/4}n,n].
\end{equation*}

Since this domination is not established, the proof does not show that there exists $\eps>0$ such that
\begin{equation*}
\Delta(y)\ge \eps \log^{-1/2} n
\qquad\text{for all } y\in[n\log^{-1/4}n,n].
\end{equation*}
Consequently, the application of the Additive Drift Theorem to the stopping time $T' \coloneqq \inf\{t\ge 0: Y_t\le n\log^{-1/4}n\}$ is not justified.
Hence the argument no longer proves that the process reaches the sublinear regime
in $\Oh{n\log^{1/2}n}$ steps, neither in expectation nor asymptotically almost surely.

It is important to note that this gap concerns only the argument away from the optimum. The analysis in the near-optimal regime, where $y=\oh{n}$, remains valid,
since there the drift estimate is used directly and no coupling argument is needed.

We do not see an obvious way to salvage the coupling argument from~\cite{lengler1+1EANoisyLinear2018}. Instead, we use a different strategy by using a direct lower bound on the approximate drift $D_\chi(\alpha)$
for all $\alpha\in[0,1]$, and hence, via the approximation result for $\Delta(y)$,
a valid lower bound on the actual drift for all states above a constant threshold.
This yields the required $\Oh{n\log n}$ runtime bound from arbitrary initial states.

\subsection{Proof of \Cref{prop:pre:drift_convergence}}
\label{app:prop:pre:drift_convergence}

\begin{proof}
Fix $\chi\in(0,\infty)$ and $\alpha\in[0,1]$, and set $y\coloneqq y(\alpha)=\lfloor \alpha n\rfloor$.
Condition on the event $Y_t=y$. Recall that $K\sim\Bin(n,\chi/n)$ denotes the number of flipped bits.
We define a pair $(K,I_\Hyp)$ as follows: first draw $K$, and conditional on $K=k$ draw
$I_\Hyp\sim\Hyp(n,y,k)$, i.e.\ $I_\Hyp$ is the number of flipped zero-bits when $Y_t=y$.
Likewise, let $(K,I_\Bin)$ be such that conditional on $K=k$ we have $I_\Bin\sim\Bin(k,\alpha)$.

Moreover, by \Cref{prop:pre:explicit_drift} we have
\begin{equation*}
    \Delta(y)=\E{(2I_\Hyp-K)\ind_A\mid Y_t=y}.
\end{equation*}
Recall from \Cref{eq:pre:conditioned_explicit_drift} that, conditional on $Y_t=y$, we have
\begin{equation*}
    \Delta(y) = \sum_{k = 0}^n\Prob{K=k}\sum_{i = 0}^{\Min{k}{y}} (2i-k)\,p_A(k,i)\,\Prob{I = i\mid(Y_t=y,K=k)}
\end{equation*}
Since $\Prob{I=i\mid (Y_t=y,K=k)}=0$ for all $i>\Min{k}{y}$, we may equivalently write
\begin{equation}
\label{eq:app:delta_double_sum_hyp}
    \Delta(y) = \sum_{k = 0}^n \Prob{K = k}\sum_{i = 0}^k (2i - k)\,p_A(k,i)\,\Prob{I = i\mid (Y_t = y,K=k)}
\end{equation}
We start by approximating $\Hyp(n,y,k)$ by $\Bin(k,\alpha)$. For $k\ge 1$ and $i\in\{0,1,\dots,k\}$ define
\begin{equation*}
    h_k(i)\coloneqq (2i-k)\,p_A(k,i).
\end{equation*}
Then $\abs{h_k(i)/k}\le 1$ for all $k,i$, since $\abs{2i-k}\le k$ and $p_A(k,i)\in[0,1]$.

We first bound the total variation distance between the two laws of $I$ for fixed $k$.
Couple $k$ draws without replacement from $\{1,\dots,n\}$ (yielding $\Hyp(n,y,k)$) with $k$ i.i.d.\ uniform draws from $\{1,\dots,n\}$ (yielding $\Bin(k,y/n)$).
If no collision occurs among the i.i.d.\ draws, then both sampling procedures select the same set of bits, hence the two counts coincide.
By \cite[Proposition~4.7]{levinMarkovChainsMixing} and a union bound over collisions,
\begin{equation}
\label{eq:app:tv_hyp_bin_yn}
    \totVar{\Hyp(n,y,k)-\Bin(k,y/n)}
    \le \Prob{\{\text{a collision occurs}\}}
    \le \binom{k}{2}\frac{1}{n}.
\end{equation}
Next, we couple $\Bin(k,y/n)$ and $\Bin(k,\alpha)$ by coupling the $k$ underlying Bernoulli trials one-by-one.
Since $\totVar{\Ber(p)-\Ber(q)}=\abs{p-q}$ and total variation satisfies the triangle inequality (\cite[Remark~4.4]{levinMarkovChainsMixing}), this yields the standard product bound
\begin{equation}
\label{eq:app:tv_bin_bin_alpha}
    \totVar{\Bin(k,y/n)-\Bin(k,\alpha)}
    \le k\,\abs{y/n-\alpha}
    \le \frac{k}{n},
\end{equation}
where we used $\abs{\lfloor \alpha n\rfloor/n-\alpha}\le 1/n$.
Combining \Cref{eq:app:tv_hyp_bin_yn} and \Cref{eq:app:tv_bin_bin_alpha} via the triangle inequality gives, for every $1\le k\le n$,
\begin{equation}
\label{eq:app:tv_hyp_bin_alpha}
    \totVar{\Hyp(n,y,k)-\Bin(k,\alpha)}
    \le \binom{k}{2}\frac{1}{n}+\frac{k}{n}.
\end{equation}

We now bound the effect of this approximation on the inner sum.
By \cite[Proposition~4.5]{levinMarkovChainsMixing} and $\abs{h_k(i)/k}\le 1$,
\begin{equation*}
    \abs*{\E*{\frac{h_k(I_\Hyp)}{k}}-\E*{\frac{h_k(I_\Bin)}{k}}}\le 2\,\totVar{\Hyp(n,y,k)-\Bin(k,\alpha)}.
\end{equation*}
Multiplying by $k$ and using \Cref{eq:app:tv_hyp_bin_alpha} yields, for $1\le k\le n$,
\begin{equation}
\label{eq:app:inner_error}
    \abs{\E{h_k(I_\Hyp)}-\E{h_k(I_\Bin)}}
    \le 2k\Bigl(\binom{k}{2}\frac{1}{n}+\frac{k}{n}\Bigr)
    = \frac{k^3+k^2}{n}.
\end{equation}
For $k=0$ both sides equal $0$, so \Cref{eq:app:inner_error} holds for all $0\le k\le n$.

Define the ``intermediate'' drift $D_\chi^{(n)}$ where only the law of $I$ is approximated by a binomial:
\begin{equation}
\label{eq:app:drift_binI_def}
    D^{(n)}_\chi(\alpha)
    \coloneqq \sum_{k=0}^{n}\Prob{K=k}\sum_{i=0}^{k}(2i-k)\,p_A(k,i)\,\Prob{\Bin(k,\alpha)=i}.
\end{equation}

With this coupling, the inner sum in \Cref{eq:app:delta_double_sum_hyp} equals $\E{h_k(I_\Hyp)\mid K=k}$.
Hence, by the law of total expectation,
\begin{equation*}
    \Delta(y)=\E{h_K(I_\Hyp)}.
\end{equation*}
Similarly, by \Cref{eq:app:drift_binI_def} and the definition of $(K,I_\Bin)$ we have
\begin{equation*}
    D^{(n)}_\chi(\alpha)=\E{h_K(I_\Bin)}.
\end{equation*}
Averaging \Cref{eq:app:inner_error} over $K\sim\Bin(n,\chi/n)$ gives
\begin{equation}
\label{eq:app:error_replace_I}
    \abs{\Delta(y)-D^{(n)}_\chi(\alpha)}
    \le \frac{1}{n}\E{K^3+K^2}.
\end{equation}
Using falling factorials $(x)_r\coloneqq x(x-1)\cdots(x-r+1)$, we have the identities
\begin{equation*}
    K^2=(K)_2+K
    \quad\text{and}\quad
    K^3=(K)_3+3(K)_2+K.
\end{equation*}
Hence $\E{K^3+K^2}=\E{(K)_3}+4\E{(K)_2}+2\E{K}$.
Moreover,
\begin{equation*}
    \E{(K)_3}=(n)_3\Bigl(\frac{\chi}{n}\Bigr)^3 \le \chi^3,\quad
    \E{(K)_2}=(n)_2\Bigl(\frac{\chi}{n}\Bigr)^2 \le \chi^2,\quad
    \E{K}=\chi.
\end{equation*}
Inserting into \Cref{eq:app:error_replace_I} yields
\begin{equation}
\label{eq:app:error_replace_I_final}
    \abs{\Delta(y)-D^{(n)}_\chi(\alpha)}
    \le \frac{\chi^3+4\chi^2+2\chi}{n}.
\end{equation}

We now approximate $K\sim\Bin(n,\chi/n)$ by $\Poi(\chi)$. For $k\ge 1$ define
\begin{equation*}
    g_\alpha(k)\coloneqq \frac{1}{k}\sum_{i=0}^{k}(2i-k)\,p_A(k,i)\,\Prob{\Bin(k,\alpha)=i},
    \quad g_\alpha(0)\coloneqq 0.
\end{equation*}
Then $\abs{g_\alpha(k)}\le 1$ for all $k$, since $\abs{2i-k}\le k$ and $p_A(k,i)\le 1$.
By construction,
\begin{equation*}
    D^{(n)}_\chi(\alpha)=\E{K\,g_\alpha(K)} \quad\text{for } K\sim\Bin(n,\chi/n),
\end{equation*}
and
\begin{equation*}
    D_\chi(\alpha)=\E{K\,g_\alpha(K)} \quad\text{for } K\sim\Poi(\chi),
\end{equation*}
where $D_\chi(\alpha)=D(\chi,\alpha)$ is defined in \Cref{eq:pre:approximate_drift}.

Let $K_n\sim\Bin(n,\chi/n)$ and $K_\Poi\sim\Poi(\chi)$. Since $\E{K_n}=\E{K_\Poi}=\chi>0$, we may size-bias both distributions.
By \cite[Equation~(9)]{arratiaSizeBiasOne2018}, for any bounded measurable $g$ and any non-negative $K$ with mean $\chi>0$,
\begin{equation*}
    \E{K\,g(K)}=\chi\,\E{g(K^*)}.
\end{equation*}
Moreover, by \cite[Equation~(30)]{arratiaSizeBiasOne2018} we have the explicit size-biased laws
\begin{equation*}
    K_n^*\overset{d}{=}1+\Bin(n-1,\chi/n)
    \quad\text{and}\quad
    K_\Poi^*\overset{d}{=}1+\Poi(\chi).
\end{equation*}
Applying these identities with $g=g_\alpha$ gives
\begin{align*}
    \abs{D^{(n)}_\chi(\alpha)-D_\chi(\alpha)}
    &= \abs{\E{K_n\,g_\alpha(K_n)}-\E{K_\Poi\,g_\alpha(K_\Poi)}} \\
    &= \chi\,\abs{\E{g_\alpha(1+K_{n-1})}-\E{g_\alpha(1+K_\Poi)}},
\end{align*}
where $K_{n-1}\sim\Bin(n-1,\chi/n)$.
By \cite[Proposition~4.5]{levinMarkovChainsMixing} and $\abs{g_\alpha(k)}\le 1$ we obtain
\begin{equation}
\label{eq:app:error_replace_K_tv}
    \abs{D^{(n)}_\chi(\alpha)-D_\chi(\alpha)}
    \le 2\chi\,\totVar{\Bin(n-1,\chi/n)-\Poi(\chi)}.
\end{equation}

To bound this total variation distance, we use the triangle inequality (\cite[Remark~4.4]{levinMarkovChainsMixing}).
First, couple $\Bin(n-1,\chi/n)$ and $\Bin(n,\chi/n)$ by writing $K_n=K_{n-1}+X_n$ with $X_n\sim\Ber(\chi/n)$ and independent.
Then \cite[Proposition~4.7]{levinMarkovChainsMixing} yields
\begin{equation*}
    \totVar{\Bin(n-1,\chi/n)-\Bin(n,\chi/n)}
    \le \Prob{K_{n-1}\neq K_n}
    = \Prob{X_n=1}
    = \frac{\chi}{n}.
\end{equation*}
Second, by the Poisson approximation bound in \cite[Theorem~4.6]{rossFundamentalsSteinsMethod2011},
\begin{equation*}
    \totVar{\Bin(n,\chi/n)-\Poi(\chi)} \le \frac{\chi^2}{n}.
\end{equation*}
Combining, we get
\begin{equation*}
    \totVar{\Bin(n-1,\chi/n)-\Poi(\chi)} \le \frac{\chi}{n}+\frac{\chi^2}{n}.
\end{equation*}
Inserting into \Cref{eq:app:error_replace_K_tv} yields
\begin{equation}
\label{eq:app:error_replace_K_final}
    \abs{D^{(n)}_\chi(\alpha)-D_\chi(\alpha)}
    \le \frac{2\chi^3+2\chi^2}{n}.
\end{equation}

Finally, we combine the two approximation errors. By the triangle inequality,
\begin{equation*}
    \abs{\Delta(y)-D_\chi(\alpha)}
    \le \abs{\Delta(y)-D^{(n)}_\chi(\alpha)} + \abs{D^{(n)}_\chi(\alpha)-D_\chi(\alpha)}.
\end{equation*}
Combining \Cref{eq:app:error_replace_I_final} and \Cref{eq:app:error_replace_K_final} gives
\begin{equation*}
    \abs{\Delta(\lfloor \alpha n\rfloor)-D_\chi(\alpha)}
    \le \frac{3\chi^3+6\chi^2+2\chi}{n}.
\end{equation*}
The bound is uniform in $\alpha\in[0,1]$ since the estimates above depend on $\alpha$ only through
$\abs{\lfloor \alpha n\rfloor/n-\alpha}\le 1/n$.
In particular, for every fixed $\alpha\in[0,1]$ we have $\Delta(\lfloor \alpha n\rfloor)\to D_\chi(\alpha)$ as $n\to\infty$.
\end{proof}

\subsection{Proof of \Cref{prop:pre:right_derivative}}
\label{app:prop:pre:right_derivative}

\begin{proof}
Recall that $D_\chi(\alpha)=D(\chi,\alpha)$ is defined by conditioning on $K\sim\Poi(\chi)$ and
$I\mid(K=k)\sim\Bin(k,\alpha)$, hence
\begin{equation}
\label{eq:app:Dchi_double_sum}
    D_\chi(\alpha)
    = \sum_{k=0}^{\infty} e^{-\chi}\frac{\chi^k}{k!}\sum_{i=0}^{k} (2i-k)\,p_A(k,i)\binom{k}{i}\alpha^i(1-\alpha)^{k-i}.
\end{equation}
For fixed $k$, let $I\sim\Bin(k,\alpha)$ and write the inner sum as
\begin{equation*}
    S_k(\alpha)\coloneqq \sum_{i=0}^{k} (2i-k)\,p_A(k,i)\binom{k}{i}\alpha^i(1-\alpha)^{k-i}
    = \E{(2I-k)\,p_A(k,I)}.
\end{equation*}

We expand $S_k(\alpha)$ at $\alpha=0$ and bound the remainder uniformly in $\alpha\in[0,1]$.
If $I=0$, then all flipped bits are one-bits, so the offspring cannot be accepted because all weights are non-negative and hence
$p_A(k,0)=0$ for all $k\ge 1$. Moreover, for $k=0$ the factor $2i-k$ is $0$. Thus the $i=0$ term never contributes.

For the $i=1$ term we have
\begin{equation*}
    (2-k)\,p_A(k,1)\binom{k}{1}\alpha(1-\alpha)^{k-1}
    = k(2-k)\,p_A(k,1)\,\alpha + R_{k,1}(\alpha),
\end{equation*}
where
\begin{equation*}
    R_{k,1}(\alpha)
    = k(2-k)\,p_A(k,1)\,\alpha\bigl((1-\alpha)^{k-1}-1\bigr).
\end{equation*}
Since $p_A(k,1)\in[0,1]$ and $1-(1-\alpha)^{k-1}\le (k-1)\alpha$ for $\alpha\in[0,1]$, we obtain
\begin{equation}
\label{eq:app:i_eq_one_bound}
    \abs{R_{k,1}(\alpha)}
    \le k\abs{2-k}\,\alpha\bigl(1-(1-\alpha)^{k-1}\bigr)
    \le k^2\alpha\,(k-1)\alpha
    \le k^3\alpha^2.
\end{equation}

For the contribution of indices $i\ge 2$, using $\abs{2i-k}\le k$ and $p_A(k,i)\in[0,1]$ yields
\begin{equation*}
    \abs*{\sum_{i=2}^{k} (2i-k)\,p_A(k,i)\binom{k}{i}\alpha^i(1-\alpha)^{k-i}}
    \le k\,\Prob{I\ge 2}.
\end{equation*}
To bound this tail probability, note that $I\ge 2$ implies that there exists a pair of distinct trials that both succeed.
By a union bound over the $\binom{k}{2}$ pairs,
\begin{equation}
\label{eq:app:i_ge_two_bound}
    \Prob{\Bin(k,\alpha)\ge 2}
    \le \binom{k}{2}\alpha^2
    \le \frac{k^2}{2}\alpha^2,
\end{equation}
and therefore the $i\ge 2$ contribution is bounded (in absolute value) by $\frac{1}{2}k^3\alpha^2$.

Combining the \Cref{eq:app:i_eq_one_bound} and \Cref{eq:app:i_ge_two_bound} we conclude that for all $k\ge 0$ and all $\alpha\in[0,1]$,
\begin{equation}
\label{eq:app:inner_expansion_fixed_k_new}
    S_k(\alpha)
    = k(2-k)\,p_A(k,1)\,\alpha + R_k(\alpha),
    \quad
    \abs{R_k(\alpha)}\le \frac{3}{2}\,k^3\alpha^2.
\end{equation}

Insert \Cref{eq:app:inner_expansion_fixed_k_new} into \Cref{eq:app:Dchi_double_sum}. Writing $K\sim\Poi(\chi)$ and observing that the term for $k=i=0$ vanishes due to the factor $2i-k=0$, we obtain
\begin{equation*}
    D_\chi(\alpha)
    = \alpha\sum_{k=1}^{\infty} e^{-\chi}\frac{\chi^k}{k!}\,k(2-k)\,p_A(k,1)
      +\sum_{k=1}^{\infty} e^{-\chi}\frac{\chi^k}{k!}\,R_k(\alpha).
\end{equation*}
Using $\abs{R_k(\alpha)}\le \frac{3}{2}k^3\alpha^2$ and the finiteness of the third moment of a Poisson random variable,
\begin{equation*}
    \abs*{\sum_{k=0}^{\infty} e^{-\chi}\frac{\chi^k}{k!}\,R_k(\alpha)}
    \le \frac{3}{2}\alpha^2 \sum_{k=0}^{\infty} e^{-\chi}\frac{\chi^k}{k!}\,k^3
    = \frac{3}{2}\alpha^2\,\E{K^3}
    = \Oh{\alpha^2},
\end{equation*}
where the hidden constant depends only on $\chi$ via $\E{K^3}$.
Hence
\begin{equation}
\label{eq:app:Dchi_expansion}
    D_\chi(\alpha)
    = \alpha\sum_{k=1}^{\infty} e^{-\chi}\frac{\chi^k}{k!}\,k(2-k)\,p_A(k,1) + \Oh{\alpha^2}.
\end{equation}

Finally, $D_\chi(0)=0$, so dividing \Cref{eq:app:Dchi_expansion} by $\alpha$ yields
\begin{equation*}
    \frac{D_\chi(\alpha)-D_\chi(0)}{\alpha}
    = \sum_{k=1}^{\infty} e^{-\chi}\frac{\chi^k}{k!}\,k(2-k)\,p_A(k,1) + \Oh{\alpha}.
\end{equation*}
Letting $\alpha\to 0^+$ gives the claimed formula for $\partial_+ D_\chi(0)$.
\end{proof}

\subsection{Proof of \Cref{lem:pre:constant_drift}}
\label{app:lem:pre:constant_drift}

\begin{proof}
Write $p\coloneqq\chi/n$. Fix $n$ sufficiently large such that $n\ge n_0$ and $p\le 1/2$. In particular,
\begin{equation}
\label{eq:endgame:qn_lower}
    (1-p)^{n-1}\ge e^{-2p(n-1)}\ge e^{-2\chi}.
\end{equation}
We prove (1) and (2) for such $n$.

For $y\ge 1$, let $E_y$ be the event that mutation flips exactly one bit and this bit is one of the $y$ zero-bits.
If $Y_t=y$, then
\begin{equation}
\label{eq:endgame:PyEy}
    \Prob{E_y\mid Y_t=y}=y\,p(1-p)^{n-1}.
\end{equation}
On $E_y$ the offspring flips one $0\to 1$ bit and no other bit. Since weights are assumed to be positive almost surely, this strictly increases fitness,
so the offspring is accepted and $Y_{t+1}=y-1$.

Consequently, starting from a state $y\le y_0$, if in each accepted generation $t$ until reaching $0$ the corresponding event $E_{Y_t}$ occurs, then $Y_t$ decreases by $1$ each time and the optimum is reached after at most $y$ accepted generations. We refer to this outcome as a lucky strike.

Let $A$ denote the acceptance event and let $I$ be the number of flipped zero-bits.
If $I=0$, then only $1\to 0$ flips occur fitness can only decrease, hence acceptance is impossible.
Therefore $A\subseteq\{I\ge 1\}$ and, for $Y_t=y$,
\begin{equation}
\label{eq:endgame:acceptance_upper_bound}
    \Prob{A\mid Y_t=y}\le \Prob{I\ge 1\mid Y_t=y}\le \E{I\mid Y_t=y}=y\,p.
\end{equation}
Combining \Cref{eq:endgame:PyEy} and \Cref{eq:endgame:acceptance_upper_bound} gives, for all $y\ge 1$,
\begin{equation}
\label{eq:endgame:Ey_given_accept}
    \Prob{E_y\mid A,\,Y_t=y}
    \ge \frac{y\,p(1-p)^{n-1}}{y\,p}
    =(1-p)^{n-1}.
\end{equation}
Now fix any $y\in\{1,\dots,y_0\}$.
Consider the successive accepted generations starting from a generation $t$ when $Y_t=y$, and stop if either the process reaches $0$ or we encounter the first accepted generation $t'$ in which $E_{Y_{t'}}$ does not occur.
At each accepted generation $t$ before reaching $0$, the current value of $Y_t$ lies in $\{1,\dots,y_0\}$, and by
\Cref{eq:endgame:Ey_given_accept} the conditional probability that the accepted generation is of type $E_{Y_t}$ is at least
$(1-p)^{n-1}$. Iterating this bound over at most $y_0$ accepted generations yields that the probability of a lucky strike is at least
\begin{equation}
\label{eq:endgame:lucky_strike_prob}
    \Prob{\{\text{lucky strike}\}\mid Y_t=y}\ge \bigl((1-p)^{n-1}\bigr)^{y_0}\ge e^{-2\chi y_0},
\end{equation}
where the last inequality follows from \Cref{eq:endgame:qn_lower}.

Assume $Y_0\in\{1,\dots,y_0\}$. We proceed as follows.
Whenever the process is in $\{1,\dots,y_0\}$, we start an attempt and watch accepted generations until either a lucky strike occurs
(success) or the first accepted generation $t$ occurs in which $E_{Y_t}$ does not occur (failure).
If an attempt fails and the new state is still in $\{1,\dots,y_0\}$, we start a new attempt immediately.
If instead the attempt fails and the new state is $>y_0$, we wait until the first time the process returns to $\{1,\dots,y_0\}$ and then
start the next attempt there.

By \Cref{eq:endgame:lucky_strike_prob}, each time we start an attempt from a state in $\{1,\dots,y_0\}$, the conditional probability of
success is at least $e^{-2\chi y_0}=:p_{\mathrm{ls}}>0$.
Let $N$ be the number of attempts until the first success. Then for all $m\ge 0$,
\begin{equation*}
    \Prob{N>m}\le (1-p_{\mathrm{ls}})^m,
\end{equation*}
since each attempt succeeds with conditional probability at least $p_{\mathrm{ls}}$, and hence
\begin{equation}
\label{eq:endgame:EN}
    \E{N}=\sum_{m\ge 0}\Prob{N>m}\le \frac{1}{p_{\mathrm{ls}}}\le\Oh{1}.
\end{equation}

For $Y_t\ge 1$, an accepted generation occurs whenever $E_{Y_t}$ occurs, hence
\begin{equation*}
    \Prob{A\mid Y_t\ge 1}\ge \Prob{E_{Y_t}\mid Y_t\ge 1}\ge p(1-p)^{n-1}.
\end{equation*}
Therefore the expected waiting time (in generations) until the next accepted generation is at most
\begin{equation*}
    \frac{1}{p(1-p)^{n-1}}=\frac{n}{\chi}\,\frac{1}{(1-p)^{n-1}}\le\Oh{n},
\end{equation*}
using \Cref{eq:endgame:qn_lower}. Since an attempt involves at most $y_0$ accepted generations before it either succeeds or fails, the
expected number of generations spent in $\{1,\dots,y_0\}$ during an attempt is $\Oh{n}$.

If an attempt fails in generation $t$ by leaving $\{1,\dots,y_0\}$, let $Z$ be the value of $Y_{t + 1}$ (i.e. immediately after an attempt fails).
Let $K\sim\Bin(n,p)$ be the mutation size in that generation, so $\E{K}=\chi$ and
$\E{K^2}=\chi(1-p)+\chi^2\le \chi+\chi^2$.
Fix $y\in\{1,\dots,y_0\}$. Conditional on $K=k$, acceptance implies that at least one of the $k$ flipped positions is a zero-bit, hence
\begin{equation*}
    \Prob{A\mid K=k,\,Y_t=y}\le \Prob{I\ge 1\mid K=k,\,Y_t=y}\le k\,\frac{y}{n}.
\end{equation*}
Thus,
\begin{equation*}
    \E{K\ind_A\mid Y_t=y}\le \sum_{k=0}^n k\,\Prob{K=k}\,k\,\frac{y}{n}=\frac{y}{n}\E{K^2}.
\end{equation*}
On the other hand, $E_y\subseteq A$ implies
$\Prob{A\mid Y_t=y}\ge \Prob{E_y\mid Y_t=y}=\frac{y}{n}\chi(1-p)^{n-1}$.
Using $\E{K\mid A,\,Y_t=y}=\E{K\ind_A\mid Y_t=y}/\Prob{A\mid Y_t=y}$, we obtain
\begin{equation}
\label{eq:endgame:EK_given_accept}
    \E{K\mid A,\,Y_t=y}
    \le \frac{\E{K^2}}{\chi(1-p)^{n-1}}
    \le \frac{\chi+\chi^2}{\chi(1-p)^{n-1}}
    \le\Oh{1},
\end{equation}
where the last bound uses \Cref{eq:endgame:qn_lower}.
At generation $t$, where we fail an attempt, we must have $Y_t\le y_0$ and the state can increase by at most $K$ in one step, hence $Z\le y_0+K$ and
therefore $\E{Z}\le\Oh{1}$.

Now apply the multiplicative drift theorem \cite[Theorem~3]{doerrMultiplicativeDriftAnalysis2012} under the assumption $\Delta(y)\ge (c/n)y$ for $y\ge y_0$.
For all $z\ge y_0$ this yields an expected return time to $\{0,1,\dots,y_0\}$ of at most
\begin{equation*}
    \frac{n}{c}\Bigl(1+\ln(z/y_0)\Bigr).
\end{equation*}
Conditioning on $Z$ and using Jensen's inequality (since $\ln$ is concave) and $\E{Z}\le\Oh{1}$ we obtain a return time of
\begin{equation*}
    \frac{n}{c}\Bigl(1+\E{\ln(Z/y_0)}\Bigr)
    \le \frac{n}{c}\Bigl(1+\ln(\E{Z}/y_0)\Bigr)
    \le\Oh{n},
\end{equation*}
in expectation and a.a.s.
Hence the expected number of generations per attempt (including a possible excursion above $y_0$ and the return) is $\Oh{n}$.

In summary, starting from any $y\in\{1,\dots,y_0\}$, the runtime is the sum of the durations of $N$ attempts, hence by \Cref{eq:endgame:EN},
\begin{equation*}
    \sup_{1\le y\le y_0}\E{T\mid Y_0=y}\le\Oh{n},
\end{equation*}
which proves (1).
For (2), Markov's inequality yields for fixed $y\in\{1,\dots,y_0\}$,
\begin{equation*}
    \Prob{T>n\log n\mid Y_0=y}\le \frac{\E{T\mid Y_0=y}}{n\log n}\le\Oh*{\frac{1}{\log n}}=\oh{1},
\end{equation*}
which concludes the proof.
\end{proof}

\subsection{Proof of \Cref{prop:dbv:drift_closed_form}}
\label{sec:app:dbv:drift_closed_form}

\begin{proof}
Recall that for DBV we have $p_A(k,i)=i/k$ for all $k\ge 1$ and $0\le i\le k$.
By definition of the continuous drift approximation (see \Cref{eq:pre:approximate_drift}),
\begin{equation*}
    D(\chi,\alpha)
    =\sum_{k=0}^\infty e^{-\chi}\frac{\chi^k}{k!}
      \sum_{i=0}^k (2i-k)\,p_A(k,i)\,\Prob{I=i\mid K=k},
\end{equation*}
where $K\sim\Poi(\chi)$ and $I\mid(K=k)\sim\Bin(k,\alpha)$.
For $k=0$, recall that we set $p_A(0,0)=0$ hence the $k=0$ term vanishes. This corresponds to the case the parent and offspring are identical, in which case the drift is trivially zero.

For $k\ge 1$, using $p_A(k,i)=i/k$ we obtain
\begin{equation*}
    \sum_{i=0}^k (2i-k)\,p_A(k,i)\,\Prob{I=i\mid K=k}
    =\E{(2I-k)I/k\mid K=k}.
\end{equation*}
For $I\sim\Bin(k,\alpha)$ we have $\E{I}=k\alpha$ and
\begin{equation*}
    \E{I^2}=k\alpha(1-\alpha)+k^2\alpha^2.
\end{equation*}
Thus, for every $k\ge 1$,
\begin{align*}
    \E{(2I-k)I/k\mid K=k}
    &=\frac{1}{k}\Bigl(2\E{I^2\mid K=k}-k\E{I\mid K=k}\Bigr) \\
    &=\frac{1}{k}\Bigl(2\bigl(k\alpha(1-\alpha)+k^2\alpha^2\bigr)-k^2\alpha\Bigr) \\
    &=2\alpha(1-\alpha)+k(2\alpha^2-\alpha).
\end{align*}

Summing over $k\ge 1$ yields
\begin{align*}
    D(\chi,\alpha)
    &=2\alpha(1-\alpha)\sum_{k=1}^\infty e^{-\chi}\frac{\chi^k}{k!}
      +(2\alpha^2-\alpha)\sum_{k=1}^\infty k\,e^{-\chi}\frac{\chi^k}{k!} \\
    &=2\alpha(1-\alpha)\bigl(1-e^{-\chi}\bigr)+(2\alpha^2-\alpha)\chi,
\end{align*}
since $\sum_{k\ge 1} e^{-\chi}\chi^k/k!=\Prob{\Poi(\chi)\ge 1}=1-e^{-\chi}$ and
$\sum_{k\ge 1} k\,e^{-\chi}\chi^k/k!=\E{\Poi(\chi)}=\chi$.

Finally, expand and regroup:
\begin{align*}
    D(\chi,\alpha)
    &=2\alpha(1-e^{-\chi})-2\alpha^2(1-e^{-\chi})+2\chi\alpha^2-\chi\alpha \\
    &=\alpha\bigl(2-\chi-2e^{-\chi}\bigr)-\alpha^2\bigl(2-2\chi-2e^{-\chi}\bigr) \\
    &=\alpha\Bigl(\bigl(2-\chi-2e^{-\chi}\bigr)-\alpha\bigl(2-2\chi-2e^{-\chi}\bigr)\Bigr),
\end{align*}
which is \Cref{eq:dbv:drift_closed_form}.
\end{proof}

\subsection{Proof of \Cref{prop:dbv:discrete_drift_clean}}
\label{sec:app:dbv:discrete_drift_clean}

\begin{proof}
Set $p\coloneqq \chi/n$ and let $K_n\sim\Bin(n,p)$ be the number of flipped bits in one mutation step.
Recall $\alpha\coloneqq y/n$ and $r_n(\chi)\coloneqq \Prob{K_n=0}=(1-p)^n=(1-\chi/n)^n$.

If $K_n=0$, then the offspring equals the parent and the drift is trivially zero.

Assume now that $K_n=k\ge 1$.
Conditional on $Y_t=y$ and $K_n=k$, the number $I$ of flipped zero-bits satisfies $I\sim\Hyp(n,y,k)$.
For DBV, conditional on $(K_n=k,I=i)$, the offspring is accepted with probability $i/k$ (\Cref{prop:dbv:acceptance}).
Hence
\begin{equation*}
    \E{Y_t-Y_{t+1}\mid Y_t=y,\,K_n=k}
    = \E{(2I-k)\,I/k}
    = \frac{1}{k}\Bigl(2\E{I^2}-k\E{I}\Bigr).
\end{equation*}
For $I\sim\Hyp(n,y,k)$ we have $\E{I}=k\alpha$ and
\begin{equation*}
    \Var{I}=k\alpha(1-\alpha)\frac{n-k}{n-1},
    \quad
    \E{I^2}=\Var{I}+(\E{I})^2.
\end{equation*}
Substituting yields, for $k\ge 1$,
\begin{equation*}
    \E{Y_t-Y_{t+1}\mid Y_t=y,\,K_n=k}
    = \frac{2}{n-1}\,\alpha(1-\alpha)(n-k)
    + k\,\alpha(2\alpha-1).
\end{equation*}

Taking expectations over $K_n$ and using $\E{K_n}=\chi$, we obtain
\begin{equation}
\label{eq:dbv:disc_intermediate}
    \Delta(y)
    = \frac{2}{n-1}\,\alpha(1-\alpha)\,\E{(n-K_n)\ind_{\{K_n\ge 1\}}}
    + \alpha(2\alpha-1)\,\E{K_n}.
\end{equation}
Finally,
\begin{equation*}
    \E{(n-K_n)\ind_{\{K_n\ge 1\}}}
    = n\Prob{K_n\ge 1}-\E{K_n}
    = n(1-r_n(\chi))-\chi,
\end{equation*}
so \Cref{eq:dbv:disc_intermediate} becomes
\begin{equation}
\label{eq:dbv:disc_almost}
    \Delta(y)
    = \frac{2}{n-1}\,\alpha(1-\alpha)\bigl(n(1-r_n(\chi))-\chi\bigr)
    + \chi\,\alpha(2\alpha-1).
\end{equation}

It remains to rewrite \Cref{eq:dbv:disc_almost} into the factorised form \Cref{eq:dbv:discrete_drift_clean}.
Expanding \Cref{eq:dbv:disc_almost} and regrouping the coefficients of $\alpha$ and $\alpha^2$ gives
\begin{equation*}
    \Delta(y)
    = \alpha\Bigl(\frac{2\bigl(n(1-r_n(\chi))-\chi\bigr)}{n-1}-\chi\Bigr)
    + \alpha^2\Bigl(2\chi-\frac{2\bigl(n(1-r_n(\chi))-\chi\bigr)}{n-1}\Bigr).
\end{equation*}
After rewriting and factoring out $\frac{n}{n-1}\alpha$ we get 
\begin{equation*}
    \Delta(y)
    = \frac{n}{n-1}\,\alpha\Bigl(\bigl(2-\chi-2r_n(\chi)-\tfrac{\chi}{n}\bigr)
    -\alpha\bigl(2-2\chi-2r_n(\chi)\bigr)\Bigr),
\end{equation*}
which is exactly \Cref{eq:dbv:discrete_drift_clean}.
\end{proof}

\subsection{Proof of \Cref{prop:unif:irwin_hall}}
\label{app:prop:unif:irwin_hall}

\begin{proof}
Fix integers $k\ge 0$ and $0\le i\le k$ and condition on the event $(K=k,I=i)$.
In the selection step, the offspring is accepted iff the total weight gained from the $i$ flipped zero-bits exceeds the total weight lost from the $k-i$ flipped one-bits.
Since the weights are i.i.d.\ $\Unif(0,1)$, we may represent the $k$ flipped weights by i.i.d.\ random variables
$U_1,\dots,U_k\sim\Unif(0,1)$ such that
\begin{equation*}
    p_A(k,i)
    = \Prob{ \sum_{j=1}^{i} U_j\ge\sum_{j=i+1}^{k} U_j }.
\end{equation*}

For $1\le j\le k-i$ define $V_j\coloneqq 1-U_{i+j}$.
Then $V_1,\dots,V_{k-i}$ are i.i.d.\ $\Unif(0,1)$ and independent of $U_1,\dots,U_i$.
Substituting $U_{i+j}=1-V_j$ yields
\begin{equation*}
    p_A(k,i)
    = \Prob{ \sum_{j=1}^{i} U_j + \sum_{j=1}^{k-i} V_j \ge k-i }.
\end{equation*}
The $k$ summands on the left-hand side are i.i.d.\ $\Unif(0,1)$, hence their sum has the Irwin--Hall distribution with parameter~$k$.
Let $S_k\coloneqq \sum_{j=1}^k W_j$ with $W_1,\dots,W_k\overset{\mathrm{i.i.d.}}{\sim}\Unif(0,1)$, and let $F_k(x)\coloneqq \Prob{S_k\le x}$.
Then
\begin{equation*}
    p_A(k,i)=\Prob{S_k\ge k-i}=1-F_k(k-i).
\end{equation*}
By symmetry $S_k\overset{d}{=}k-S_k$ and continuity of the distribution, we have $F_k(x)=1-F_k(k-x)$ for all $x\in[0,k]$
(also noted in \Cref{sec:preliminaries}).
Thus $1-F_k(k-i)=F_k(i)$ and therefore $p_A(k,i)=F_k(i)$.
Evaluating the Irwin--Hall cdf at the integer $x=i$ yields the claimed closed form.
\end{proof}

\subsection{Proof of \Cref{lem:unif:right_derivative}}
\label{app:lem:unif:right_derivative}

\begin{proof}
By \Cref{prop:pre:right_derivative} we have, for all $\chi>0$,
\begin{equation*}
    \partial_+ D_\chi(0)
    = \sum_{k=0}^\infty e^{-\chi}\frac{\chi^k}{k!}\,k(2-k)\,p_A(k,1).
\end{equation*}
In the uniform weights model, \Cref{prop:unif:irwin_hall} yields $p_A(k,1)=1/k!$ for all $k\ge 1$
(and the term $k=0$ vanishes due to the factor $k(2-k)$). Hence
\begin{equation}
\label{eq:unif:g_series}
    g(\chi)
    = \sum_{k=0}^\infty e^{-\chi}\frac{\chi^k}{(k!)^2}\,k(2-k).
\end{equation}

We use modified Bessel functions of the first kind,
\begin{equation*}
    I_a(x)\coloneqq \sum_{m=0}^\infty \frac{1}{m!\,\Gamma(m+a+1)}\Bigl(\frac{x}{2}\Bigr)^{2m+a},
    \quad a\in\Z.
\end{equation*}
Set $x\coloneqq 2\sqrt{\chi}$. One checks (by comparing coefficients) that
\begin{equation}
\label{eq:unif:bessel_identity}
    \sum_{k=0}^\infty \frac{k(2-k)}{(k!)^2}\chi^k
    = xI_1(x) - \frac{x^2}{4}I_0(x).
\end{equation}
Since $I_0(x)>0$ for all $x>0$, we can factor
\begin{equation}
\label{eq:unif:root_equivalence}
    xI_1(x) - \frac{x^2}{4}I_0(x)
    = \frac{x^2}{4}I_0(x)\Bigl(\frac{4I_1(x)}{xI_0(x)}-1\Bigr).
\end{equation}
Consequently, for $\chi>0$ we have $g(\chi)=0$ if and only if
\begin{equation}\label{eq:unif:root_equation}
    \frac{4I_1(x)}{xI_0(x)}=1,
    \quad x=2\sqrt{\chi},
\end{equation}
since the factor $\frac{x^2}{4}I_0(x)$ in \Cref{eq:unif:root_equivalence} is strictly positive.

We now address the root and its uniqueness. Let $v_0(x)\coloneqq x\,\frac{I_0(x)}{I_1(x)}$ as in \cite[Equation~3.71]{bariczGeneralizedBesselFunctions2010}.
By \cite[Theorem~3.24]{bariczGeneralizedBesselFunctions2010}, the function $v_0\in(0,\infty)$ is strictly increasing on $(0,\infty)$.
Therefore the function
\begin{equation*}
    h(x)\coloneqq \frac{4I_1(x)}{xI_0(x)}=\frac{4}{v_0(x)}
\end{equation*}
is strictly decreasing on $(0,\infty)$.

From the series expansions of $I_0$ and $I_1$ we have
\begin{equation*}
    I_0(x)=1+\Oh{x^2}
    \quad\text{and}\quad
    I_1(x)=\frac{x}{2}+\Oh{x^3},
\end{equation*}
and thus
\begin{equation*}
    h(x)=\frac{4I_1(x)}{xI_0(x)} = 2+\Oh{x^2}>1,
\end{equation*}
for sufficiently small $x>0$.
On the other hand, by comparing the integral representations of $I_0$ and $I_1$ \cite[\href{https://dlmf.nist.gov/10.32.E3}{(10.32.3)}]{NIST:DLMF} one obtains
\begin{equation*}
    \frac{4I_1(x)}{xI_0(x)} < \frac{4}{x} < 1 \quad\text{for all } x>4.
\end{equation*}
Since $h$ is continuous and strictly decreasing on $(0,\infty)$, \Cref{eq:unif:root_equation} has a unique solution
$x_0\in(0,5)$, and hence $\chi_\mathrm{unif}\coloneqq x_0^2/4$ is the unique strictly positive root of $g$.

By \Cref{eq:unif:root_equivalence}, the sign of $g(\chi)$ for $\chi>0$ agrees with the sign of $h(x)-1$ at $x=2\sqrt{\chi}$,
since the remaining factors are strictly positive.
As $h$ is strictly decreasing and crosses $1$ exactly once at $x_0$, we obtain $h(x)>1$ for $x<x_0$ and $h(x)<1$ for $x>x_0$.
Equivalently, $g(\chi)>0$ for $\chi\in(0,\chi_\mathrm{unif})$ and $g(\chi)<0$ for $\chi\in(\chi_\mathrm{unif},\infty)$.

Numerically, $x_0\approx 3.325\,$ and thus $\chi_\mathrm{unif}=x_0^2/4\approx 2.76531$.
\end{proof}
\end{document}